\documentclass{article} 
\usepackage{iclr2025_conference,times}


\usepackage{amsmath,amsfonts,bm}









\def\eqref#1{equation~\ref{#1}}









\def\1{\bm{1}}











\DeclareMathAlphabet{\mathsfit}{\encodingdefault}{\sfdefault}{m}{sl}
\SetMathAlphabet{\mathsfit}{bold}{\encodingdefault}{\sfdefault}{bx}{n}
















\usepackage{hyperref}
\usepackage{url}
\usepackage{tcolorbox}

\usepackage[draft]{minted}

\makeatletter
\def\PYG@reset{\let\PYG@it=\relax \let\PYG@bf=\relax%
    \let\PYG@ul=\relax \let\PYG@tc=\relax%
    \let\PYG@bc=\relax \let\PYG@ff=\relax}
\def\PYG@tok#1{\csname PYG@tok@#1\endcsname}
\def\PYG@toks#1+{\ifx\relax#1\empty\else%
    \PYG@tok{#1}\expandafter\PYG@toks\fi}
\def\PYG@do#1{\PYG@bc{\PYG@tc{\PYG@ul{%
    \PYG@it{\PYG@bf{\PYG@ff{#1}}}}}}}
\def\PYG#1#2{\PYG@reset\PYG@toks#1+\relax+\PYG@do{#2}}

\@namedef{PYG@tok@w}{\def\PYG@tc##1{\textcolor[rgb]{0.73,0.73,0.73}{##1}}}
\@namedef{PYG@tok@c}{\let\PYG@it=\textit\def\PYG@tc##1{\textcolor[rgb]{0.24,0.48,0.48}{##1}}}
\@namedef{PYG@tok@cp}{\def\PYG@tc##1{\textcolor[rgb]{0.61,0.40,0.00}{##1}}}
\@namedef{PYG@tok@k}{\let\PYG@bf=\textbf\def\PYG@tc##1{\textcolor[rgb]{0.00,0.50,0.00}{##1}}}
\@namedef{PYG@tok@kp}{\def\PYG@tc##1{\textcolor[rgb]{0.00,0.50,0.00}{##1}}}
\@namedef{PYG@tok@kt}{\def\PYG@tc##1{\textcolor[rgb]{0.69,0.00,0.25}{##1}}}
\@namedef{PYG@tok@o}{\def\PYG@tc##1{\textcolor[rgb]{0.40,0.40,0.40}{##1}}}
\@namedef{PYG@tok@ow}{\let\PYG@bf=\textbf\def\PYG@tc##1{\textcolor[rgb]{0.67,0.13,1.00}{##1}}}
\@namedef{PYG@tok@nb}{\def\PYG@tc##1{\textcolor[rgb]{0.00,0.50,0.00}{##1}}}
\@namedef{PYG@tok@nf}{\def\PYG@tc##1{\textcolor[rgb]{0.00,0.00,1.00}{##1}}}
\@namedef{PYG@tok@nc}{\let\PYG@bf=\textbf\def\PYG@tc##1{\textcolor[rgb]{0.00,0.00,1.00}{##1}}}
\@namedef{PYG@tok@nn}{\let\PYG@bf=\textbf\def\PYG@tc##1{\textcolor[rgb]{0.00,0.00,1.00}{##1}}}
\@namedef{PYG@tok@ne}{\let\PYG@bf=\textbf\def\PYG@tc##1{\textcolor[rgb]{0.80,0.25,0.22}{##1}}}
\@namedef{PYG@tok@nv}{\def\PYG@tc##1{\textcolor[rgb]{0.10,0.09,0.49}{##1}}}
\@namedef{PYG@tok@no}{\def\PYG@tc##1{\textcolor[rgb]{0.53,0.00,0.00}{##1}}}
\@namedef{PYG@tok@nl}{\def\PYG@tc##1{\textcolor[rgb]{0.46,0.46,0.00}{##1}}}
\@namedef{PYG@tok@ni}{\let\PYG@bf=\textbf\def\PYG@tc##1{\textcolor[rgb]{0.44,0.44,0.44}{##1}}}
\@namedef{PYG@tok@na}{\def\PYG@tc##1{\textcolor[rgb]{0.41,0.47,0.13}{##1}}}
\@namedef{PYG@tok@nt}{\let\PYG@bf=\textbf\def\PYG@tc##1{\textcolor[rgb]{0.00,0.50,0.00}{##1}}}
\@namedef{PYG@tok@nd}{\def\PYG@tc##1{\textcolor[rgb]{0.67,0.13,1.00}{##1}}}
\@namedef{PYG@tok@s}{\def\PYG@tc##1{\textcolor[rgb]{0.73,0.13,0.13}{##1}}}
\@namedef{PYG@tok@sd}{\let\PYG@it=\textit\def\PYG@tc##1{\textcolor[rgb]{0.73,0.13,0.13}{##1}}}
\@namedef{PYG@tok@si}{\let\PYG@bf=\textbf\def\PYG@tc##1{\textcolor[rgb]{0.64,0.35,0.47}{##1}}}
\@namedef{PYG@tok@se}{\let\PYG@bf=\textbf\def\PYG@tc##1{\textcolor[rgb]{0.67,0.36,0.12}{##1}}}
\@namedef{PYG@tok@sr}{\def\PYG@tc##1{\textcolor[rgb]{0.64,0.35,0.47}{##1}}}
\@namedef{PYG@tok@ss}{\def\PYG@tc##1{\textcolor[rgb]{0.10,0.09,0.49}{##1}}}
\@namedef{PYG@tok@sx}{\def\PYG@tc##1{\textcolor[rgb]{0.00,0.50,0.00}{##1}}}
\@namedef{PYG@tok@m}{\def\PYG@tc##1{\textcolor[rgb]{0.40,0.40,0.40}{##1}}}
\@namedef{PYG@tok@gh}{\let\PYG@bf=\textbf\def\PYG@tc##1{\textcolor[rgb]{0.00,0.00,0.50}{##1}}}
\@namedef{PYG@tok@gu}{\let\PYG@bf=\textbf\def\PYG@tc##1{\textcolor[rgb]{0.50,0.00,0.50}{##1}}}
\@namedef{PYG@tok@gd}{\def\PYG@tc##1{\textcolor[rgb]{0.63,0.00,0.00}{##1}}}
\@namedef{PYG@tok@gi}{\def\PYG@tc##1{\textcolor[rgb]{0.00,0.52,0.00}{##1}}}
\@namedef{PYG@tok@gr}{\def\PYG@tc##1{\textcolor[rgb]{0.89,0.00,0.00}{##1}}}
\@namedef{PYG@tok@ge}{\let\PYG@it=\textit}
\@namedef{PYG@tok@gs}{\let\PYG@bf=\textbf}
\@namedef{PYG@tok@gp}{\let\PYG@bf=\textbf\def\PYG@tc##1{\textcolor[rgb]{0.00,0.00,0.50}{##1}}}
\@namedef{PYG@tok@go}{\def\PYG@tc##1{\textcolor[rgb]{0.44,0.44,0.44}{##1}}}
\@namedef{PYG@tok@gt}{\def\PYG@tc##1{\textcolor[rgb]{0.00,0.27,0.87}{##1}}}
\@namedef{PYG@tok@err}{\def\PYG@bc##1{{\setlength{\fboxsep}{\string -\fboxrule}\fcolorbox[rgb]{1.00,0.00,0.00}{1,1,1}{\strut ##1}}}}
\@namedef{PYG@tok@kc}{\let\PYG@bf=\textbf\def\PYG@tc##1{\textcolor[rgb]{0.00,0.50,0.00}{##1}}}
\@namedef{PYG@tok@kd}{\let\PYG@bf=\textbf\def\PYG@tc##1{\textcolor[rgb]{0.00,0.50,0.00}{##1}}}
\@namedef{PYG@tok@kn}{\let\PYG@bf=\textbf\def\PYG@tc##1{\textcolor[rgb]{0.00,0.50,0.00}{##1}}}
\@namedef{PYG@tok@kr}{\let\PYG@bf=\textbf\def\PYG@tc##1{\textcolor[rgb]{0.00,0.50,0.00}{##1}}}
\@namedef{PYG@tok@bp}{\def\PYG@tc##1{\textcolor[rgb]{0.00,0.50,0.00}{##1}}}
\@namedef{PYG@tok@fm}{\def\PYG@tc##1{\textcolor[rgb]{0.00,0.00,1.00}{##1}}}
\@namedef{PYG@tok@vc}{\def\PYG@tc##1{\textcolor[rgb]{0.10,0.09,0.49}{##1}}}
\@namedef{PYG@tok@vg}{\def\PYG@tc##1{\textcolor[rgb]{0.10,0.09,0.49}{##1}}}
\@namedef{PYG@tok@vi}{\def\PYG@tc##1{\textcolor[rgb]{0.10,0.09,0.49}{##1}}}
\@namedef{PYG@tok@vm}{\def\PYG@tc##1{\textcolor[rgb]{0.10,0.09,0.49}{##1}}}
\@namedef{PYG@tok@sa}{\def\PYG@tc##1{\textcolor[rgb]{0.73,0.13,0.13}{##1}}}
\@namedef{PYG@tok@sb}{\def\PYG@tc##1{\textcolor[rgb]{0.73,0.13,0.13}{##1}}}
\@namedef{PYG@tok@sc}{\def\PYG@tc##1{\textcolor[rgb]{0.73,0.13,0.13}{##1}}}
\@namedef{PYG@tok@dl}{\def\PYG@tc##1{\textcolor[rgb]{0.73,0.13,0.13}{##1}}}
\@namedef{PYG@tok@s2}{\def\PYG@tc##1{\textcolor[rgb]{0.73,0.13,0.13}{##1}}}
\@namedef{PYG@tok@sh}{\def\PYG@tc##1{\textcolor[rgb]{0.73,0.13,0.13}{##1}}}
\@namedef{PYG@tok@s1}{\def\PYG@tc##1{\textcolor[rgb]{0.73,0.13,0.13}{##1}}}
\@namedef{PYG@tok@mb}{\def\PYG@tc##1{\textcolor[rgb]{0.40,0.40,0.40}{##1}}}
\@namedef{PYG@tok@mf}{\def\PYG@tc##1{\textcolor[rgb]{0.40,0.40,0.40}{##1}}}
\@namedef{PYG@tok@mh}{\def\PYG@tc##1{\textcolor[rgb]{0.40,0.40,0.40}{##1}}}
\@namedef{PYG@tok@mi}{\def\PYG@tc##1{\textcolor[rgb]{0.40,0.40,0.40}{##1}}}
\@namedef{PYG@tok@il}{\def\PYG@tc##1{\textcolor[rgb]{0.40,0.40,0.40}{##1}}}
\@namedef{PYG@tok@mo}{\def\PYG@tc##1{\textcolor[rgb]{0.40,0.40,0.40}{##1}}}
\@namedef{PYG@tok@ch}{\let\PYG@it=\textit\def\PYG@tc##1{\textcolor[rgb]{0.24,0.48,0.48}{##1}}}
\@namedef{PYG@tok@cm}{\let\PYG@it=\textit\def\PYG@tc##1{\textcolor[rgb]{0.24,0.48,0.48}{##1}}}
\@namedef{PYG@tok@cpf}{\let\PYG@it=\textit\def\PYG@tc##1{\textcolor[rgb]{0.24,0.48,0.48}{##1}}}
\@namedef{PYG@tok@c1}{\let\PYG@it=\textit\def\PYG@tc##1{\textcolor[rgb]{0.24,0.48,0.48}{##1}}}
\@namedef{PYG@tok@cs}{\let\PYG@it=\textit\def\PYG@tc##1{\textcolor[rgb]{0.24,0.48,0.48}{##1}}}


\makeatother

\usepackage[utf8]{inputenc} 
\usepackage[T1]{fontenc}    
\usepackage{hyperref}       
\usepackage{url}            
\usepackage{booktabs}       
\usepackage{amsfonts}       
\usepackage{nicefrac}       
\usepackage{microtype}      
\usepackage{xcolor}         
\usepackage{amsmath}
\usepackage{mathtools}  
\usepackage{xfrac}  
\usepackage{amssymb}
\usepackage{pifont}
\newcommand{\cmark}{\ding{51}}%
\newcommand{\xmark}{\ding{55}}%

\DeclareMathOperator{\softmax}{softmax}
\usepackage{amsthm}
\usepackage{mathabx}
\usepackage{subfig}
\usepackage{fontawesome5}
\usepackage{enumitem}
\definecolor{bluegray}{rgb}{0.4, 0.6, 0.8}
\definecolor{electriclime}{rgb}{0.8, 1.0, 0.0}
\definecolor{malachite}{rgb}{0.04, 0.85, 0.32}
\definecolor{darkred}{rgb}{0.55, 0.0, 0.0}
\definecolor{darkblue}{rgb}{0.0, 0.0, 0.55}
\definecolor{darkgreen}{rgb}{0.0, 0.2, 0.13}
\definecolor{darkorchid}{rgb}{0.6, 0.2, 0.8}
\usepackage[noabbrev,capitalise,nameinlink]{cleveref}
\hypersetup{colorlinks={true},linkcolor={darkred},citecolor=darkblue}
\usepackage{wrapfig}

\newcommand{\xb}{\mathbf{x}}

\newcommand{\qb}{\mathbf{q}}

\newcommand{\kb}{\mathbf{k}}

\newcommand{\ab}{\mathbf{a}}

\newcommand{\psib}{\boldsymbol{\psi}}
\newcommand{\phib}{\boldsymbol{\phi}}

\newcommand{\Rb}{\mathbf{R}}

\newcommand{\Wb}{\mathbf{W}}
\newcommand{\Ib}{\mathbf{I}}
\newcommand{\expec}{\mathop{\mathbb{E}}}

\usepackage{xspace}

\DeclareMathAlphabet{\mathbb}{U}{msb}{m}{n} 
\newcommand{\R}{\mathbb{R}}

\usepackage{fontawesome5}

\newtheorem{theorem}{Theorem}[section]
\newtheorem{lemma}[theorem]{Lemma}
\newtheorem{proposition}[theorem]{Proposition}

\newtheorem{definition}[theorem]{Definition}
\title{Round and Round We Go! \faSync \ What makes Rotary Positional Encodings useful?}


\author{Federico Barbero\thanks{Work performed while the author was at Google DeepMind.} \\
University of Oxford
  \And
Alex Vitvitskyi \\
  Google DeepMind
\And
Christos Perivolaropoulos \\
  Google DeepMind
\AND
\And
Razvan Pascanu \\
  Google DeepMind
\And
Petar Veli\v{c}kovi\'{c} \\
  Google DeepMind
}

%

\linepenalty=1000

\iclrfinalcopy 

\begin{document}

\maketitle

\begin{abstract}
Positional Encodings (PEs) are a critical component of Transformer-based Large Language Models (LLMs), providing the attention mechanism with important sequence-position information. One of the most popular types of encoding used today in LLMs are Rotary Positional Encodings (RoPE), that rotate the queries and keys based on their relative distance. A common belief is that RoPE is useful because it helps to decay token dependency as relative distance increases. In this work, we argue that this is unlikely to be the core reason. We study the internals of a trained Gemma 7B model to understand how RoPE is being used at a mechanical level. We find that Gemma learns to use RoPE to construct robust `positional' attention patterns by exploiting the highest frequencies. We also find that, in general, Gemma greatly prefers to use the lowest frequencies of RoPE, which we suspect are used to carry semantic information. We mathematically prove interesting behaviours of RoPE and conduct experiments to verify our findings, proposing a modification of RoPE that fixes some highlighted issues and improves performance. We believe that this work represents an interesting step in better understanding PEs in LLMs, which we believe holds crucial value for scaling LLMs to large sizes and context lengths.
\end{abstract}
\section{Introduction}
It is common to provide positional information to the attention mechanism in Transformers through the use of absolute positional encodings \citep{vaswani2017attention}, relative positional encodings \citep{su2024roformer}, or by introducing a bias directly to the activations \citep{press2021train}. One of the currently most widely adopted encodings, especially in Large Language Models (LLMs), are Rotary Positional Encodings (RoPE) \citep{su2024roformer}, being used in popular models such as  LLama 3 \citep{dubey2024llama} and Gemma \citep{team2024gemma}.  RoPE acts on the queries and keys by splitting them in $2$-dimensional chunks and rotating each chunk at a different frequency. The method can be implemented efficiently and provides an interesting geometric approach to positional encodings. 

Despite the significant adoption of RoPE, the specific reasons why this method is useful to Transformer models remains poorly understood. One of the main arguments in favour of RoPE made by \cite{su2024roformer} is that the method helps to decay attention coefficients as the relative distance grows. Most such claims, however, rely on the queries and keys being \emph{constant} vectors -- which is uncommon in practice. In fact, in this work we find that there are many situations in which this decay does \emph{not} occur and that this is exploited at times by attention heads in Gemma 7B \citep{team2024gemma}. 

Further, there are open intriguing questions relating to how exactly the different frequencies in RoPE are useful. In the standard parameterisation of RoPE, the fastest frequencies rotate at $1$ radian per token, whilst the slowest are several orders of magnitude slower at $\approx 1/10${\small,}$000$ radians per token. As dot product attention directly depends on the angle between the queries and keys, the highest frequencies are extremely sensitive to small token rearrangements, making them poor carriers of information. Consequently, we find it interesting and important to understand how exactly the different rotation frequencies are used in LLMs.

\paragraph{Contributions.} In this work, we study in-depth empirically and theoretically how Transformers and in particular auto-regressive LLMs benefit from RoPE. We rely on the pretrained and open-source Gemma 7B \citep{team2024gemma} model for our empirical analysis and show consistent results with Llama3.1 8B \citep{dubey2024llama} in the Appendix (Section \ref{app:llama}). We summarise our findings: 

\begin{itemize}
    \item In Section \ref{sec:rope-decay}, we argue against the common claim that RoPE is useful because it encourages the decay of attention coefficients with distance. We provide theoretical and empirical evidence to support our claim.
    \item In Section \ref{sec:rope-frequencies}, we propose a new way to understand the usage of different frequencies in the queries and keys. We find that Gemma 7B  largely prefers to use the low frequencies of RoPE. The first and last layers instead show the most use of the high frequencies.
    \item In Section \ref{sec:high-frequencies}, we show that the highest frequencies in RoPE are cleverly used by Gemma 7B to construct special `positional' attention heads (see Figure \ref{fig:rope_mech}). We mathematically prove the `robustness' of the construction. 
    \item In Section \ref{sec:low-frequencies}, we study how the low frequencies are used. We observe distinct `bands' in the low frequencies of the queries and keys. We conjecture that Gemma 7B is using them as `information channels'. We prove that these channels cannot be robust over long context. 
    \item In Section \ref{sec:low-frequencies-ablation}, we propose a new technique called $p$-RoPE that removes the lowest frequencies of RoPE to create robust semantic channels. We show not only that removing a percentage of the frequencies maintains the performance, but also \emph{improves it} on 2 billion parameter models. We believe that our work explains why increasing the maximum RoPE wavelength helps with long-context, e.g. as shown in Llama 3 \citep{dubey2024llama}. 
        \vspace{-10pt}
\end{itemize}

\begin{figure}
    \centering
    \includegraphics[width=0.9\linewidth]{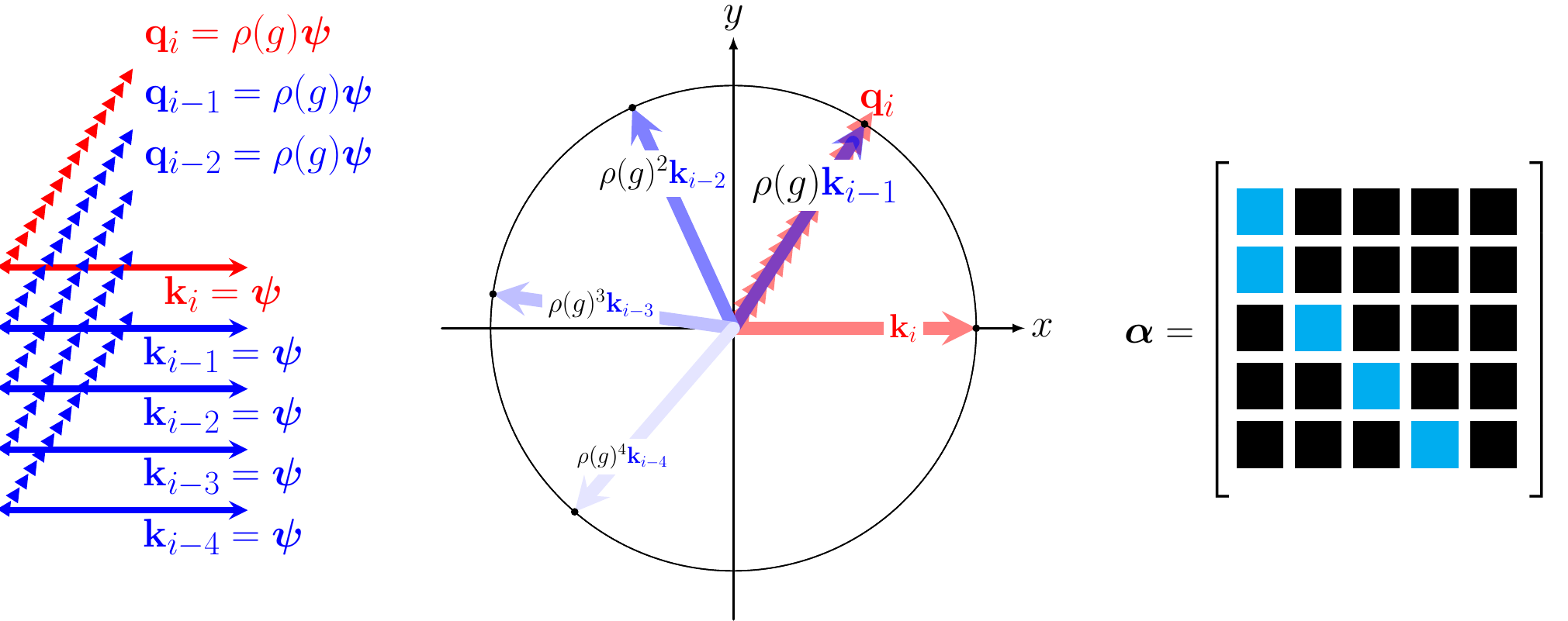}
    \caption{Depiction of our construction which allows Transformers to obtain positional attention heads using RoPE -- zooming in on a single RoPE frequency for clarity. On the \textbf{left} we depict key and query vectors for each position $i$, where keys are all identical, and queries are just a rotated version of the key, in a way that matches one of RoPE's highest frequencies. The \textbf{center} depicts how keys get rotated by RoPE, making the key at $i-1$ perfectly align with the query. Due to the high frequency of the rotation, all other keys will lead to a smaller attention weight. On the \textbf{right} we show the resulting attention weights, resulting in this case in an off-diagonal positional attention. See Section~\ref{sec:high-frequencies} for more details.}
    \label{fig:rope_mech}
    \vspace{-10pt}

\end{figure}

\section{Background}
We denote by $\xb_i \in \R^d$ the $d$-dimensional token embedding of the $i$-th token. Query and key vectors take the form $\qb_i = \Wb_Q \xb_i$ and $\kb_i = \Wb_K \xb_i$ respectively, given query and key matrices $\Wb_Q, \Wb_K \in \R^{d\times d}$. The attention mechanism\footnote{We ignore here the $1/\sqrt{d}$ scaling factor introduced by \citet{vaswani2017attention} for ease of notation.} performs the following computation:
    \vspace{-10pt}

\begin{equation}
    \label{eq:attention}
    \alpha_{i, j} = \frac{\exp\left( \ab_{i,j} \right)}{\sum_{\ell \leq i} \exp\left(\ab_{i,\ell} \right)}, \text{with } \ab_{i, j} = k\left(\qb_i, \kb_j \right)
\end{equation}    \vspace{-10pt}

where $k$ is a kernel function which in our work takes the form of either a simple dot product, i.e. No Positional Encoding (NoPE) \citep{haviv2022transformer,kazemnejad2024impact}, or RoPE. We focus on the case in which the attention mechanism is \emph{causal}, the most common type of attention used in LLMs today. In a causal mechanism, we have that $\alpha_{i, j} = 0$ when $j > i$ and  $\sum_{j: j \leq i} \alpha_{i, j} = 1$. We also note that we assume $\ab_{i, j}$ to be finite, as it is always the case in practice, such that $0 < \alpha_{i, j} < 1$, when $j < i$. We call $\ab_{i,j}$ the `activation' or `logit', while we call $\alpha_{i,j}$ the `attention coefficient' between $i$ and $j$. It is sometimes useful to view the attention coefficients in matrix-form, which in our specific case results in a lower triangular row-stochastic matrix. 

We highlight an important special case for $k$ which we call $k_{\text{NoPE}}$, i.e. no positional encoding: $k_{\text{NoPE}}\left(\qb_i, \kb_j\right) = \qb_i^\top \kb_j,$ where $\qb_i^\top$ denotes the tranpose of $\qb_i$. In other words, in NoPE the kernel function computes simply the dot product, providing no positional information to the Transformer. It has been shown that Transformers can still perform well, especially out of distribution, with NoPE \citep{kazemnejad2024impact}. In particular, \cite{kazemnejad2024impact} prove that the Transformers could in principle recover absolute positional information through the causal mask; however, the proof relies on the universal approximation theorem, which we believe is a practical limitation.
\subsection{Rotary Positional Encodings (RoPE)}
For simplicity of notation in this work we assume that query are key vectors are $d$-dimensional, with $d \geq 2$ being an even number. We decompose queries and keys into $2$-dimensional chunks $\qb_i = \bigoplus_{k = 1\dots d/2} \qb_i^{[k, k+1]} = \bigoplus_{k = 1\dots d/2} \qb_i^{(k)}$, where $\bigoplus$ denotes direct sum (concatenation). In other words, we denote by $\qb_i^{(k)} \in \R^2$ the $k$-th $2$-dimensional chunk of the query vector of the $i$-th token, using analogous notation for the key vectors. 

RoPE considers a sequence of angles $G = \left(g_k = \theta^{-2(k-1)/d}:  k = 1, \dots, d/2\right)$\footnote{We denote the angles $g_k$ instead of $\theta_k$ as we opt for more of a group theoretic perspective of RoPE.}, where $g_1 = 1$ is the fastest rotating component at $1$ radian per token and $g_{d/2} = \theta^{-(d - 2)/d} \approx \theta^{-1}$ the slowest rotating component at approximately $1/\theta$ rotations per token. The parameter $\theta$ is called the base wavelength, which by default is $10${\small,}$000$ \citep{su2024roformer}, although works have explored increasing it to, for instance, $500${\small,}$000$ \citep{xiong2023effective,roziere2023code,dubey2024llama}. We denote by $\rho(g_k)$ the matrix form of $g_k$:
    \vspace{-10pt}

\begin{equation}
    \rho(g_k) = \begin{bmatrix}
\cos(g_k) & -\sin(g_k) \\
\sin(g_k) & \cos(g_k)
\end{bmatrix},
\end{equation}
    \vspace{-10pt}

highlighting that $\rho(g_k)$ is a $2$-dimensional orthogonal transformation (rotation). One can view $\rho(g_k)$ as a `unit rotation' by $g_k$ radians. The RoPE technique amounts to the construction of a block-diagonal matrix $\Rb^{i} = \bigoplus_{k = 1\dots d/2} \rho(g_k)^i \in \R^{d \times d}$, where each $2\times 2$ block on the diagonal is a rotation by a different frequency of RoPE. The $\Rb^{i}$ denotes in fact matrix exponentiation by an integer $i$ which is the position of $\xb_i$ \footnote{For clarity, $i$ and $j$ have nothing to do with $\sqrt{-1}$, but instead denote the positions $i$ and $j$ of the tokens in the sequence.}. We can exploit a nice property of rotation matrices, i.e. that $\rho(g_k)^i = \rho(i g_k)$ to avoid the computation of the matrix power. As this matrix is block diagonal, computing $\Rb_i \qb_i$ means that the rotations act only on $2$-dimensional chunks of the query (or key), i.e. $\Rb_i \qb_i = \bigoplus_{k = 1 \dots d/2} \rho(i g_k) \qb_i^{(k)}$. This leads to the final formulation of $k_{\text{RoPE}}$: 
    \vspace{-10pt}

\begin{equation}
    k_{\text{RoPE}}\left(\qb_i, \kb_j\right) = \left(\Rb^i \qb_i\right)^\top\left(\Rb^j \kb_j \right) = \qb_i^\top \Rb^{j - i}\kb_j = \sum_{k = 1 \dots d/2}\left(\qb_i^{(k)}\right)^\top\rho(g_k)^{j -i} \kb_j^{(k)},
\end{equation}
    \vspace{-10pt}

where we use the fact that $\left(\rho(g_k)^{i}\right)^\top \rho(g_k)^j = \rho(g_k)^{-i}\rho(g_k)^j = \rho(g_k)^{j - i}$. We highlight how the block diagonal structure of $\Rb$ allows one to decompose the dot product into the sum of dot products of $2$-dimensional chunks, with each key vector chunk rotated at a frequency dictated by $g_k$.

\subsection{Related works}
A number of works have investigated how different modifications of RoPE affect its generalisation. A well-known method involves increasing the parameter $\theta$ from the originally proposed $10${\small,}$000$ to a larger value, such as $500${\small,}$000$ \citep{xiong2023effective, roziere2023code}. Notably, this is also used in the recently released LLama 3 class of models \citep{dubey2024llama}. The justification provided for this modification is that a higher base wavelength means that the attention decay induced by RoPE will be slower, allowing for more robust learning over a larger context. In this work, we challenge this common assumption and investigate why RoPE and modifications such as these are helpful.

In a different direction, works have pointed out that NoPE shows strong performance in \emph{out-of-distribution} (OOD) settings when compared to RoPE, arguing that the causal mechanism is sufficient to learn positional information \citep{haviv2022transformer, kazemnejad2024impact}\footnote{Some works have argued that instead NoPE's extrapolation ability is still limited \citep{dong2024exploring, wang2024length}.}. In this work, we instead argue that NoPE and RoPE have \emph{complementary} strengths and weaknesses and that for instance NoPE is unable to learn certain types of attention matrices present in Gemma 7B. \cite{ruoss2023randomized} propose to provide \emph{randomised} positional information, showing that this helps boost OOD performance, claiming that this helps the model to learn over a longer range of relative distances. In our work (Appendix, Section \ref{app:randomised-pes}), we provide a different explanation as to why this kind of process might be helpful from the point of view of the type of invariance it encourages. 

Overall, we believe this work provides a different and perhaps more nuanced perspective on different works that tackle positional encodings. In spirit, this work is similar to works that aim to understand LLMs from a mechanistic perspective \citep{elhage2021mathematical, olsson2022context, wang2022interpretability, hanna2024does} and from the representations they produce \citep{barbero2024transformers, velickovic2024softmax}. In the Appendix (Section \ref{app:additional-discussions}), we provide additional discussions on related works.

\section{Does RoPE decay activations with distance?}
\label{sec:rope-decay}
In this section, we argue against the common claim that RoPE is helpful because it helps to decay activations as the relative distance between tokens increases. Such claims often work under the assumption that queries and keys are for instance a vector with all entries equal to each other, which we believe is an unrealistic oversimplification\footnote{Intuitively, \citet{su2024roformer} ask the question of what happens, as we vary the relative distance, to the dot product between already aligned queries and keys. This indeed will lead to a `decay' with relative distance. However, this perspective ignores what happens to originally misaligned queries and keys, whose dot product can \emph{increase} with relative distance.}. Importantly, this claim was originally provided by the authors of RoPE \citep{su2024roformer} as a justification for the chosen structure of the encoding. Follow up works have used this claim to justify modifications, such as increasing the base wavelength $\theta$ to $500${\small,}$000$. We therefore find it important to point out cases in which this decay does not in fact occur.

\begin{figure}
    \centering
    \subfloat[\centering RoPE applied to constant queries and keys.]{{\includegraphics[width=0.45\textwidth]{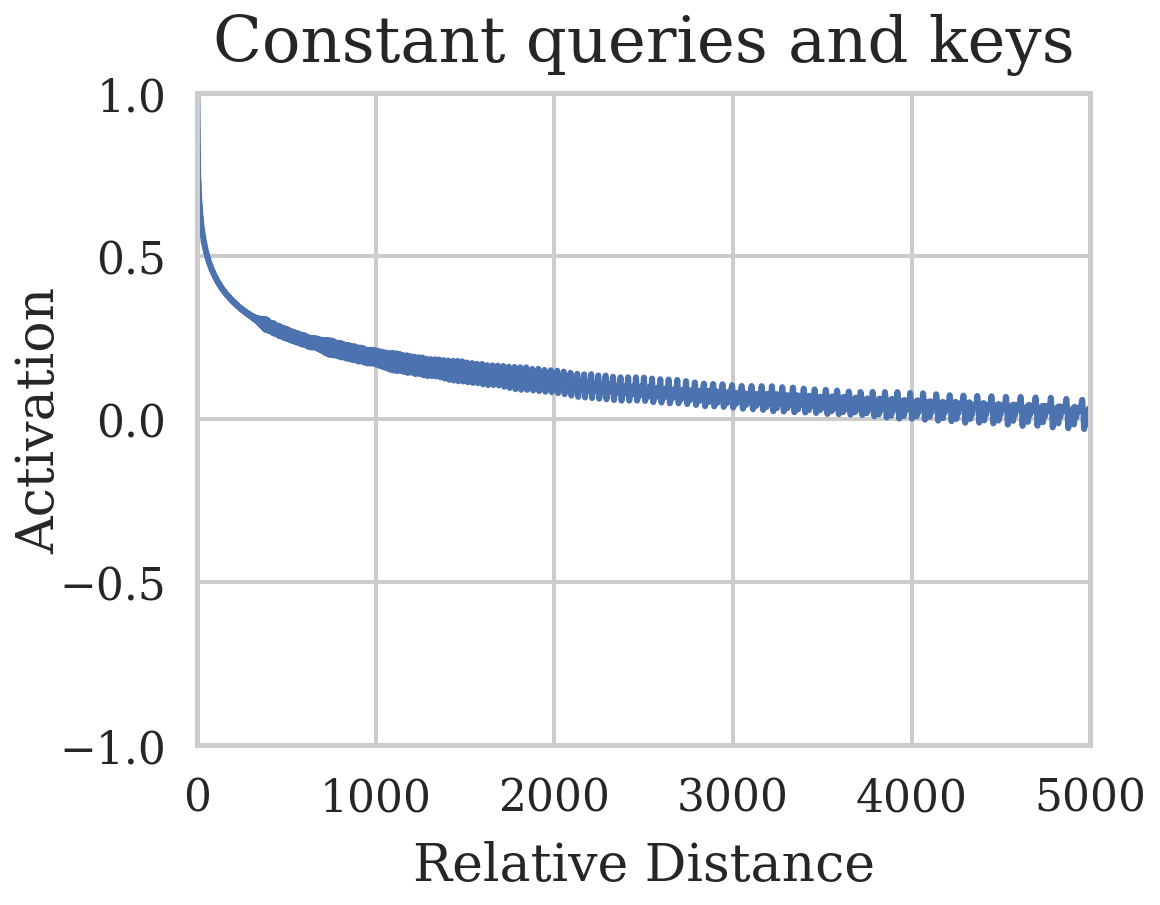} }}%
    \qquad
    \subfloat[\centering RoPE applied to Gaussian queries and keys.]{{\includegraphics[width=0.45\textwidth]{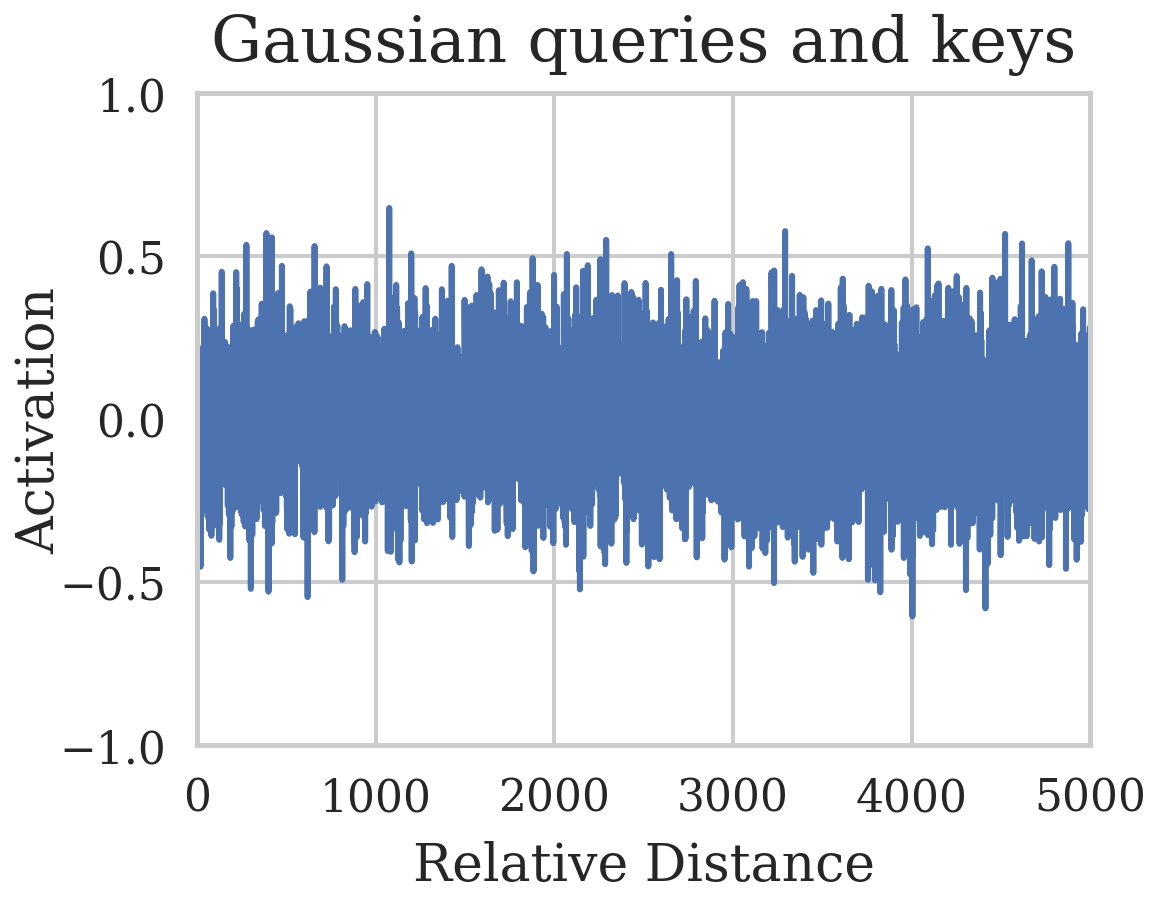} }}%
    \caption{RoPE applied to either (a) constant `all-ones' queries and keys or (b) queries and keys with entries sampled IID from a Gaussian. The decay of the activations is present when the queries and keys are constant all-ones vectors, but not when they are Gaussian random vectors.}%
    \label{fig:rope-decay}%
    \vspace{-10pt}
\end{figure}

We start by showing, in Proposition \ref{prop:arbitrary-decay-distance} that given any key, we can find a query such that RoPE is maximal for any chosen relative distance. This highlights the fact that RoPE provides Transformers with \emph{robust} ways to attend to specific relative distances. In fact, we will show, in Section \ref{sec:rope-frequencies}, that this mechanism is what Gemma 7B uses to construct heads that attend to specific positions. We provide the proof in the Appendix (Section \ref{app:proofs:rope-distance}).

\begin{proposition}[RoPE can be maximal at arbitrary distance]
\label{prop:arbitrary-decay-distance}
Given any query $\qb$ and any relative distance $r \in \mathbb{Z}$, we can find a key $\kb$ such that the softmax value is largest at distance $r$ with RoPE.
\end{proposition}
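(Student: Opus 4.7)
The plan is to construct $\kb$ explicitly, chunk by chunk, so that the RoPE activation $\qb^\top \Rb^s \kb$ (viewed as a function of the relative distance $s$) attains its maximum over $s \in \mathbb{Z}$ precisely at $s = r$. Because the causal softmax is strictly increasing in each of its inputs (the normaliser is shared across all visible positions), any maximiser of the pre-softmax activation $\ab_{i,j}$ is also a maximiser of the attention coefficient $\alpha_{i,j}$, so it suffices to argue about the activation.

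First I would decompose $\qb = \bigoplus_{k=1}^{d/2} \qb^{(k)}$ and define each chunk of $\kb$ by the inverse rotation
\[
\kb^{(k)} \;:=\; \rho(-r g_k)\,\qb^{(k)}, \qquad k = 1,\ldots, d/2,
\]
which preserves the norm of each chunk. Substituting this into the RoPE kernel decomposition from the Background section, and using the identity $(\rho(g_k)^i)^\top \rho(g_k)^j = \rho(g_k)^{j-i}$, the activation at relative distance $s$ becomes
\[
\sum_{k=1}^{d/2} (\qb^{(k)})^\top \rho(g_k)^{s-r}\, \qb^{(k)} \;=\; \sum_{k=1}^{d/2} \|\qb^{(k)}\|^2 \cos\bigl((s-r)g_k\bigr).
\]
Each summand is bounded above by $\|\qb^{(k)}\|^2$; at $s = r$ every cosine simultaneously equals $1$, so the activation attains its global upper bound $\|\qb\|^2$. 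For $s \neq r$, the activation is strictly smaller unless every $(s-r)g_k$ is an integer multiple of $2\pi$, which fails under any mild incommensurability of the angles and, in particular, under the standard RoPE parameterisation $g_k = \theta^{-2(k-1)/d}$. Combining this with the monotonicity of softmax yields the claim.

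The main ``obstacle'' here is conceptual rather than technical: the key insight is that the problem decouples across the two-dimensional RoPE chunks, so an explicit per-chunk inverse rotation suffices to align $\Rb^{r} \kb$ with $\qb$ and saturate a Cauchy--Schwarz-style upper bound. A minor bookkeeping item is the sign convention for the relative distance (whether $s = j - i$ or $s = i - j$), which merely flips $\rho(-r g_k)$ to $\rho(r g_k)$ in the construction and does not affect the argument.
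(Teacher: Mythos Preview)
Your proposal is correct and essentially identical to the paper's own argument: the paper also sets $\kb = \Rb^{r}\qb$ chunk-wise (your $\rho(\pm r g_k)$, with the sign resolved by the causal convention you flag), reduces the activation to $\sum_k \lVert \qb^{(k)}\rVert^2 \cos((s-r)g_k)$, and then invokes the irrationality of the $g_k$ (their Lemma~A.1) to conclude strict maximality at $s=r$. The only cosmetic difference is that the paper states the irrationality step as a separate lemma whereas you phrase it as ``mild incommensurability under the standard RoPE parameterisation''.
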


Next, in Proposition \ref{prop:guassian-rope-decay}, we show that given queries and keys sampled independently from a standard multivariate Gaussian, the expected value of the activations is $0$. Moreover, this is independent of the relative distance of the queries and keys -- implying that the expected value of the activations is independent of the relative distance when the queries and keys are sampled from a Gaussian. We provide the proof in the Appendix (Section \ref{app:proofs:rope-distance}).

\begin{proposition}[Gaussian queries and keys do not decay.]
\label{prop:guassian-rope-decay}
Let $\qb, \kb \sim \mathcal{N}(\mathbf{0}, \Ib)$. Then, for any relative distance $r \in \mathbb{Z}$, we have that:
\begin{equation*}
   \expec_{\qb, \kb \sim \mathcal{N}(\mathbf{0}, \mathbf{I})}\left[\qb^\top \Rb^{r} \kb \right] = 0. 
\end{equation*}
\end{proposition}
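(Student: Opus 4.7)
The plan is to exploit independence of $\qb$ and $\kb$ together with the fact that a standard Gaussian has zero mean, so the result will follow essentially by linearity of expectation without any appeal to the specific structure of $\Rb^r$. In particular, the rotation matrix $\Rb^r$ can be treated as an arbitrary deterministic matrix for the purposes of the expectation.

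First, I would write out the bilinear form entrywise:
\begin{equation*}
\qb^\top \Rb^r \kb \;=\; \sum_{i,j} (\Rb^r)_{ij}\, q_i\, k_j,
\end{equation*}
and pull the sum outside the expectation. This reduces the claim to computing $\expec[q_i k_j]$ for each pair $(i,j)$.

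Next, I would invoke independence of $\qb$ and $\kb$ (which, although not written explicitly, is clearly intended from context since the two are sampled as separate random vectors) to factor $\expec[q_i k_j] = \expec[q_i]\,\expec[k_j]$. Since each coordinate of a standard multivariate Gaussian has mean zero, both factors vanish, so every term in the sum is zero, regardless of the entries of $\Rb^r$ and hence regardless of $r$. A cleaner one-line version of the same argument is $\expec[\qb^\top \Rb^r \kb] = \expec[\qb]^\top \Rb^r \expec[\kb] = \mathbf{0}^\top \Rb^r \mathbf{0} = 0$.

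There is essentially no obstacle here: the statement is purely about first moments, and $\Rb^r$ plays no role beyond being deterministic. The only subtlety worth flagging in the write-up is the implicit independence assumption between $\qb$ and $\kb$; if one wanted to be pedantic, the proposition should really say $\qb \perp \kb$, since otherwise (e.g.\ $\kb = \qb$) the statement would fail. I would add a brief remark noting that this proposition concerns only the \emph{mean} activation, and that the more interesting quantity empirically — the variance or the full distribution of $\qb^\top \Rb^r \kb$ as a function of $r$ — is what one should examine to see whether decay truly occurs; this ties back to the empirical evidence in Figure~\ref{fig:rope-decay}(b).
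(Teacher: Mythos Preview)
Your proof is correct and in fact more direct than the paper's. The paper first invokes rotational invariance of the standard Gaussian: it sets $\tilde{\kb}^{(k)} = \rho(g_k)\kb^{(k)}$, argues that $\tilde{\kb}^{(k)}\sim\mathcal{N}(\mathbf{0},\Ib)$ because $\rho(g_k)$ is orthogonal, and only then factors the expectation using independence and zero mean. You bypass the isotropy step entirely by observing that $\Rb^r$ is just a fixed matrix, so linearity plus independence plus $\expec[\qb]=\expec[\kb]=\mathbf{0}$ suffices. Your argument is strictly more general (it works for any deterministic matrix in place of $\Rb^r$ and any zero-mean independent $\qb,\kb$, Gaussian or not), whereas the paper's route highlights the isotropy of $\mathcal{N}(\mathbf{0},\Ib)$ --- a property that would be needed for second-moment statements but is superfluous here. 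Your closing remark that the proposition only controls the mean, and that the variance is the more informative quantity for assessing decay, is a fair observation that the paper does not make explicit.
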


We showcase this by constructing a synthetic experiment in which we either set queries and keys as all-ones vectors, e.g. as done by \cite{su2024roformer, xiong2023effective}, or sample entries independently from a Gaussian, accounting for appropriate normalisations. Figure \ref{fig:rope-decay} shows the results. While there seems to be some form of decay of the activations as relative distance increases when the queries and keys are all-ones vectors (a) -- up to appropriate normalisation, this is clearly not the case when the queries and keys are instead random Gaussian vectors (b).

\begin{tcolorbox}[boxsep=0mm,left=2.5mm,right=2.5mm]
\textbf{Summary of the Section:} {\em While RoPE helps to decay activations with relative distance in very specific conditions, this does not have to happen. In fact, we will see in the next section that this is something that Gemma 7B exploits to create specific attention patterns.}   
\end{tcolorbox}

\section{How are different frequencies used?}
\label{sec:rope-frequencies}
In this section, we explore how different frequencies of RoPE are used. RoPE relies on a set of frequencies $G$ that take values $1, \dots, \theta^{-(d-2)/d}$, with the highest frequency varying by $1$ radian per token, while the lowest being much more stable, varying at $\approx 1/\theta$ per token. As the angle between vectors affects the dot product, the contribution from the highest frequencies should behave similarly to random noise, i.e. a small perturbation in the token sequence will give a largely different activation contribution. A natural question is whether these frequencies are being used and, if so, how exactly are they helpful?

To measure the usage of frequencies, we start by noting that by Cauchy-Schwarz, the effect of the $k$-th frequency component on the activation $\ab_{i,j}$ is upper bounded by the $2$-norm of the query and key components, i.e. $\lvert \langle \qb_i^{(k)}, \kb_j^{(k)} \rangle \rvert \leq \lVert \qb_i^{(k)} \rVert \lVert \kb_j^{(k)} \rVert$. It is therefore natural to look at the mean $2$-norm for each $\kb^{k}$ in Gemma 7B over long sequences. For Gemma 7B, we note that $k = 1, \dots, 128$, i.e. the hidden dimension is $128 \times 2 = 256$.

Figure \ref{fig:mean-norm-query-keys} shows the results. We plot the average  $2$-norm at each layer of each rope `chunk' over a number of sequences, ordering them by frequency, with the intuition that the norm will be an upper bound for how much that frequency will impact the activation dot product. We emphasize that the mean is taken over all of the $16$ heads at each layer. It is clear that learning has assigned much higher norm on average to the lowest frequencies, meaning that they will likely influence the dot product the most. This seems to be true at each layer. Interestingly, there seems to be some high frequency usage present especially at the very first and last layers. 

We believe this to be remarkable evidence showcasing how Gemma adapts to RoPE by preferring to use the lowest frequencies when computing attention activations.\footnote{It remains under debate whether high attention scores imply a meaningful preference \citep{bibal-etal-2022-attention}.} In the Appendix (Section \ref{app:supplementary-frequency-plots}, Figure \ref{fig:supplementary-diagonal-first-layer}), for completeness, we show that this type of distribution does not occur for the value vectors. This highlights that this behaviour is a consequence of RoPE, where learning discovers that these medium to high frequency `chunks' are not useful and hence pushes their norm to $0$ so that their impact on the dot product is minimal. Corresponding entries in value vectors, not being rotated, do not suffer from the same pressure to have their norm $0$. 

In Figure \ref{fig:first-layer-frequencies}, we show the frequency usage of the $16$ attention heads in the first layer. The heads that stand out as using the high frequencies, especially for the keys, are Heads 5 and 8. We will show in the next section that these heads correspond to \emph{positional attention heads}. We also highlight the sparse nature of the frequency usage, with the presence of `high norm' bands, especially at the lower frequencies. We highlight that this kind of pattern seems consistent with the observation that feed-forward layers act as sparse dictionary lookup tables \citep{mor2021transformer}. In this context, we believe that these bands are used to perform some kind of sparse query and key semantic matching.

\begin{figure}%
    \centering
    \subfloat[\centering Mean query norm distribution at each layer.]{{\includegraphics[width=0.42\textwidth]{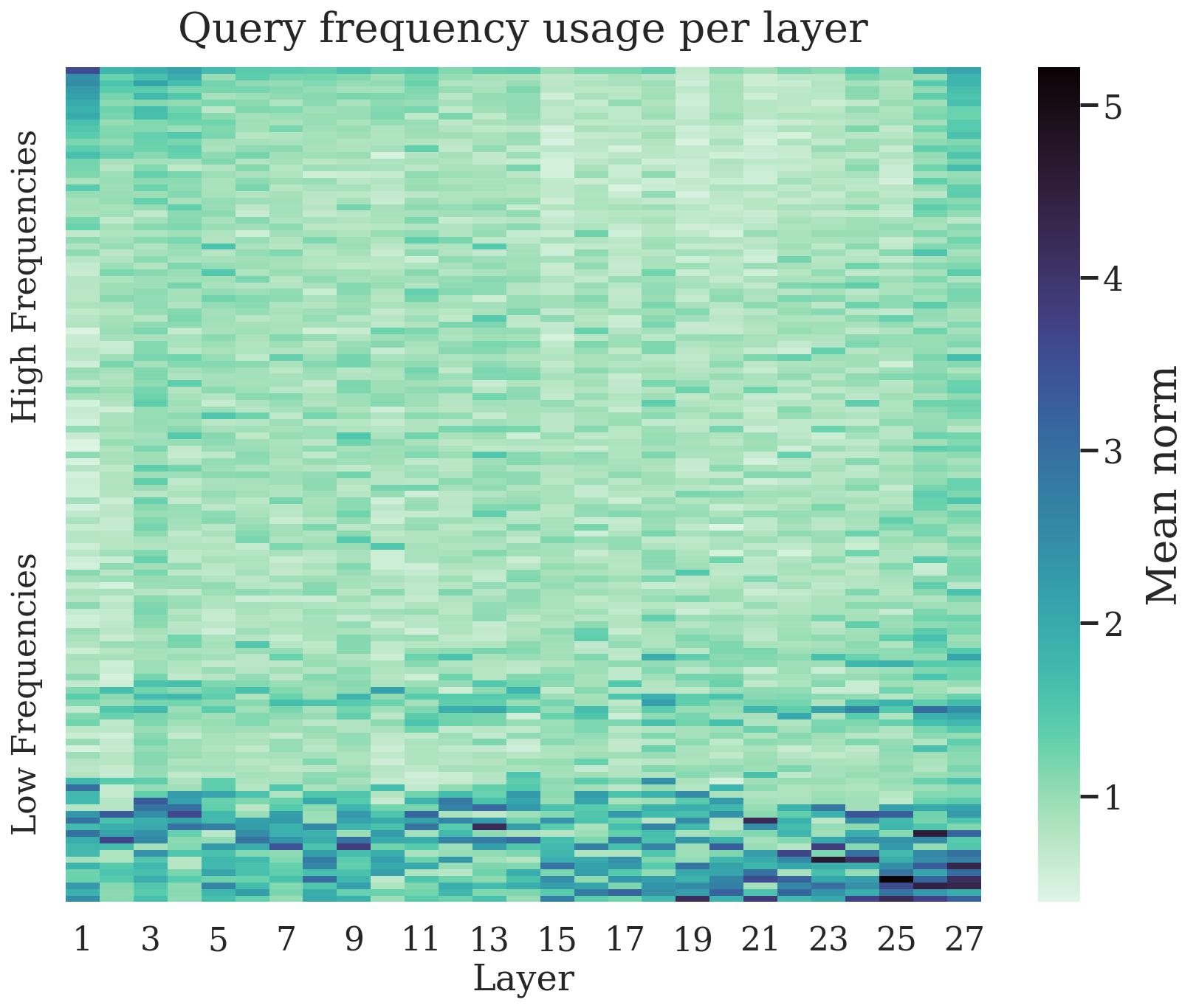} }}%
    \qquad
    \subfloat[\centering Mean key norm distribution at each layer.]{{\includegraphics[width=0.42\textwidth]{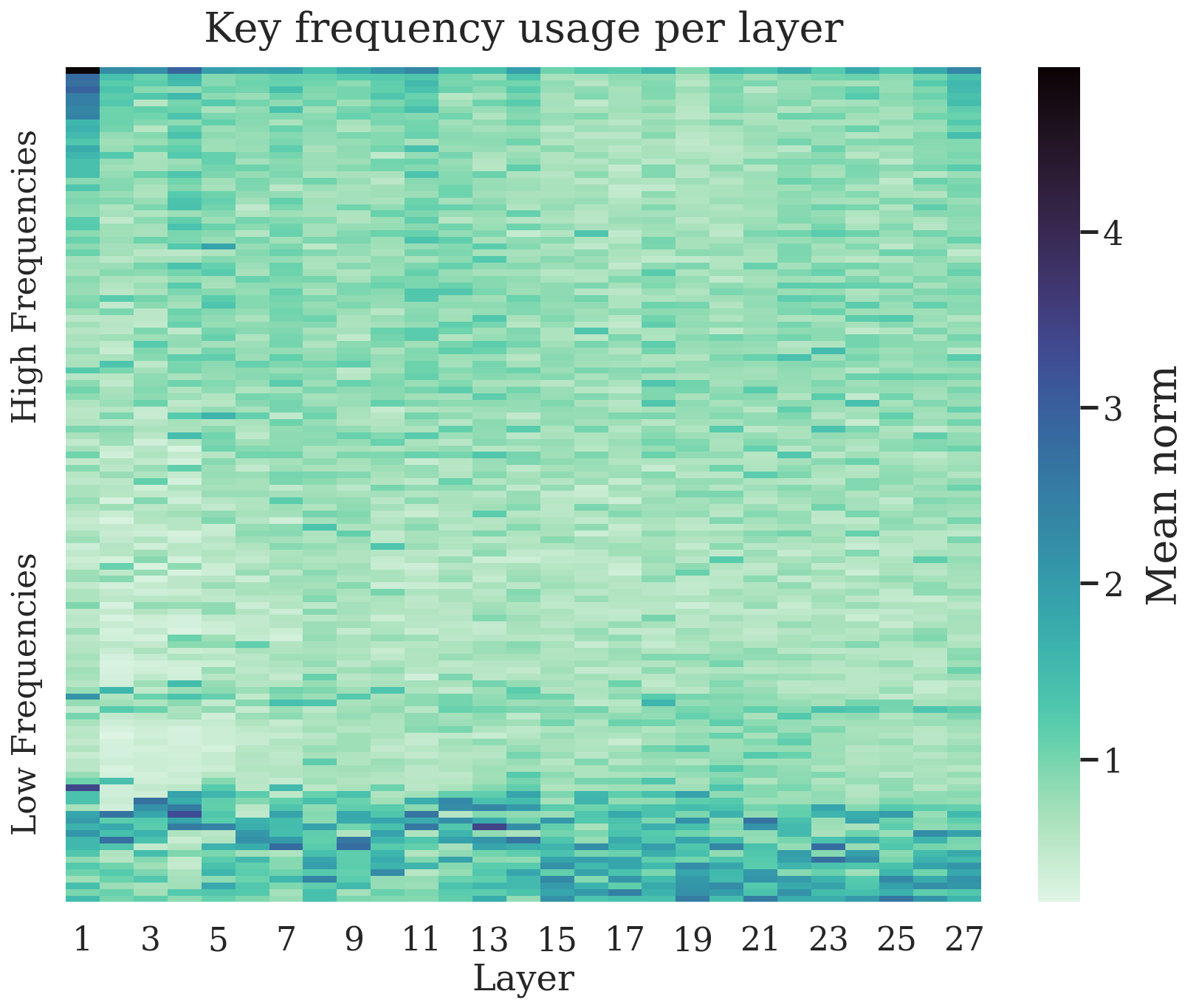} }}%
    \caption{$2$-norm plotted over $2$-dimensional chunks of queries (a) and keys (b) for each layer in Gemma 7B, corresponding to different RoPE frequencies. A mean is taken over $10$ different Shakespeare quotes and the $16$ attention heads at each layer.}%
    \label{fig:mean-norm-query-keys}%
    \vspace{-10pt}
\end{figure}

\begin{figure}%
    \centering
    \subfloat[\centering Mean query norms at each attention head.]{{\includegraphics[width=0.42\textwidth]{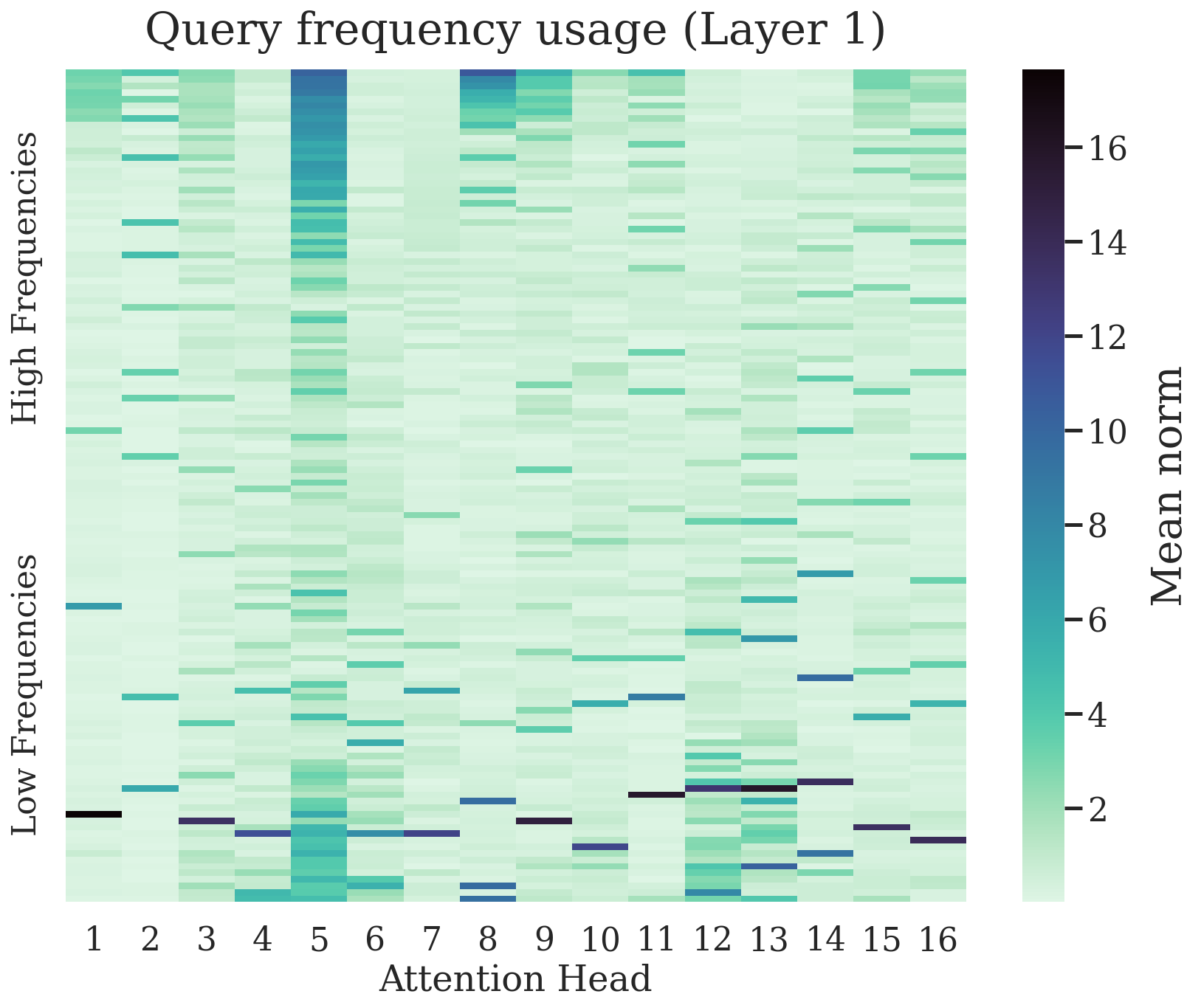} }}%
    \qquad
    \subfloat[\centering Mean key norms at each attention head.]{{\includegraphics[width=0.42\textwidth]{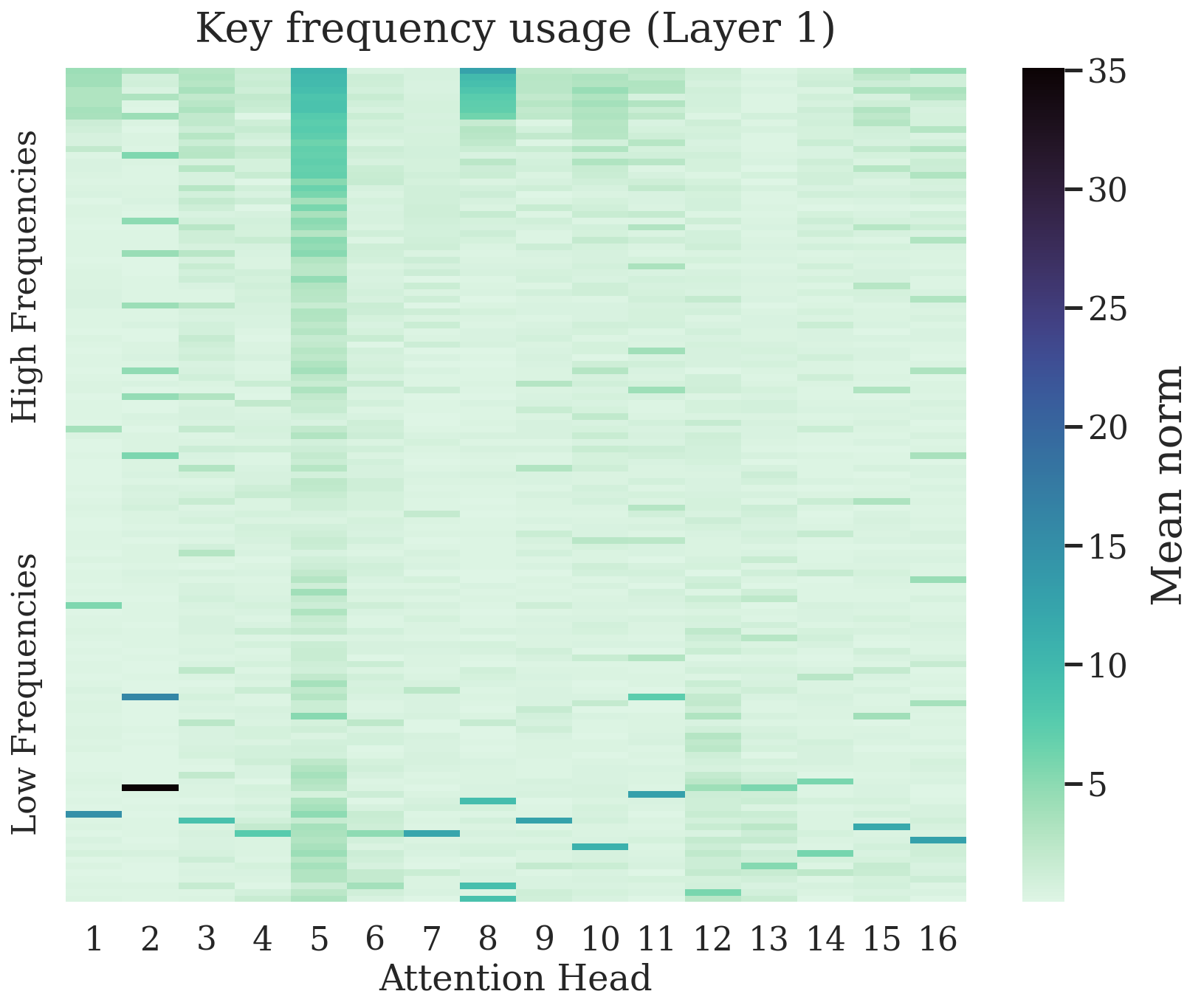} }}%
    \caption{$2$-norm plotted over $2$-dimensional chunks of queries (a) and keys (b) for each attention head of the first layer in Gemma 7B, corresponding to different RoPE frequencies. A mean is taken over $10$ different Shakespeare quotes. We explain in Section \ref{sec:high-frequencies} the high frequency behaviour in Head 5 and Head 8.}%
    \label{fig:first-layer-frequencies}%
    \vspace{-10pt}
\end{figure}

\begin{tcolorbox}[boxsep=0mm,left=2.5mm,right=2.5mm]
\textbf{Summary of the Section:} {\em We empirically showed that most of the RoPE usage in Gemma 7B occurs at the low frequencies. We also identified `high frequency' heads and high norm bands.}   
\end{tcolorbox}

\section{High frequencies: Positional attention}
\label{sec:high-frequencies}
The highest frequencies in RoPE are interesting to study as their usefulness is not immediately obvious. In particular, in the previous section, we showed that Gemma 7B seems to largely avoid them. In this section, we will study the cases in which certain attention heads mostly use the very highest frequencies, showing that these heads tend to display purely positional attention patterns. We will prove by construction that RoPE allows for \emph{arbitrarily sharp} attention patterns of this type -- a construction that Gemma 7B seems to be closely learning in practice. We will also prove that such heads cannot be constructed with NoPE. \emph{We believe the ability to construct such heads is important when trying to optimise for auto-regressive generation.} These heads have also shown to be useful for structure generalisation (see, e.g., Figure 6 in \cite{ruoss2023randomized}).

\begin{figure}%
    \centering
    \subfloat[\centering A diagonal head in Gemma 7B.]{{\includegraphics[width=0.42\textwidth]{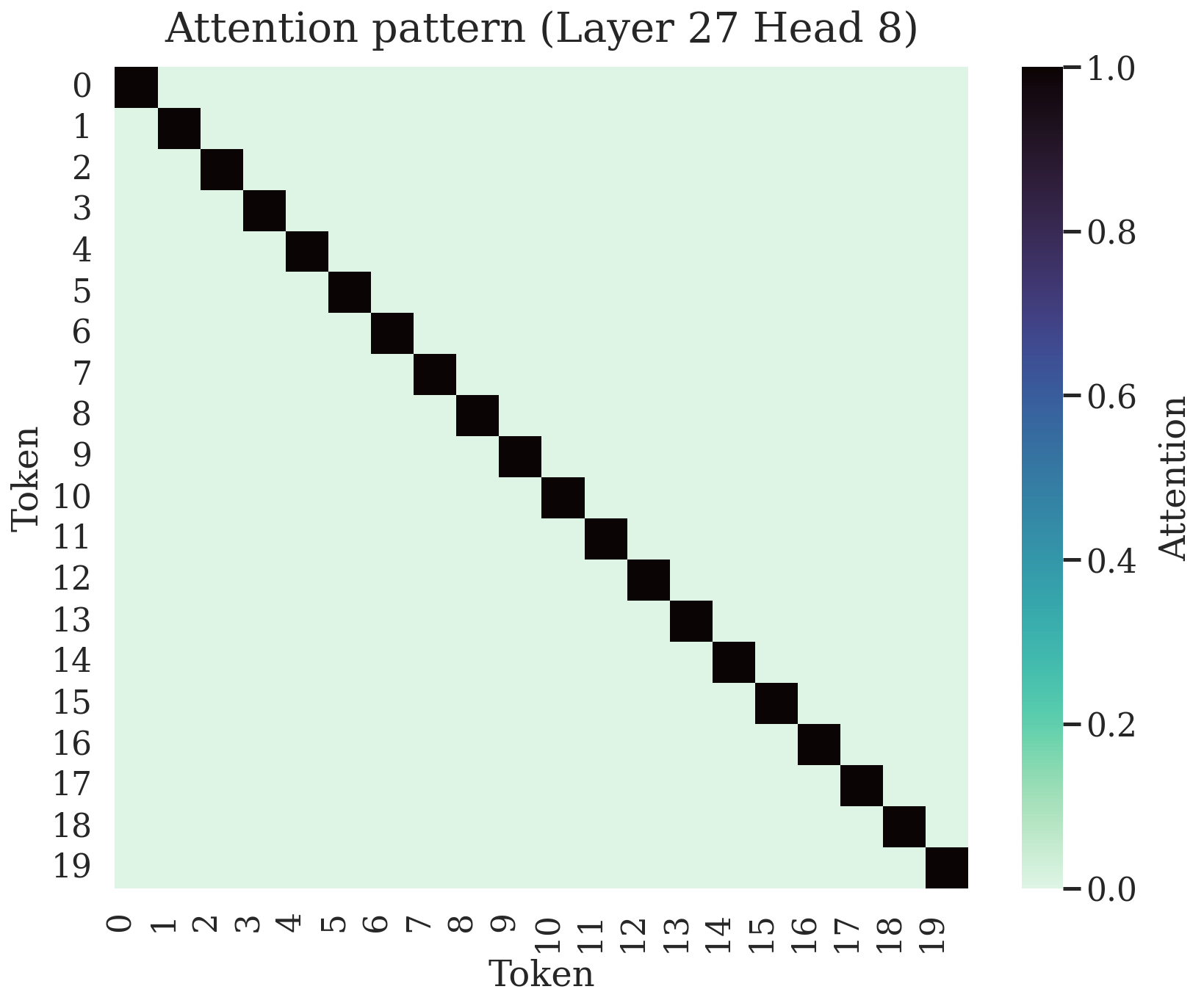} }}%
    \qquad
    \subfloat[\centering A previous-token head in Gemma 7B. ]{{\includegraphics[width=0.42\textwidth]{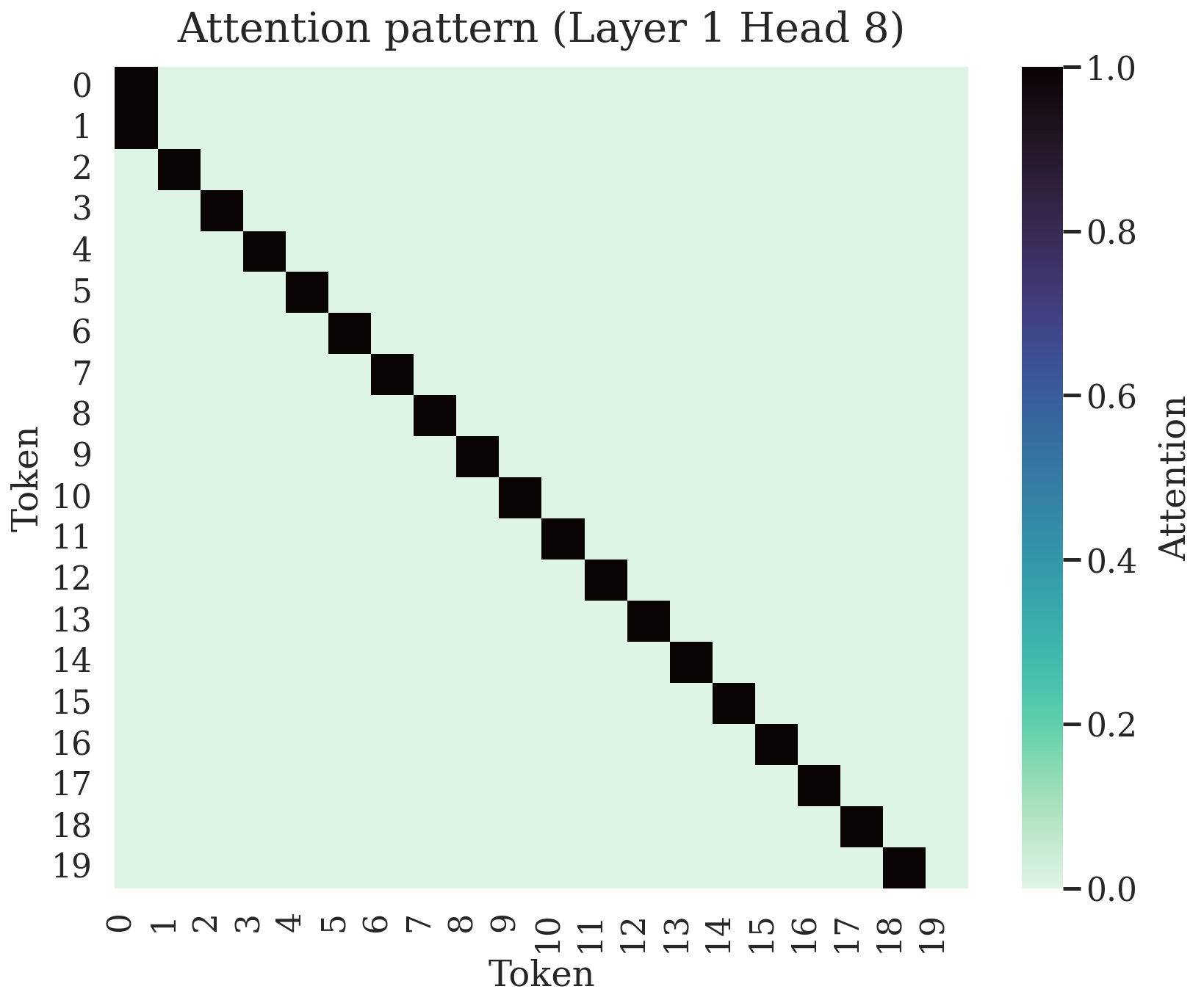} }}%
    \caption{Examples of purely positional heads occurring in Gemma 7B, showcasing a diagonal head at the last layer (a) and a previous-token head at the first layer (b).}%
    \label{fig:positional-heads-in-gemma}%
    \vspace{-10pt}
\end{figure}

\begin{figure}%
    \centering
    \subfloat[\centering Query frequency usage for Layer 27 Head 8. head]{{\includegraphics[width=0.42\textwidth]{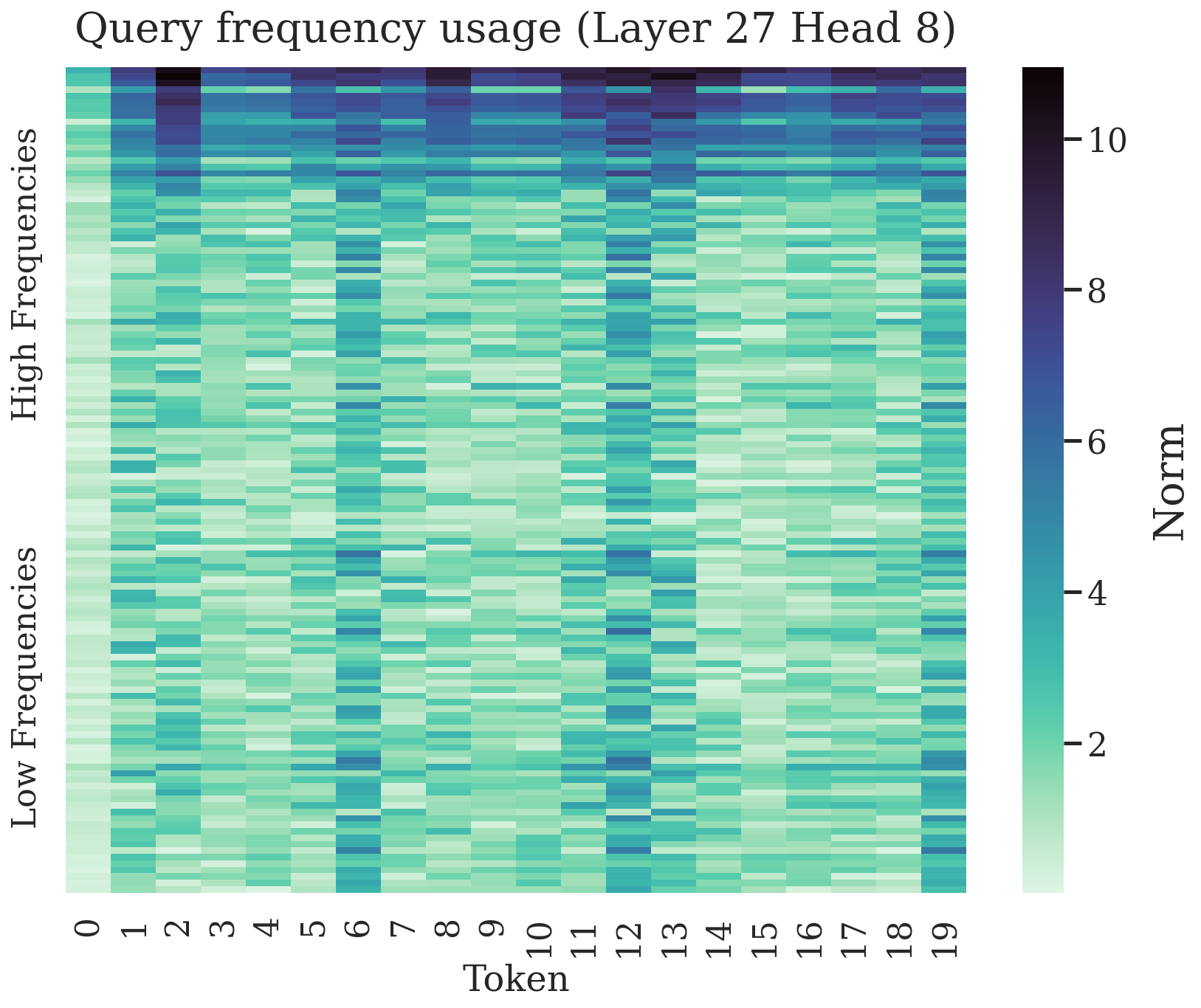} }}%
    \qquad
    \subfloat[\centering Key frequency usage for Layer 27 Head 8.  ]{{\includegraphics[width=0.42\textwidth]{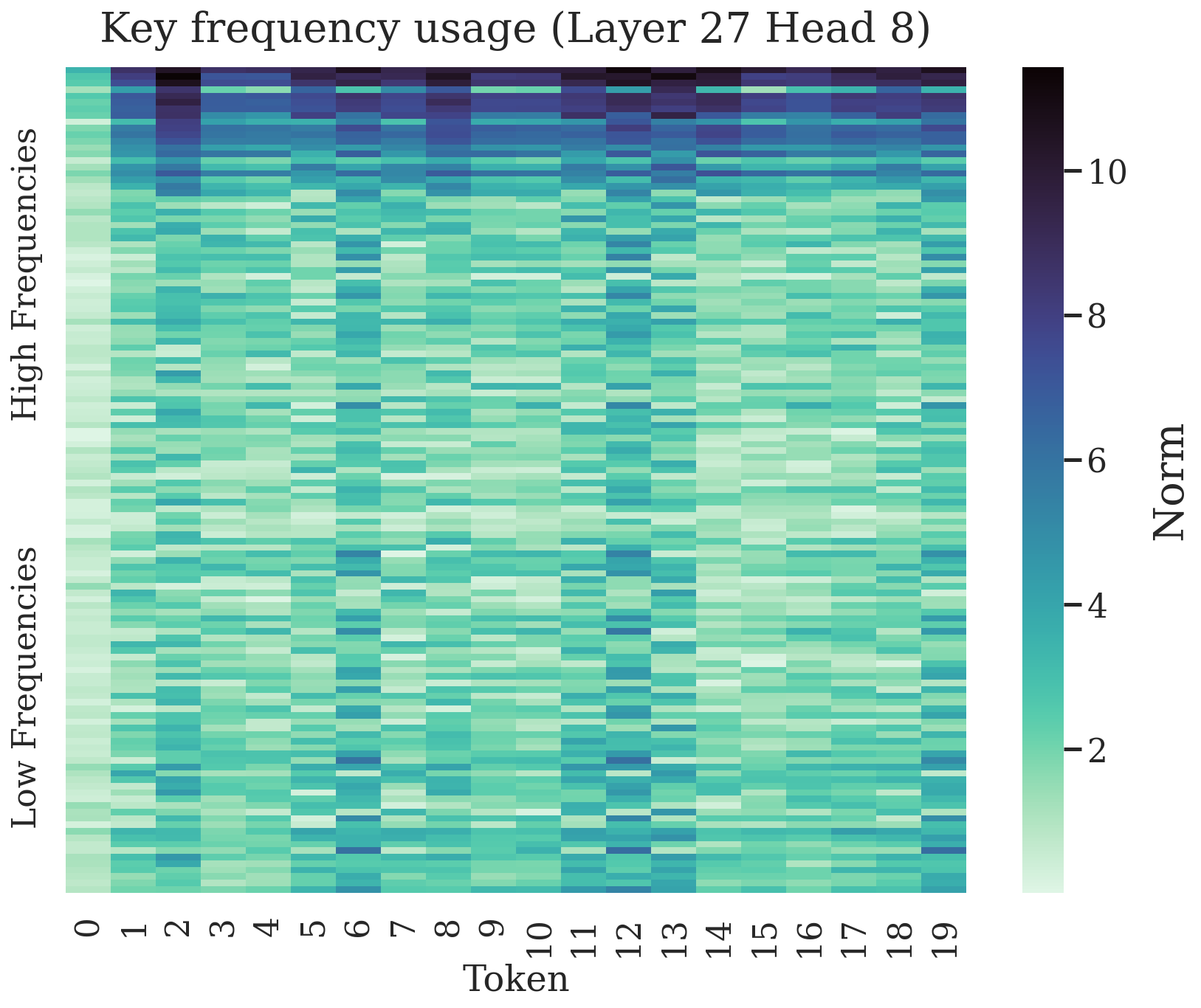} }}%
    \caption{Query and key frequency usage for a diagonal head present at the final layer, shown in Figure \ref{fig:positional-heads-in-gemma} (a). The head mostly relies on the high frequencies.}%
    \label{fig:diagonal-head-pattern-qks}%
    \vspace{-10pt}
\end{figure}

In Figure \ref{fig:positional-heads-in-gemma}, we show certain heads in Gemma 7B that specialise into purely positional attention patterns -- with Figure \ref{fig:positional-heads-in-gemma} (a) showing a `diagonal' attention pattern and \ref{fig:positional-heads-in-gemma} (b) a `previous-token' attention pattern. These heads seem to attend based purely on relative position, with no regards for semantics. It is not immediately clear how purely diagonal attention patterns are helpful, as these in principle behave like a residual connection. The reason \emph{why} Gemma learns to construct such a pattern is outside the scope of this work, but constitutes interesting learnt behaviour which requires further analysis -- perhaps a training `bug'. 

Figure \ref{fig:diagonal-head-pattern-qks}, shows that for the diagonal head, Gemma seems to rely mostly on the very highest frequencies. We provide in the Appendix (Section \ref{app:supplementary-frequency-plots}) more examples showing that this seems to be a repeated pattern throughout these positional heads. In fact, we found it remarkably consistent to identify positional heads, by looking at the usage of the highest frequencies. The heads 5 and 8 previously pointed in Figure \ref{fig:first-layer-frequencies} in fact correspond to positional heads with attention head 5 being purely diagonal and attention head 8 a previous-token head.

\paragraph{Constructing robust positional heads.}
We now study theoretically the mechanism through which Gemma 7B learns to robustly construct positional attention patterns. We start by defining more formally diagonal and previous-token attention patterns. Importantly, our definition relies on being able to learn patterns to an arbitrary precision $\epsilon$, given a maximum size $N$.

\begin{definition}
An attention head can learn a diagonal attention pattern if given an $\epsilon > 0$ and a fixed size $N$, for all $i \leq N$, we have $\alpha_{i, i} > 1 - \epsilon$, implying that $\sum_{j: j < i} = \epsilon$. Similarly, for the previous-token head, we must have $\alpha_{i, i-1} > 1 - \epsilon$ for all $\epsilon > 0$, implying that $\sum_{j: j \neq i-1} \alpha_{i, j} = \epsilon$.
\end{definition}

We prove in Proposition \ref{prop:nope-no-pos-pattern} that no construction exists for NoPE that is able to learn such attention patterns. The proof relies on a counterexample involving token repetition and may be found in the Appendix (Section \ref{app:proofs:positional-attention}).

\begin{proposition} 
\label{prop:nope-no-pos-pattern} 
An attention head with NoPE \textbf{cannot} learn a diagonal or off-diagonal pattern.
\end{proposition}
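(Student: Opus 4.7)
The plan is to exhibit a single adversarial input on which any NoPE head is forced into uniform attention, which is incompatible with either the diagonal or the previous-token target for small enough $\epsilon$. Concretely, fix $N \geq 2$ and choose any $\epsilon < 1 - 1/N$. Consider the constant input sequence $\xb_1 = \xb_2 = \cdots = \xb_N = \xb$ for some arbitrary but fixed token representation $\xb$. Because NoPE only sees embeddings through $\qb_i = \Wb_Q \xb_i$ and $\kb_j = \Wb_K \xb_j$ with no position-dependent transformation, every query coincides with the same vector $\Wb_Q \xb$ and every key with $\Wb_K \xb$. Consequently every causal logit $\ab_{i,j} = \qb_i^\top \kb_j$ collapses to the same scalar $c = (\Wb_Q \xb)^\top (\Wb_K \xb)$, independently of $i$ and $j$.

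Applying the causal softmax from Equation~\ref{eq:attention} to a row of identical logits then forces $\alpha_{i,j} = 1/i$ for every $j \leq i$. At position $i = N$ this gives $\alpha_{N, N} = 1/N$, which cannot satisfy $\alpha_{N, N} > 1 - \epsilon$ by our choice of $\epsilon$, so the diagonal definition is violated. The previous-token case is ruled out by the same instance, since we also have $\alpha_{N, N-1} = 1/N$, contradicting $\alpha_{N, N-1} > 1 - \epsilon$. In both cases, the attention head is powerless against the degenerate input, regardless of how $\Wb_Q$ and $\Wb_K$ were trained.

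The main obstacle I foresee is not the arithmetic but pinning down the hypothesis precisely. The statement ``an attention head can learn a pattern'' has to be read as holding for \emph{every} admissible length-$N$ input to the head, which is exactly what legitimises the use of a repeated-token sequence; otherwise one would need to upgrade the contradiction to a continuity argument showing that $\alpha_{i,i}$ cannot remain close to $1$ as two keys approach each other. Conceptually, the counterexample highlights that NoPE is equivariant under permutations of positions sharing the same token embedding, so it cannot single out any specific relative offset such as $j = i$ or $j = i - 1$ -- which is precisely the symmetry-breaking property RoPE supplies through its high-frequency rotations, as exploited by Gemma~7B in Section~\ref{sec:high-frequencies}.
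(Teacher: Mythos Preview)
Your proposal is correct and follows essentially the same approach as the paper: both exploit a repeated-token input to force equal logits at distinct positions, then observe that softmax cannot concentrate on a single one of the tied entries. The only cosmetic difference is that the paper uses the three-token sequence $[\xb_{bos}, \xb_1, \xb_1]$ (obtaining the bound $\alpha_{3,3} < \tfrac{1}{2}$ from two identical keys plus the BOS contribution), whereas you take a fully constant length-$N$ sequence and read off the exact value $\alpha_{N,j} = 1/N$.
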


Instead, in Theorem \ref{thm:rope-pos-pattern}, we show that an attention head with RoPE is able to learn such patterns. The construction relies on setting queries and keys equal to each-other and for the off-diagonal case, inverse-rotating them by a unit rotation for each RoPE chunk. The proof is provided in the Appendix (Section \ref{app:proofs:positional-attention}).

\begin{theorem} 
\label{thm:rope-pos-pattern}
An attention head with RoPE \textbf{can} learn a diagonal or off-diagonal pattern.
\end{theorem}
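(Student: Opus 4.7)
The plan is to give explicit constructions of queries and keys that produce each pattern, then amplify the separation with a scalar multiplier so that the softmax concentrates to arbitrary precision. The whole argument rests on the quadratic-form identity $\vec{w}^\top \rho(\theta)\vec{w} = \|\vec{w}\|^2 \cos\theta$ for any $\vec{w}\in\R^2$, which collapses each RoPE activation into a tunable cosine sum whose peak we can move by realigning query and key chunks.

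\textbf{Diagonal case.} First I would set $\qb_i = \kb_i = c\,\vec{v}$ for every position $i$, where $\vec{v}\in\R^d$ is fixed with $\|\vec{v}\| = 1$ and nonzero first chunk $\vec{v}^{(1)}$, and $c>0$ is a scalar to be chosen. Applying the identity chunk-by-chunk to $k_{\mathrm{RoPE}}$ yields
\begin{equation*}
\ab_{i,j} \;=\; c^2 \sum_{k=1}^{d/2} \|\vec{v}^{(k)}\|^2 \cos\bigl(g_k (j-i)\bigr) \;=:\; c^2 f(j-i),
\end{equation*}
with $f(0) = 1$. The key step is to establish the strictly positive gap $\delta_N := \min_{r\in\Z,\,0<|r|\le N}\bigl(f(0) - f(r)\bigr) > 0$: since $g_1 = 1$ and $\pi$ is irrational, $\cos(r) < 1$ for every nonzero integer $r$, so each term in the minimum is strictly positive, and a minimum over a finite set of positive numbers is positive. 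Choosing $c$ with $c^2\delta_N > \ln(N/\epsilon)$, for any $i\le N$ and any $j<i$, $\ab_{i,i}-\ab_{i,j}\ge c^2\delta_N$, so
\begin{equation*}
\alpha_{i,i} \;\ge\; \frac{1}{1 + (i-1)\,e^{-c^2\delta_N}} \;\ge\; \frac{1}{1 + N\,e^{-c^2\delta_N}} \;>\; 1-\epsilon.
\end{equation*}

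\textbf{Off-diagonal case.} I would keep $\kb_j = c\,\vec{v}$ but apply a per-chunk inverse unit rotation to the query, namely $\qb_i^{(k)} = c\,\rho(-g_k)\,\vec{v}^{(k)}$. Because $\rho(-g_k)^\top = \rho(g_k)$, the same identity now gives $\ab_{i,j} = c^2 f(j - i + 1)$, so the peak is shifted to $j = i-1$. Rerunning the softmax estimate with the same $\delta_N$, now taken over $r = j - i + 1 \in \{1-i,\dots,1\}\setminus\{0\}\subseteq\{r\in\Z:0<|r|\le N\}$, yields $\alpha_{i,i-1} > 1-\epsilon$.

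\textbf{Main obstacle.} The delicate point is guaranteeing $\delta_N > 0$ and controlling its dependence on $N$. Integers are equidistributed modulo $2\pi$, so $\cos(r)$ can approach $1$ for large $N$ and $\delta_N$ may shrink; but for any fixed $N$ it is strictly positive, and the scaling $c$ absorbs the shrinkage at the cost of larger (still finite) activations. This is precisely the leverage RoPE enjoys over NoPE: the rotation $\Rb^{j-i}$ breaks the token-repetition symmetry $\ab_{i,i} = \ab_{i,j}$ that drives the counterexample of Proposition~\ref{prop:nope-no-pos-pattern}, so positional heads that are unreachable for NoPE become reachable here.
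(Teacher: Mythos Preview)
Your proposal is correct and follows essentially the same construction as the paper: set all queries and keys equal to a fixed vector (with a one-step rotation offset for the off-diagonal case), use the identity $\vec w^\top\rho(\theta)\vec w=\lVert\vec w\rVert^2\cos\theta$ to reduce each activation to a cosine sum peaking at the desired relative distance, invoke the irrationality of $\pi$ (equivalently Lemma~\ref{lemma:irrational-rotation-zero} at $g_1=1$) to get a strict gap, and scale the norm to sharpen softmax. The only differences are cosmetic: you work directly in general $d$ and give an explicit choice of $c$ via $\delta_N$, whereas the paper first treats $d=2$ and takes a supremum in $\lVert\psib\rVert^2$; and you rotate the query by $\rho(-g_k)$ while the paper rotates the key by $\rho(g_k)$, which are equivalent.
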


In fact, Gemma 7B seems to be learning a construction that is strikingly similar to our construction -- we provide evidence in the Appendix (Section \ref{app:supplementary-evidence-constructions}). We also find why the highest frequency rotations are most similar, as they are the ones which will affect the dot-product most rapidly. For the diagonal case, setting queries and keys equal means that the dot product $\langle \qb_i, \Rb^{j-i} \kb_j \rangle$ will be maximal when $j = i$ and strictly smaller when $j \neq i$. The decrease in the dot product will depend on the angle between the rotated queries and keys. This angle will become dissimilar the fastest when using the highest frequency rotations. If $g_{d/2} \approx 1/10000$ for instance, then $g_{d/2}^{j-i}$ will misalign the queries and keys by a radian at a distance of $\approx 10${\small,}$000$ tokens, while the highest frequency $g_{1}$ will only need a single token.

\begin{tcolorbox}[boxsep=0mm,left=2.5mm,right=2.5mm]
\textbf{Summary of the Section:} {\em High frequencies in RoPE provide a mechanism to construct `positional' attention patterns. NoPE is instead provably not able to construct such patterns.}   
\end{tcolorbox}

\section{Low frequencies: Semantic attention} 
\label{sec:low-frequencies}
We now study how the low frequencies are being used. As observed in Section \ref{sec:rope-frequencies}, the majority of the allocated norm (within the key and query embeddings) in Gemma 7B seems to be towards the lower frequencies. Furthermore, there seems to be a presence of distinct `bands' in attention heads, especially in these low frequencies. In this section, we argue that the low frequencies are most useful to detect information related to token `semantics'
as they are the most invariant to token relative distance. The rotations are however not perfectly invariant over long relative distance, 
leading eventually to misalignment of the vectors also at the lowest frequencies. 

In other words, the lowest frequencies are useful \emph{as they are precisely the frequencies for which dot product is the least affected by the relative distance}. In particular, we believe this is why works such as LLama 3 have found it useful to increase the base wavelength to $500${\small,}$000$, meaning that the lowest frequencies rotate at roughly $1/500${\small,}$000$ radians per token. With a context length of $128$k in LLama 3, the standard wavelength of $10${\small,}$000$ will in fact complete $\approx 2.04$ rotations, which might lead to poor generalisation due to tokens being erroneously misaligned simply due to the large context. We therefore deduce that with larger and larger contexts, the $\theta$ base wavelength of RoPE will have to also be increased accordingly -- a relationship that has been studied in depth \citep{xubase}.

In Figure \ref{fig:semantic-heads-in-gemma} (a), we identify an interesting semantic head that makes tokens appearing after an apostrophe token attend to the apostrophe\footnote{For instance, for the following $4$ tokens [\texttt{BOS}, I, ', m], the attention head at token m will attend to the apostrophe token '. For all other tokens it will attend to the \texttt{BOS} token.}. We point out the qualitative difference when compared to the frequency usage in the positional heads from Section \ref{sec:high-frequencies}. This head is interesting being it is semantic (detecting apostrophes), but also positional, only detecting an apostrophe at the \emph{previous token}. We see how Gemma 7B is using the highest frequencies, visibly active in Figure \ref{fig:semantic-heads-in-gemma} (b), to detect `previous token' apostrophes. The low frequency dark band instead we believe is being used to attend robustly to the \texttt{BOS} token when there is no apostrophe present at the previous token -- see the Appendix (Section \ref{app:supplementary-evidence-constructions}) for details.

\begin{figure}%
    \centering
    \subfloat[\centering Semantic `apostrophe' attention head.]{{\includegraphics[width=0.42\textwidth]{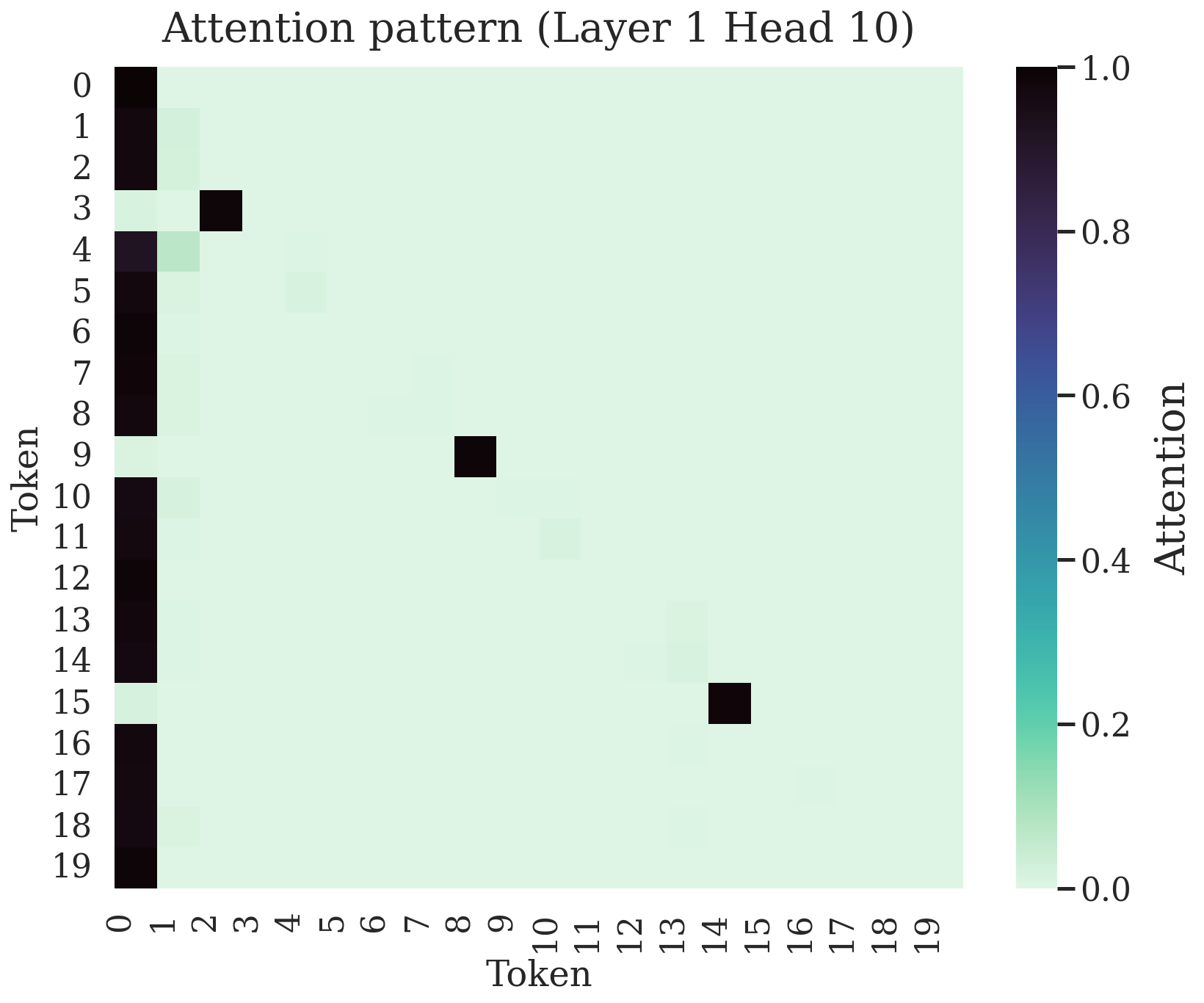} }}%
    \qquad
    \subfloat[\centering Key frequency usage for the apostrophe head.]{{
    \includegraphics[width=0.42\textwidth]{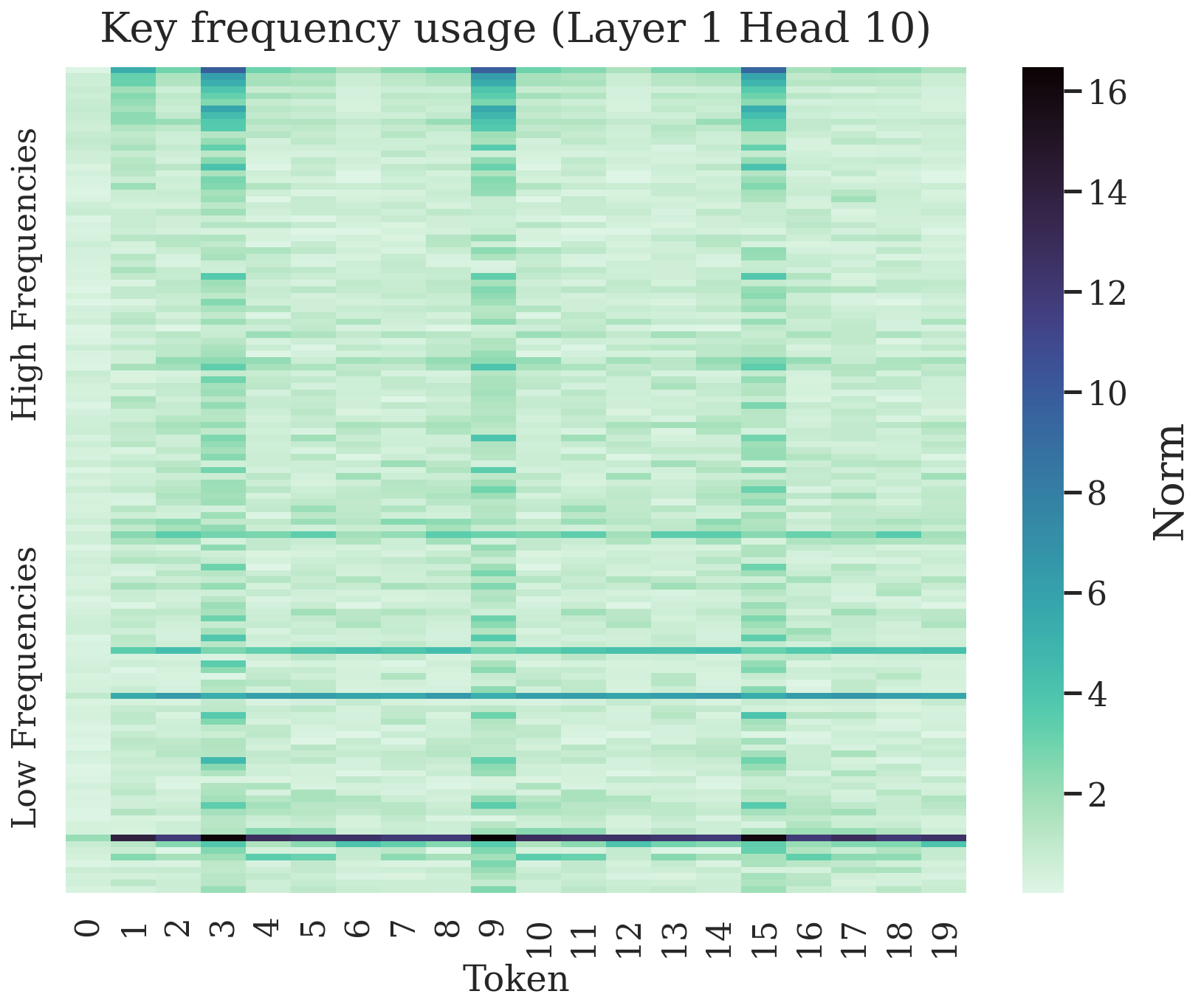} }}
    \caption{(a) Semantic attention head making tokens appearing after an apostrophe attend to it (b) Key frequency usage of same attention head, showing bands around low frequencies. The apostrophe tokens are at positions 3, 9, and 15.}%
    \label{fig:semantic-heads-in-gemma}%
    \vspace{-10pt}
\end{figure}


We now however show that the `semantic bands' cannot be robust over large context. In particular, irrational rotations are `dense' meaning that given a long enough context, they can rotate to any arbitrary value. We use this to show that there exists a relative distance for which the attention activation will be erroneously large. We prove the $2$-dimensional case, i.e. a single RoPE frequency, for simplicity. We believe that this is still an informative case as we observe that these bands are often very distinct and focused on a single frequency. We provide the proof in the Appendix (Section \ref{app:proofs:semantic-attention}) and point out that the observation that NoPE is instead able to build this type of attention pattern is immediate.


\begin{theorem}
\label{thm:rope-loses-focus}
Let the hidden dimension be $d=2$, i.e. there is only a single RoPE frequency $g_1$. Consider a sequence $\xb_{bos}, \xb_1, \dots, \xb_N$ with a target token at some position $\xb_{n}$. Then for large enough $N$, the attention head with RoPE is unable to attend to such a token such that $\alpha_{i, n} > 1 - \epsilon$ for any $\epsilon > 0$ and any $1 \leq n \leq N$.
\end{theorem}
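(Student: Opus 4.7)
The plan is a collision argument: I will place two semantically identical tokens at positions $n$ and $j$ where RoPE's rotations almost agree, forcing the softmax to split its mass between them. With $d=2$, the activation reads $a_{i,\ell} = \qb_i^\top \rho((\ell - i) g_1)\kb_\ell$. Since $\rho$ is $1$-Lipschitz in its angle, any two positions sharing the same key ($\kb_j = \kb_n$) satisfy
\begin{equation*}
\lvert a_{i,j} - a_{i,n}\rvert \;\leq\; \lVert\qb_i\rVert\,\lVert\kb_n\rVert \cdot \bigl\lvert (j - n)\, g_1 \bmod 2\pi\bigr\rvert.
\end{equation*}
The task therefore reduces to producing a nonzero integer $m$ for which $\lvert m g_1 \bmod 2\pi\rvert$ is arbitrarily small. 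This is classical: by Dirichlet's approximation theorem when $g_1/(2\pi)$ is irrational, or by exact periodicity when it is rational, for every $\delta > 0$ there exists a nonzero $m \in \Z$ with $\lvert m g_1 \bmod 2\pi\rvert < \delta$ and $\lvert m \rvert \leq M(\delta) = O(1/\delta)$.

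Given any $\epsilon > 0$, I choose $\delta$ small enough that the Lipschitz bound yields $\lvert a_{i,j} - a_{i,n}\rvert < \eta$ for a pre-chosen small $\eta$, take $N$ (and $i$) at least $n + M(\delta)$, and build the adversarial input by copying the target token: $\xb_{n+m} = \xb_n$. Then $a_{i,\,n+m} \geq a_{i,n} - \eta$, so
\begin{equation*}
\alpha_{i,n} \;\leq\; \frac{e^{a_{i,n}}}{e^{a_{i,n}} + e^{a_{i,\,n+m}}} \;\leq\; \frac{1}{1 + e^{-\eta}}.
\end{equation*}
Letting $\eta \to 0$, the upper bound tends to $1/2$, so $\alpha_{i,n}$ is bounded away from $1$ by a constant gap, contradicting $\alpha_{i,n} > 1 - \epsilon$ for any $\epsilon < 1/2$.

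The main obstacle is the quantitative bookkeeping: translating the target precision $\epsilon$ into a bound on $\delta$ (which in turn depends on $\lVert\qb_i\rVert\,\lVert\kb_n\rVert$), then into $M(\delta)$ via Dirichlet, and finally into a lower bound on $N$. A secondary subtlety is causality, since one needs $n + m \leq i$; this is handled by choosing the sign of $m$ (Dirichlet supplies both signs) or by taking $i$ sufficiently far ahead of $n$. The density step is also where the restriction to $d=2$ genuinely simplifies matters: a higher-dimensional analogue would require a simultaneous-approximation argument controlling all $d/2$ RoPE frequencies at once, which is why the theorem is stated in the single-frequency case.
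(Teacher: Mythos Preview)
Your argument is correct but takes a genuinely different route from the paper. The paper proceeds by a \emph{sign-flipping} argument: assuming the head attends sharply to the target, it permutes tokens within the sequence so that the RoPE rotation flips the sign of $\cos(\theta_{i,k} + g_1(k-i))$ for some non-target $k$, thereby pushing a competitor's activation above the target's; this requires a two-case split on whether $a_{i,n}$ is positive or negative and appeals to density of irrational rotations only qualitatively. Your \emph{collision} argument instead duplicates the target token at a position where Dirichlet guarantees the rotation is nearly the identity, forcing the softmax to split mass and yielding the clean bound $\alpha_{i,n}\lesssim 1/2$ directly. Your approach is more quantitative (Dirichlet gives an explicit $M(\delta)=O(1/\delta)$, hence an explicit $N$) and pleasingly mirrors the paper's own NoPE impossibility proof in Proposition~\ref{prop:nope-no-pos-pattern}, which also exploits repeated tokens. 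The paper's route has the mild advantage of being norm-independent---the sign change works regardless of $\lVert\qb_i\rVert\lVert\kb_n\rVert$---whereas you must fix those norms before choosing $\delta$ and $N$; you flag this bookkeeping correctly, and it is harmless since the weights and the duplicated token can be fixed first.
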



\subsection{Truncating the lowest frequencies of RoPE}
\label{sec:low-frequencies-ablation}
In the previous section and more formally in Theorem \ref{thm:rope-loses-focus}, we claimed that \textbf{RoPE lacks robust semantic channels}. To adapt to this, as shown in Figure \ref{fig:mean-norm-query-keys}, Gemma 7B seems to prefer to use the lowest frequencies to construct high norm semantic bands. The key insight is that unfortunately these high norm bands are not robust when the context becomes very large. We believe that this is why works such as Llama 3 found it helpful to increase the base wavelength of RoPE due to the larger context. 

In this section, to investigate this further, we carry out an ablation in which we truncate the very lowest frequencies of RoPE. The intuition is that if our newly found intuition is correct, truncating the lowest frequencies \emph{should not harm performance}. In fact, this allows RoPE to provide robust semantic channels that are distance agnostic. We call this modification $p$-RoPE, with $0\leq p \leq 1$ being the fraction of RoPE `kept'. We highlight two special edge cases: $p = 0$ coincides with NoPE, while the case $p=1$ with RoPE. As such, the quantity $p$ behaves like an interpolant between the two, with higher values of $p$ being closer to RoPE and lower to NoPE. We provide additional details in the Appendix (Section \ref{app:supplementary-p-rope}) and summarise the properties of the three types of encodings in Table 1.

\begin{table}
\parbox{.45\linewidth}{
\centering
\caption{A summary of the discussed theoretically properties of RoPE, NoPE and $p$-RoPE.}
\begin{tabular}{llll}
        \toprule
        Encoding & \textbf{NoPE} & \textbf{RoPE} & \textbf{$p$-RoPE} \\ \midrule
        Positional  & \xmark & \cmark & \cmark \\
        Semantic & \cmark & \xmark & \cmark \\ \bottomrule
        \end{tabular}
}
\hfill
\parbox{.45\linewidth}{
\centering
\caption{Validation perplexity comparison between NoPE, RoPE and $p$-RoPE.}
\begin{tabular}{lll}
        \toprule
        Encoding &    \texttt{Wiki }& \texttt{FlanV2}\\ \midrule
        NoPE      & 4.8594 & 6.6429 \\
        RoPE\textsubscript{$\theta=10$k}       & 4.4627 & 6.4429 \\
        RoPE\textsubscript{$\theta=500$k}       & 4.4485 & 6.4593 \\
        0.75-RoPE\textsubscript{reversed} & 4.4592 & 6.4683 \\
        0.75-RoPE\textsubscript{partial} &  4.4537 & 6.4562 \\ 
        \midrule
        $0.25$-RoPE & 4.5302 & 6.5111 \\ 
        $0.75$-RoPE & \textbf{4.4414} & \textbf{6.4422} \\ \bottomrule
        \end{tabular}
}
\vspace{-10pt}
\end{table}

We would like to note that $p$-RoPE is in spirit similar to the idea of increasing the wavelength of RoPE from $10${\small,}$000$ to $500${\small,}$000$, first proposed by \cite{roziere2023code} and later employed by LLama 3 \citep{dubey2024llama}. The effect that increasing the wavelength has is that of increasing the amount of tokens required to destroy semantic information, leading to improved long-range performance. For a wavelength that is large enough compared to the context, the very lowest frequencies can act as a semantic channel, as the position will have very little impact on the dot product. A summary of the properties between NoPE, RoPE, and $p$-RoPE is shown in Table 1.

We train Gemma 2B models from scratch on the \texttt{Wiki} and \texttt{FlanV2} training datasets (see Section \ref{app:supplementary-p-rope} for details). We benchmark the use of RoPE, NoPE, and $p$-RoPE. We set the base wavelength $\theta$ to $10${\small,}$000$ and train and evaluate using the standard Gemma $8$k token context. In Table 2, we show the perplexity on the validation set. We can see how truncating the lowest frequencies \emph{not only maintains the same performance, but even seems to improve the validation perplexity}, supporting our claims regarding the low frequencies acting as non-robust semantic channels in standard RoPE. Truncating even more of the frequencies ($0.25$-RoPE) harms performance, but with results still significantly better than with NoPE. We find that $p$-RoPE also outperforms simply increasing the wavelength ($\theta=500$k) and beats removing the highest frequencies (denoted $0.75$-RoPE\textsubscript{reversed}) -- validating our intuition that the lowest frequencies are the ones that should be removed. We further show that $p$-RoPE seems to outperform RoPE\textsubscript{partial} (see Appendix - Section \ref{app:supplementary-p-rope} for a discussion).

Of course an $8$k context is not very large and improvements due to wavelength increase typically were observed on a larger context of $32$k \citep{xiong2023effective}. We lack the resources to validate this on larger context lengths, but believe this to be a very interesting future direction. We in fact expect that truncating some small percentage of the lowest frequency tokens to help with long-context generalisation by providing robust semantic channels that are independent of relative distance and can therefore generalise automatically to any context length. We comment in the Appendix (Section \ref{app:limitations}) further on this.

\begin{tcolorbox}[boxsep=0mm,left=2.5mm,right=2.5mm]
\textbf{Summary of the Section:} {\em Low frequencies in RoPE tend to be used by semantic attention heads, as the rotation based on relative distance between tokens has minimal impact on the dot product. Removing the rotation for a certain percent of low frequencies improves performance on Gemma 2B models.}
\end{tcolorbox}
\vspace{-5pt}

\section{Conclusion}
In this work, we explored the behaviour of RoPE. We started by arguing that the use of RoPE does not ensure that attention weights will decay with relative distance. This is provably the case if we assume, for instance, that the keys and queries are Gaussian random vectors. Further, we provided a construction that ensure the attention weight is maximal at any provided relative distance, regardless of magnitude, with these observations going against the typical intuitions used to justify RoPE.

We provided evidence that high-frequencies bands are used to build circuitry that leads to positional attention. We provided a construction that ensures this behaviour and identified a strikingly similar construction within the attention heads of Gemma 7B. Furthermore we showed that NoPE is unable to do similar positional attention.

Finally we showed that low-frequencies are used for semantic attention, and argued that removing the RoPE rotation for these low-frequencies can potentially improve the ability to attend semantically and in particular to generalize such type of attention to longer contexts. We verified this through an ablation study with Gemma 2B. We overall see our work as providing a more nuanced understanding of RoPE and hope that it can lead to an improved understanding of how LLMs use positional encodings, ultimately leading to performance gains -- especially over long contexts.

\subsubsection*{Acknowledgments}
The authors would like to thank Anian Ruoss (Google DeepMind), Simon Osindero (Google
DeepMind), and Constantin Kogler (University of Oxford) for their valuable comments and suggestions on this work. We are also grateful to Timothy Nguyen (Google DeepMind) for initial conversations on the convergence of RoPE that ultimately led to Section \ref{sec:rope-decay}. 

\bibliography{iclr2025_conference}
\bibliographystyle{iclr2025_conference}

\appendix
\section{Proofs}

In this section we provide the omitted proofs for the theoretical statements in the paper. We also provide more detailed discussions on the constructions and ideas behind the proofs. In Section \ref{app:proofs:rope-distance} we provide proofs for Section \ref{sec:rope-decay}, in Section \ref{app:proofs:positional-attention} we provide proofs for Section \ref{app:proofs:positional-attention}, and in Section \ref{sec:low-frequencies} we provide proofs for Section \ref{app:proofs:semantic-attention}.

\subsection{Does RoPE decay with distance?}
\label{app:proofs:rope-distance}
In some of our proofs, we use known results on irrational rotations. We point to the literature for a detailed overview of these concepts \citep{masur2002rational}, although the results we leverage in our work are elementary. The first statement is about the `uniqueness' of irrational rotations.

\begin{lemma}
\label{lemma:irrational-rotation-zero}
Consider $g \in \mathbb{Q}$ with $g \neq 0$ and $n \in \mathbb{Z}$. Then, $ng \equiv 0 \pmod{2\pi}$ only when $n = 0$. In particular, this also holds if $g$ is algebraic.
\end{lemma}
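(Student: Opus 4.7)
The plan is to argue by contradiction using the irrationality of $\pi$, which I expect to reduce the lemma to an elementary case split. First I would unfold the modular statement: $ng \equiv 0 \pmod{2\pi}$ means there exists some integer $k$ with $ng = 2\pi k$. The goal is then to show that the only way this equation can be satisfied, given $g \in \mathbb{Q}\setminus\{0\}$, is $n = 0$ (and correspondingly $k=0$).

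Next I would split on the value of $k$. If $k = 0$, then $ng = 0$, and since $g \neq 0$ by hypothesis, we immediately obtain $n = 0$, as desired. If instead $k \neq 0$, then I would rearrange to isolate $\pi$ and write
\[
\pi \;=\; \frac{n g}{2k}.
\]
Because $g$ is rational and $n, 2k$ are nonzero integers, the right-hand side is a rational number; this contradicts the classical fact that $\pi$ is irrational. Hence the case $k \neq 0$ is impossible, completing the proof for rational $g$.

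For the strengthening to algebraic $g$, the same rearrangement applies verbatim: assuming $k \neq 0$ and $n \neq 0$, we would again obtain $\pi = ng/(2k)$, where now the right-hand side is algebraic (as a quotient of algebraic numbers with nonzero denominator). This contradicts the Lindemann--Weierstrass theorem, which asserts that $\pi$ is transcendental. I would cite this result rather than reprove it.

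The hard part is essentially non-existent: the argument is a one-line reduction to the (ir)rationality or transcendence of $\pi$. The only point requiring a bit of care is making sure that the case $k = 0$ is handled separately, since in that case the equation $ng = 2\pi k$ trivializes and does not involve $\pi$ at all; without this split one might mistakenly try to divide by $k$. I would also note that the lemma is stated for $g \in \mathbb{Q}$ in the hypothesis, so for the algebraic strengthening one should explicitly restate the hypothesis as $g$ algebraic with $g \neq 0$ to make the citation of Lindemann--Weierstrass transparent.
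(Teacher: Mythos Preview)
Your argument is correct and is the standard elementary reduction to the irrationality (resp.\ transcendence) of $\pi$. The paper does not actually supply a proof of this lemma; it simply states the result as a known fact about irrational rotations and points to the literature, so there is no approach to compare against beyond noting that your write-up is exactly the kind of self-contained justification one would expect here.
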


We now prove that given a non-zero query $\qb$ and a relative distance $r$, we can find a key $\kb$ such that the activation is maximal at distance $r$.  We note that the non-zero query assumption is minor as a zero query would imply with RoPE perfectly uniform attention as all activations would be equal to each other. The proof is by construction and involves rotating each $\qb^{(k)}$ chunk by $\rho(g_k)^{-r}$. We then apply Lemma \ref{lemma:irrational-rotation-zero} to show that the relative distance $r$ will give the maximum activation.

{\bf Proposition \ref{prop:arbitrary-decay-distance}.} \emph{Given any (non-zero) query $\qb$ and any relative distance $r \in \mathbb{Z}$, we can find a key $\kb$ such that the softmax value is largest at distance $r$ with RoPE.}
\begin{proof}
Consider a distance $r$, a query $\qb = \psib$ that is non zero by assumption, and  a key such that $\kb^{(k)} = \rho(g_k)^{r} \psib^{(k)}$ or equivalently $\kb = \Rb^{r}\psib$. Assume that the query is at position $i$ and the key at position $j \leq i$, we compute:

\begin{align*}
    \qb_i^\top \Rb^{j - i}\kb_j &= \psib^\top \Rb^{(j - i) + r}\psib \\ 
    &= \sum_{k = 1 \dots d/2}\left(\psib^{(k)}\right)^\top\rho(g_k)^{(j - i) + r} \psib^{(k)} \\
    &= \sum_{k = 1 \dots d/2}\left \lVert \psib^{(k)} \right \rVert^2 \cos\left((j-i + r)g_k \right).
\end{align*}

We now note that the maximum will only be achieved, by Lemma \ref{lemma:irrational-rotation-zero}, when $j - i = -r$, such that all $\cos\left((j-i + r)g_k \right) = \cos(0) = 1$, completing the proof. We note the minus sign coming from our convention that $j \leq i$, implying that $j - i \leq 0$. 
\end{proof}

We will later show that Gemma 7B seems to be learning a construction similar to the one above in practice in Section \ref{app:supplementary-evidence-constructions}. We next show that given queries and keys sampled from a standard multivariate Gaussian, their expectation will be $0$. This follows from the fact that the standard multivariate Gaussian is invariant to rotations (isotropic).

{\bf Proposition \ref{prop:guassian-rope-decay}.} \emph{Let $\qb, \kb \sim \mathcal{N}(\mathbf{0}, \Ib)$. Then, for any relative distance $r \in \mathbb{Z}$, we have that:}
\begin{equation*}
   \expec_{\qb, \kb \sim \mathcal{N}(\mathbf{0}, \mathbf{I})}\left[\qb^\top \Rb^{r} \kb \right] = 0. 
\end{equation*}

\begin{proof}
We will use the well-known fact that if $X \sim \mathcal{N}(\mathbf{\boldsymbol{\mu}}, \mathbf{\Sigma})$ is a (multivariate) Gaussian random variable, then the linear transformation $Y = \mathbf{A}X + \mathbf{b}$ gives another Gaussian random variable such that $Y \sim \mathcal{N}(\mathbf{A}\boldsymbol{\mu} + \mathbf{b}, \mathbf{A}\mathbf{\Sigma}\mathbf{A}^\top)$. We start by computing: 

\begin{align*}
    \expec_{\qb, \kb \sim \mathcal{N}(\mathbf{0}, \mathbf{I})}\left[\qb^\top \Rb^{r} \kb \right] &= \expec_{\qb, \kb \sim \mathcal{N}(\mathbf{0}, \mathbf{I})} \left[\sum_k \left(\qb^{(k)}\right)^\top \rho(g_k) \kb^{(k)}\right].
\end{align*}

Let $\tilde{\kb}^{(k)} = \rho(g_k) \kb^{(k)}$. As $\kb^{(k)} \sim \mathcal{N}(\mathbf{0}, \mathbf{I})$, we have that $\expec[\tilde{\kb}^{(k)}] = \rho(g_k) \mathbf{0} = \mathbf{0}$ and $\mathbb{V}[\tilde{\kb}^{(k)}] = \rho(g_k) \rho(g_k)^\top = \mathbf{I}$ as $\rho(g_k)$ is orthogonal. This implies that also $\tilde{\kb}^{(k)}$ is a standard Gaussian random variable, i.e. $\tilde{\kb}^{(k)} \sim \mathcal{N}(\mathbf{0}, \mathbf{I})$. We proceed with the change of variables: 

\begin{align*}
 \expec_{\qb, \kb \sim \mathcal{N}(\mathbf{0}, \mathbf{I})} \left[\sum_k \left(\qb^{(k)}\right)^\top \rho(g_k) \kb^{(k)}\right] &= \expec_{\qb, \tilde{\kb} \sim \mathcal{N}(\mathbf{0}, \mathbf{I})} \left[\sum_k \left(\qb^{(k)}\right)^\top  \tilde{\kb}^{(k)}\right]\\
    &= \expec_{\qb, \tilde{\kb} \sim \mathcal{N}(\mathbf{0}, \mathbf{I})} \left[\qb^\top \tilde{\kb} \right] \\
    &= \sum_k \expec_{\qb_i \sim \mathcal{N}(0, 1)}\left[\qb_i\right] \expec_{\kb_i \sim \mathcal{N}(0, 1)}\left[\kb_i\right] = 0.
\end{align*}
\end{proof}


\subsection{Positional attention patterns.}
\label{app:proofs:positional-attention}
We focus on the proofs regarding positional attention. We start by showing that NoPE cannot learn diagonal or off-diagonal attention. The proof is by counter-example, using a sequence that has repeated tokens.

{\bf Proposition \ref{prop:nope-no-pos-pattern}.} \emph{An attention head with NoPE \textbf{cannot} learn a diagonal or off-diagonal pattern.}
\begin{proof}
We start by proving the diagonal head claim. Recall that a diagonal attention head satisfies $\alpha_{i,i} > 1 - \epsilon$ for any $\epsilon > 0$.

Consider the sequence $[\xb_{bos}, \xb_1, \xb_1 ]$ that has a repeated token. We have that $\ab_{3,3} = \langle \qb_{3}, \kb_3 \rangle = \langle \qb_{3}, \kb_2 \rangle = \ab_{3,2}$. We compute the attention coefficient $\alpha_{3,3}:$ 

\begin{equation*}
    \alpha_{3,3} = \frac{\exp(\ab_{3,3})}{\exp(\ab_{3,1})+2\exp(\ab_{3,3})} = \frac{1}{\exp(\ab_{3,1} - \ab_{3,3}) + 2} < \frac{1}{2},
\end{equation*}

which contradicts the requirement that $\alpha_{3,3} > 1 - \epsilon$ for any $\epsilon > 0$.

We now focus on the second claim. Recall that an off diagonal head satisfies $\alpha_{i,i-1} > 1 - \epsilon$ for any $\epsilon > 0$. The same example contradicts this condition, as we have that $\alpha_{3,2} < \frac{1}{2}$.
\end{proof}

We instead show that RoPE is able to learn such attention patterns robustly. The construction is related to the construction used in the proof of Proposition \ref{prop:arbitrary-decay-distance}. 

{\bf Proposition \ref{thm:rope-pos-pattern}.} \emph{An attention head with RoPE \textbf{can} learn a diagonal or off-diagonal pattern.}
\begin{proof}
We start by proving the diagonal case. The construction involves setting all queries and keys equal, i.e. $\qb_i = \kb_j = \psib$ with $\psib$ not zero. To simplify notation and book-keeping, we focus on the case in which the hidden dimension is only $d=2$, i.e. only acts through a single rotation. We will then explain how to generalise to $d > 2$. We call the rotation $g$ and its matrix representation $\rho(g)$. 

We now note without RoPE that: 

\begin{equation*}
    \langle \qb_i, \kb_j \rangle = \left\lVert \qb_i \right\rVert \left\lVert \kb_j \right\rVert \cos(\theta_{i, j}) = \left\lVert \psib \right\rVert^2,
\end{equation*}

with $\theta_{i,j}$ the angle between $\qb_i$, $\kb_j$ which is $0$ as we have by construction that $\qb_i = \kb_j = \psib$. When using RoPE, the dot product becomes: 

\begin{align*}
    \ab_{i,j} &= \langle \rho(g)^i \qb_i, \rho(g)^j\kb_j \rangle \\
    &= \left\lVert \rho(g)^i \qb_i \right\rVert \left\lVert \rho(g)^j \kb_j \right\rVert \cos\left(\left(j-i \right)g +  \theta_{i, j}\right)\\ 
    &= \left\lVert \psib \right\rVert^2 \cos\left(\left(j-i \right)g\right),
\end{align*}

where we use the fact that $\left \lVert \rho(g)^i \xb \right \rVert = \left \lVert \xb \right \rVert$ as $\rho(g)^i$ is an isometry. Due to Lemma \ref{lemma:irrational-rotation-zero}, we have that $\left(j-i \right)g \equiv 0 \pmod{2\pi}$ only when $j = i$. This implies that $\ab_{ii} = \left \lVert \psib \right \rVert^2$ and $\ab_{ij} < \ab_{ii}$. We now compute $\alpha_{i,i}$: 

\begin{align*}
    \alpha_{i,i} &= \frac{\exp(\ab_{i,i})}{\sum_{k < i}\exp(\ab_{i,k}) + \exp(\ab_{i,i})} \\
    &= \frac{\exp\left(\left \lVert \psib \right \rVert^2\right)}{\sum_{k < i}\exp\left(\left \lVert \psib \right \rVert^2 \cos\left(\left(k-i \right)g \right)\right) + \exp\left(\left \lVert \psib \right \rVert^2\right)} \\
    &= \frac{1}{1 + \sum_{k < i}\exp\left(\left \lVert \psib \right \rVert^2 \cos\left(\left(k-i \right)g \right) - \left \lVert \psib \right \rVert^2\right)} \\
    &= \frac{1}{1 + \sum_{k < i}\exp\left(\left \lVert \psib \right \rVert^2\left( \cos\left(\left(k-i \right)g \right) - 1 \right)\right)}.
\end{align*}

We observe that $\cos\left(\left(k-i \right)g\right) < 1$ for $k \neq i$, implying that $\left \lVert \psib \right \rVert^2\left(\cos\left(\left(k-i \right)g\right) - 1\right) < 0$. In particular, we have that:

\begin{equation*}
    \sup_{\left \lVert \psib \right \rVert^2 \to \infty} \frac{1}{1 + \sum_{k < i}\exp\left(\left \lVert \psib \right \rVert^2\left( \cos\left(\left(k-i \right)g \right) - 1 \right)\right)} = 1,
\end{equation*}

meaning that indeed our construction satisfies $\alpha_{i,i} > 1 - \epsilon$ for any $\epsilon > 0$, with $\epsilon$ a function of $\left \lVert \psib \right \rVert^2$.

We now focus on the off-diagonal case. Here we set all queries $\qb_i = \psib$ and all keys $\kb_i = \rho(g) \psib = \phib$, for $\psib$ non-zero. We now observe:

\begin{align*}
    \ab_{i, i-1} &= \left\langle \rho(g)^{i} \qb_i, \rho(g)^{i-1} \kb_i \right \rangle \\
    &= \left\langle \rho(g)^{i} \psib, \rho(g)^{i-1} \rho(g) \psib \right \rangle \\
    &= \left\langle \rho(g)^{i} \psib, \rho(g)^{i} \psib \right \rangle \\
    &= \left \lVert \rho(g)^{i} \psib \right \rVert^2 \cos\left(\left(i -i \right)g \right) \\
    &= \left \lVert \psib \right \rVert^2.
\end{align*}

The same reasoning for the diagonal case then follows. For the high-dimensional case, it is sufficient to take $\kb_i = \Rb \psib$ or equivalently $\kb_i^{(k)} = \rho(g_k) \psib^{(k)}$. 
\end{proof}

In practice, Gemma 7B learns a construction very similar to that used in Theorem \ref{thm:rope-pos-pattern}, as we will show in Section \ref{app:supplementary-gemma-results}. Further, as softmax tends to be rather `sharp', the norm does not in practice have to become very large for the construction to be robust. This can be seen for example in Figure \ref{fig:diagonal-head-pattern-qks}, where the chunk norms are not above $10$, but the attention matrix allocates robustly attention on the diagonal.

\paragraph{Can NoPE still learn positional patterns?}
We clarify that our results prove that NoPE cannot learn diagonal or previous-token patterns given a \emph{head operating in isolation}. This is interesting because we found for instance that Gemma 7B learnt such patterns already at the first layer. Our proof shows that this would be impossible to learn robustly with NoPE. It is interesting to study the expressivity of an attention head operating in isolation because it aims to capture the `expressive efficiency' of a single attention head. Of course, here we make no claims that a sequence of attention heads cannot learn such patterns. In fact, \citet{kazemnejad2024impact}, prove that this should be possible, with an application of the Universal Approximation Theorem \citep{cybenko1989approximation}.

\subsection{Semantic attention.}
\label{app:proofs:semantic-attention}
The next statement is a useful result on irrational rotations, namely that they are \emph{dense} in $[0,2\pi]$. Intuitively, this means that given an irrational rotation $g$, we can find an integer $n$ such that $ng \pmod{2\pi}$ is arbitrarily close to any $g' \in [0, 2\pi]$. The statement is commonly shown via a pigeon-hole argument. 

\begin{lemma}
\label{lemma:rotations-dense}
The subset $\{ng \pmod{2\pi}: n \in \mathbb{Z}\} \subset [0, 2\pi]$ is dense in $[0, 2\pi]$. 
\end{lemma}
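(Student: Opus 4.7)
The plan is to prove the density claim by the classical pigeonhole argument suggested in the hint. Throughout, I will implicitly take $g/(2\pi)$ to be irrational, since otherwise the orbit $\{ng \pmod{2\pi}\}$ is finite and the claim fails; this is the case of interest and is consistent with the earlier Lemma \ref{lemma:irrational-rotation-zero}, which rules out $ng \equiv 0 \pmod{2\pi}$ for $n \neq 0$.

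The first step is to manufacture an arbitrarily small nonzero multiple of $g$ modulo $2\pi$. Fix $\epsilon > 0$ and pick an integer $N$ with $2\pi/N < \epsilon$. Partition $[0, 2\pi)$ into $N$ half-open intervals of length $2\pi/N$, and reduce the $N+1$ points $0, g, 2g, \dots, Ng$ modulo $2\pi$. By pigeonhole, two of these reductions, say those with indices $i < j$, land in the same interval, hence $(j-i)g \pmod{2\pi}$ lies in $(-\epsilon, \epsilon)$. Irrationality forces this reduction to be nonzero, so setting $m := j - i$ and $\delta := mg \pmod{2\pi}$ produces $0 < |\delta| < \epsilon$.

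The second step uses $\delta$ to approximate any target $g' \in [0, 2\pi]$. Consider the finite arithmetic progression $\{k\delta \pmod{2\pi} : k = 0, 1, \dots, \lceil 2\pi/|\delta|\rceil\}$. Consecutive members differ on the circle by $|\delta| < \epsilon$, so this set is an $\epsilon$-net of $[0, 2\pi]$ and in particular contains a point within $\epsilon$ of $g'$. Since $k\delta \equiv (km)g \pmod{2\pi}$, the integer $n := km$ satisfies $|ng \pmod{2\pi} - g'| < \epsilon$, which establishes density.

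The only mild subtlety is making the ``consecutive members differ by $|\delta|$'' step compatible with reduction modulo $2\pi$: when the progression wraps around, the wrap itself is a jump of size $|\delta|$, so the $\epsilon$-net property is preserved. I do not anticipate any real obstacle beyond this bookkeeping; the result is essentially a direct application of pigeonhole.
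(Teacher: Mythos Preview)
Your argument is correct and is exactly the classical pigeonhole proof the paper has in mind; in fact the paper does not spell out a proof at all but simply remarks that ``the statement is commonly shown via a pigeon-hole argument,'' so you have faithfully filled in those details.
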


We now show a simple, but useful Lemma regarding a required condition for `sharpness', namely that the activation entering the softmax has to be the largest to have an attention coefficient larger than $1/2$.

\begin{lemma}
\label{lemma:softmax-less-than-half}
Let $\ab \in \R^n$ be a sequence of $n$ activations entering a softmax such that $\alpha = \softmax\left(\ab \right) \in \R^n$. Denote by $\ab_i, \alpha_i$ the $i$-th activation and attention coefficient, respectively. If $\ab_i < \ab_j$ for some $j$, then $\alpha_i \leq \frac{1}{2}$.
\end{lemma}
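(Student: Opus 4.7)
The plan is a direct computation using positivity of the exponential together with strict monotonicity. First, I would write out the softmax definition $\alpha_i = \exp(\ab_i) / \sum_{k} \exp(\ab_k)$ and observe that, because every term in the sum is strictly positive, dropping all terms except those indexed by $i$ and $j$ only decreases the denominator. This yields the lower bound
\begin{equation*}
\sum_{k} \exp(\ab_k) \geq \exp(\ab_i) + \exp(\ab_j),
\end{equation*}
which is the only structural step needed.

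Next, I would invoke the hypothesis $\ab_i < \ab_j$ together with strict monotonicity of the exponential to get $\exp(\ab_j) > \exp(\ab_i)$, whence $\exp(\ab_i) + \exp(\ab_j) > 2\exp(\ab_i)$. Combining the two inequalities gives
\begin{equation*}
\alpha_i = \frac{\exp(\ab_i)}{\sum_{k} \exp(\ab_k)} \leq \frac{\exp(\ab_i)}{\exp(\ab_i) + \exp(\ab_j)} < \frac{\exp(\ab_i)}{2 \exp(\ab_i)} = \frac{1}{2},
\end{equation*}
which establishes the claim (in fact with strict inequality under the stated strict hypothesis; the $\leq$ in the statement accommodates the limiting case $\ab_i = \ab_j$).

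There is no genuine obstacle here: the result is a one-line consequence of positivity of $\exp$ and the fact that the denominator is a sum that dominates any two of its terms. The only thing worth being careful about is making sure one keeps track of where the inequality is weak versus strict, so that the conclusion $\alpha_i \leq 1/2$ is correctly reported rather than erroneously strengthened in a way that might fail in an equality edge case; this is the kind of bookkeeping detail that is easy to get right but worth flagging.
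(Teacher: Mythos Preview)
Your argument is correct. The paper's proof reaches the same conclusion by a slightly different route: it observes that $\alpha_i < \alpha_j$ (same monotonicity step) and then argues by contradiction using the sum-to-one constraint of softmax---if $\alpha_i > \tfrac{1}{2}$ then $\alpha_j > \alpha_i > \tfrac{1}{2}$ forces $\alpha_i + \alpha_j > 1$. Your direct denominator bound and the paper's normalisation-based contradiction are equally elementary and rest on the same two facts (positivity and strict monotonicity of $\exp$); the only practical difference is that your version makes the strict inequality $\alpha_i < \tfrac{1}{2}$ explicit, which you correctly flag.
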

\begin{proof}
We have that $\alpha_i = \frac{\exp\left(\ab_i\right)}{Z}$ with $Z = \sum_{k} \exp\left(\ab_k\right)$. We first note that if $\ab_i < \ab_j$, then $\alpha_i < \alpha_j$ as $Z$ is constant and $\exp$ is monotonically increasing. Now assume by contradiction that $\alpha_i > \frac{1}{2}$, we then have that $\alpha_j > \alpha_i > \frac{1}{2}$, implying that $\alpha_i + \alpha_j > 1$. This violates the sum-to-one constraint of the softmax function, implying that $\alpha_i \leq \frac{1}{2}$ if $\ab_i < \ab_j$ for some $j$.
\end{proof}

We are now ready to show the result regarding the instability of RoPE. The intuition of the proof is that depending on the position in the sequence, the rotation can make the dot product positive or negative. This means that we can always find permutations, given a long enough sequence, that will make the activation of the desired element smaller than another element. 

{\bf Theorem \ref{thm:rope-loses-focus}.} \emph{Let the hidden dimension be $d=2$, i.e. there is only a single RoPE frequency $g_1$. Consider a sequence $\xb_{bos}, \xb_1, \dots, \xb_N$ with a target token at some position $\xb_{n}$. Then for large enough $N$, the attention head with RoPE is unable to attend to such a token such that $\alpha_{i, n} > 1 - \epsilon$ for any $\epsilon > 0$ and any $1 \leq n \leq N$.}
\begin{proof}
Assume the desired element is at position $n$, that the head is correctly attending to it such that $\alpha_{i, n} > 1 - \epsilon$ for all $\epsilon > 0$ and $i \geq n$, and that $N$ is large enough. By Lemma \ref{lemma:softmax-less-than-half}, the fact that $\alpha_{i, n} > 1 - \epsilon$ implies that $\ab_{i, n} > \ab_{i, j}$ for all $j$. As $\ab_{i,n}$ has to be the largest element, we have two cases. Either $\ab_{i,n} \leq 0$ is non-positive, implying that $\ab_{i,j} < 0$ for all $j \neq k$, or $\ab_{i,n} > 0$ is positive. 

We start with the first case, where $\ab_{i,n} \leq 0$. Consider an element at position $k \neq n$. We have that: 

\begin{equation*}
    \ab_{i, k} = \left \lVert \qb_i \right \rVert \left \lVert \kb_{k} \right \rVert \cos\left( \theta_{i,k} + g_1\left(k - i\right) \right),
\end{equation*}

where $\theta_{i,k}$ is the angle between $\qb_i$ and $\kb_{k}$. As $\ab_{i,k} < 0$, we must have that $\theta_{i,k} + g_1\left(k - i\right) \pmod{2\pi} \in \left(\frac{\pi}{2}, \frac{3\pi}{2}\right)$ (making the cosine negative). It is sufficient to \emph{swap} the element at position $k$ with some element $k'$ such that $\theta_{i,k} + g_1\left(k' - i\right) \pmod{2\pi} \in \left(0, \frac{\pi}{2}\right) \cup \left(\frac{3\pi}{2}, 2\pi\right)$, which is always possible for large enough $N$ by Lemma \ref{lemma:rotations-dense}. This swap would make $\ab_{i,k'} > 0$, which by Lemma \ref{lemma:softmax-less-than-half}, implies that now $\alpha_{i, k} \leq \frac{1}{2}$. 

In other words, above we have shown that if an attention head can attend successfully at a certain position on a sequence that is long enough, certain swaps of the tokens in the sequence guarantee that this attention head would attend to the wrong element. In particular, if all activations are negative, it is sufficient to perform a swap that makes a non-target element positive.

We now focus on the case $\ab_{i, n} > 0$. The principle is the same, but we need to apply two swaps. The first swaps $n$ with $k \neq n$, such that for the target token now at position $k$ we have that $\ab_{i,k} < 0$ after the swap. If after the swap $\ab_{i,k}$ is not the largest element anymore, we are done. If not, we need to perform a second swap with $k'\neq k'' \neq k$, which swaps an element at position $k'$ with $k''$ such that now $\ab_{i,k'} > 0$. This forces the target element to not be the largest anymore. We can then apply again Lemma \ref{lemma:softmax-less-than-half} to complete the proof.

In other words, if some activations are positive, we need to perform two swaps. First we swap the target activation such that it becomes negative. Then, we are either done or we need to perform another swap making a non-target activation positive. These two cases imply the desired statement.
\end{proof}

\section{Additional discussions}
\label{app:additional-discussions}
In this section, we provide additional discussions omitted from the main part of the manuscript. We start by providing, in Section \ref{app:related-works} a more thorough discussion on related works. In Section \ref{app:other-positional-encodings}, we comment on how we believe our work and findings relate to positional encodings other than RoPE. In Section \ref{app:randomised-pes}, we comment on the work of \cite{ruoss2023randomized}, which proposes a process that randomises the positions of tokens. 

\subsection{Additional related works}
\label{app:related-works}
Our work is related to the field of mechanistic interpretability \citep{olah2020zoom, elhage2021mathematical} -- whose focus is that of `reverse engineering' models and more specifically in our case, LLMs. A number of works have studied and identified interesting behaviours of attention heads. Notable examples are induction heads \citep{olsson2022context}, chain-of-thought heads \citep{dutta2024think}, indirect object identification heads \citep{wang2022interpretability}, multiple-choice heads \citep{lieberum2023does}, content-gatherer heads \citep{merullo2023circuit}, successor heads \citep{gould2023successor}, attention sinks \citep{darcet2023vision}, comparator heads \citep{hanna2024does}, and retrieval heads \citep{wu2024retrieval}. Works have also exploited known patterns to speed-up inference time  \citep{ge2023model}. In our work, we identify an `apostrophe head' and `positional heads' -- as far as we are aware we are the first to identify the apostrophe head, while the diagonal and previous-token heads are instead likely to be already known.

While the identification of circuits and different types of attention heads constitutes an interesting direction, our work is conceptually different as it focuses on answering the question \emph{how does RoPE help to construct such heads?} We believe this to be a rather new type of mechanistic interpretability approach as it is concerned with a `lower level' of interpretability, in which one studies \emph{how the Transformer implements} specific heads. In other words, we are more interested in the efficiency of the implementation rather than only limiting ourselves to understanding the high-level functionality. We believe that studies such as this one are important to better our understanding of the contributions of different components of LLMs, to ultimately improve them.

\paragraph{}

\subsection{Other positional encodings} 
\label{app:other-positional-encodings}
In our work, we focused on RoPE as we believe it is one of the most relevant positional encodings used today, being used for example by Gemma and Llama 3 models. We also find RoPE to have a particularly interesting geometric structure, that ties nicely with the dot product. These two reasons motivated our in-depth study of the method. 

Regardless, we believe that our approach and findings can be transferred more broadly. First of all, we would like to point out the strong similarity between RoPE and the sinusoidal absolute encodings (APE) proposed by \cite{vaswani2017attention}, with both encoding position through a range of frequencies of sines and cosines. The fact that APE is added to the queries and keys and not applied as a linear map is a significant difference and would be interesting to explore in more detail. We suspect however that our findings relating to the frequency usage to be somewhat transferable, with the lower frequencies being used to carry semantics and the high frequencies providing a method to encode positional attention patterns.

A popular method that is less similar to RoPE is Alibi \citep{press2021train}. We suspect that prior to this work, the common belief would be that Alibi and RoPE behaved in an analogous way, by decaying attention coefficients with distance. This is clearly true with Alibi. We however showed in this work that this is not the case with RoPE. Analysing Alibi in a way similar to this work would therefore be an interesting future direction of research to better understanding what kind of patterns these different positional encodings encourage. Overall, we hope that the understanding gained through our work will help the community to better classify different positional encodings.

\subsection{Randomised Positional Encodings}
\label{app:randomised-pes}
A recent work by \cite{ruoss2023randomized} proposes to `randomise' the positions of RoPE, showing that this process helps to boost long-context generalisation performance. In particular, they propose given some maximum training context $N$, to choose some $L > N$, and to then sample without replacement from $1 \dots L$ $N$ elements, ordering them from smallest to largest, providing them as the positions during training and iteratively re-sampling. The motivation is that this process would allow the model to train using larger positions, helping with out-of-distribution generalisation to long sequences. We believe instead that this technique is instead helpful for a more subtle reason.

Providing during training random positions invites the model to learn some soft form of invariance to relative distance. The model has to in fact learn to produce the same output given different sampled positions when optimising the training loss. While pure invariance with RoPE cannot be achieved, the best the model can do when using RoPE is to rely on the very lowest frequencies to minimise the training loss. In other words, randomising the positions according to \cite{ruoss2023randomized} pushes the model to use the lowest frequencies. Testing this is laborious and outside the scope of our work, but we believe this to be an interesting future direction. 

Furthermore, this randomised process has another side-effect, namely it increases the average relative distance. The effect of this increase in relative distance is that it will increase the decay of the activations. In Figure \ref{fig:rope-decay-ablation} (a), we show the effect of increasing the sampling upper bound $L$ on a constant vector of all-ones (up to normalisation). We see how increasing $L$ makes the decay occur faster, but that all curves seem to finally converge around $0$. Instead in (b), we show that this does not affect the activation patterns when sampling from a Gaussian. This serves to counter the original claim that this process helps as the model will see larger relative distances. In fact, we see that as we increase the relative distance, the effect on the activations becomes more and more indistinguishable.

\begin{figure}[]
    \centering
    \subfloat[\centering RoPE and Random RoPE applied to constant queries and keys.]{{\includegraphics[width=0.45\textwidth]{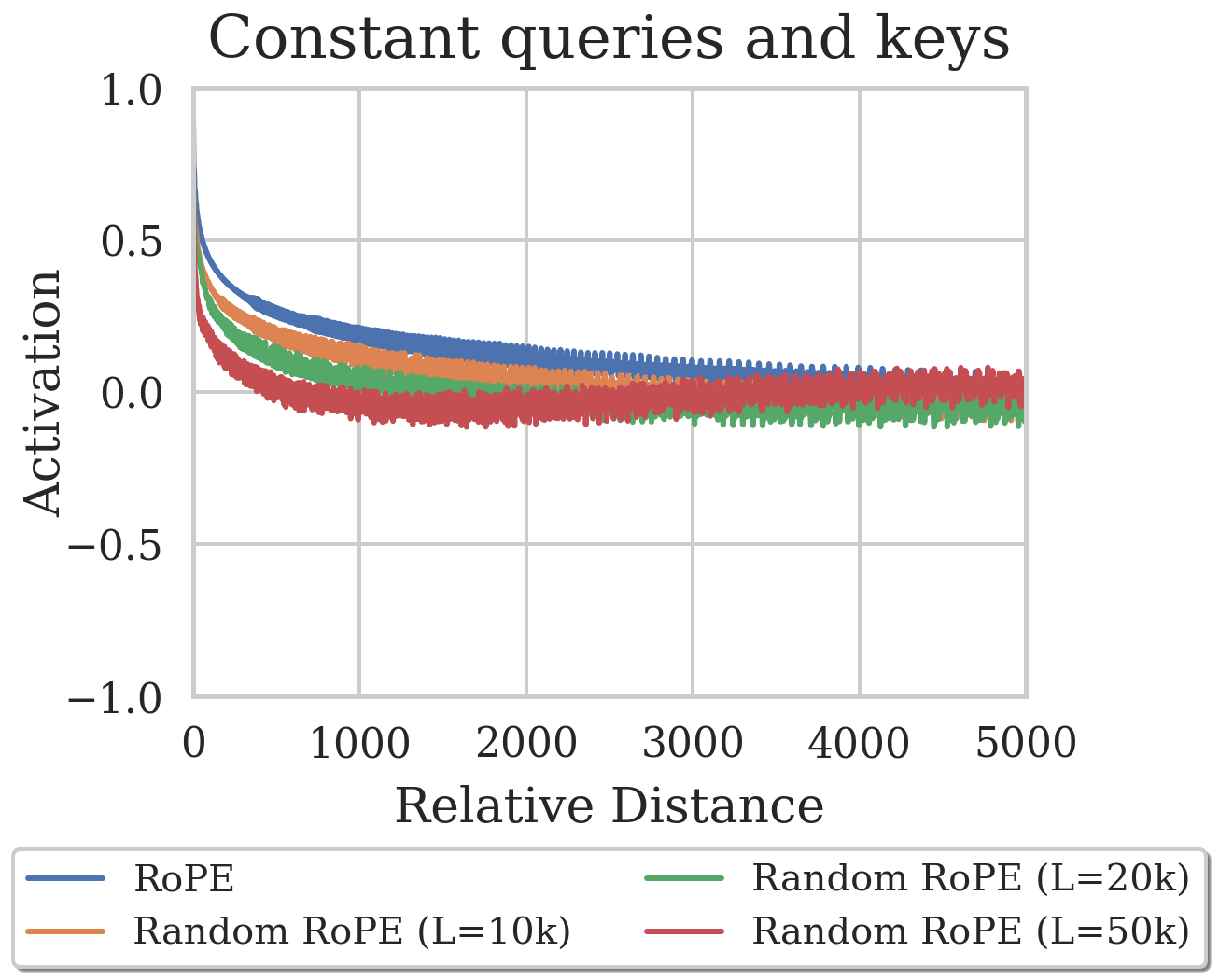} }}%
    \qquad
    \subfloat[\centering RoPE and Random RoPE applied to Gaussian queries and keys.]{{\includegraphics[width=0.45\textwidth]{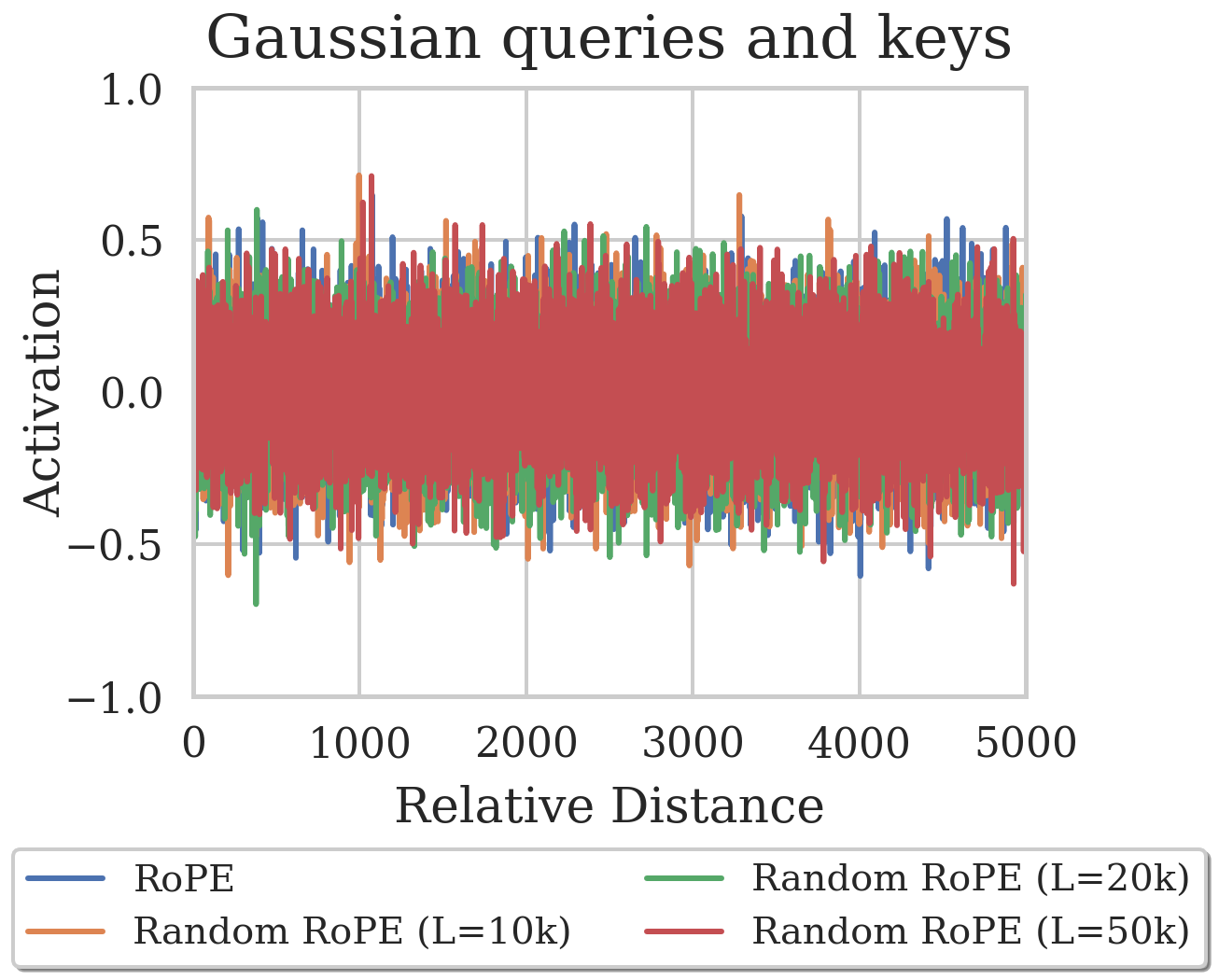} }}%
    \caption{RoPE and Random RoPE \citep{ruoss2023randomized} applied to either (a) constant `all-ones' queries and keys or (b) queries and keys with entries sampled IID from a Gaussian. We vary the $L$ parameter, showing that in (a), this tends to increase the decay rate.}%
    \label{fig:rope-decay-ablation}%
    \vspace{-10pt}
\end{figure}

\subsection{Additional results on synthetic activation decay}
We provide in Figure \ref{fig:constant-gaussian-extra} a further ablation where we show that two random Gaussian random vectors do not show decay properties. The difference between the plots from Figure \ref{fig:rope-decay} (b) is that here we do not sample a different Gaussian RV for each position in the sequence, but keep it constant. This provides supporting evidence for the fact that the decay visible in Figure \ref{fig:rope-decay} (a) is not only for constant queries and keys, but also that the queries and keys need to be `perfectly aligned'.

\begin{figure}[h!]
    \centering
    \includegraphics[width=0.6\textwidth]{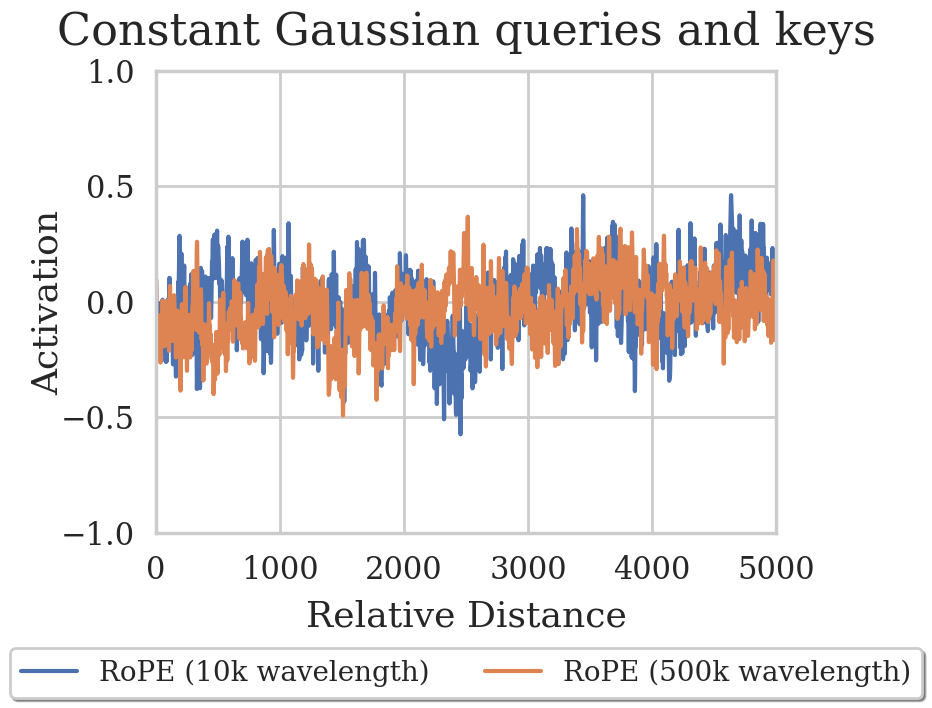}
    \caption{RoPE applied to `constant' Gaussian random vectors. We sample a single Gaussian RV as a key and a single Gaussian RV as a query. We then repeat these vectors to form a sequence of length $n$. There is no clear evidence of decay.}
    \label{fig:constant-gaussian-extra}
\end{figure}

\subsection{Limitations of our work}
\label{app:limitations}
As with other mechanistic interpretations, it is not clear that our description covers the entire spectrum of how LLMs use the RoPE encoding. While, the magnitude of the key entries corresponding to middle band of frequencies is considerably smaller on average, it is not $0$, and therefore it could still play a role. Our ablation of removing the low frequencies, while promising and helping to significantly validate our claims, could be improved with larger scale experimentation to asses the improvements specifically on large contexts -- which we believe to be an important application of this work. Of course, such experimentation would be much more resource intensive.

Our current evaluation of $p$-RoPE involves the training of 2B parameter Gemma models from scratch and checking the perplexity on a validation set. Of course, lower perplexity does not necessarily mean better downstream performance \citep{kuribayashi2021lower}. The goal of our work was to better understand RoPE. We found that our findings naturally lead to $p$-RoPE and we were happy to see that it showed strong initial signal. We leave to future work the more careful evaluation of $p$-RoPE as we believe this to be outside the scope of this paper.

\section{Llama3.1 8B}
\label{app:llama}
In this section, we demonstrate that similar patterns also occur in the released LLama3.1 8B model \citep{dubey2024llama, llama20243}. The Gemma and Llama models we study are of course similar architecturally, but have significantly different design choices. For instance, Llama 8B uses Grouped Query Attention (GQA) \citep{ainslie2023gqa} and uses a higher wavelength parameter of $500$k. Further, the Llama3.1 models are trained with a much larger context of 128k tokens.

Figure \ref{fig:supplementary-mean-norm-query-keys-llama} shows the results of the frequency analysis we report in Figure \ref{fig:mean-norm-query-keys}. We see that the queries (a) and keys (b) both seem to have higher activation towards the slower frequencies. Interestingly, we find that the start of the high norm bands occurs at approximately $500${\small,}$000^{-40/64} \approx 0.0001$, which is roughly close to where they appear in the Gemma model with the much lower max wavelength parameter at $10{\small,}000^{-0.8} \approx 0.0006$. This suggests that the effect of increasing this wavelength is to effectively provide to the model more `slow enough' frequencies that seem to be useful for the model to use. As in the Gemma model, the value vectors do not have such a distribution of norms also in the Llama model.

\begin{figure}%
    \centering
    \subfloat[\centering Mean query norm distribution at each layer.]{{\includegraphics[width=0.45\textwidth]{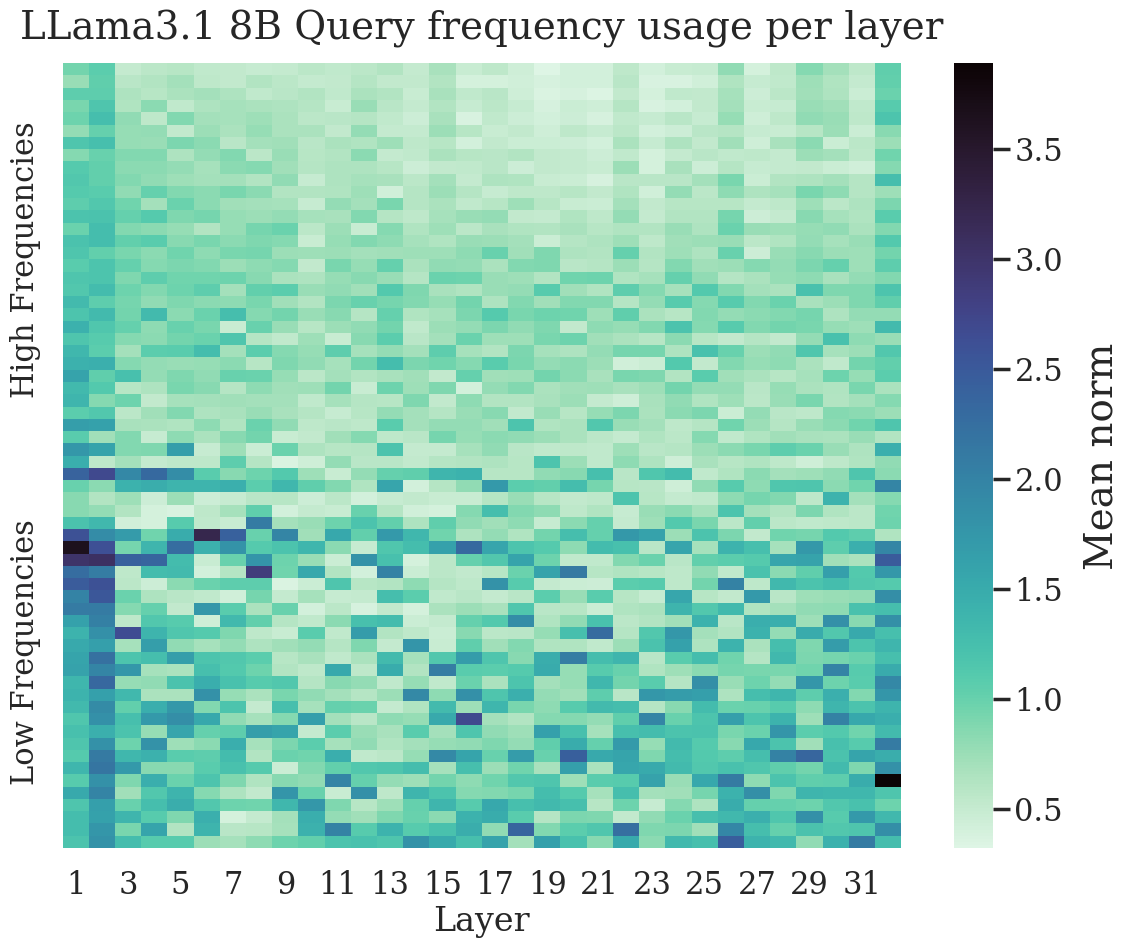} }}%
    \qquad
    \subfloat[\centering Mean key norm distribution at each layer.]{{\includegraphics[width=0.45\textwidth]{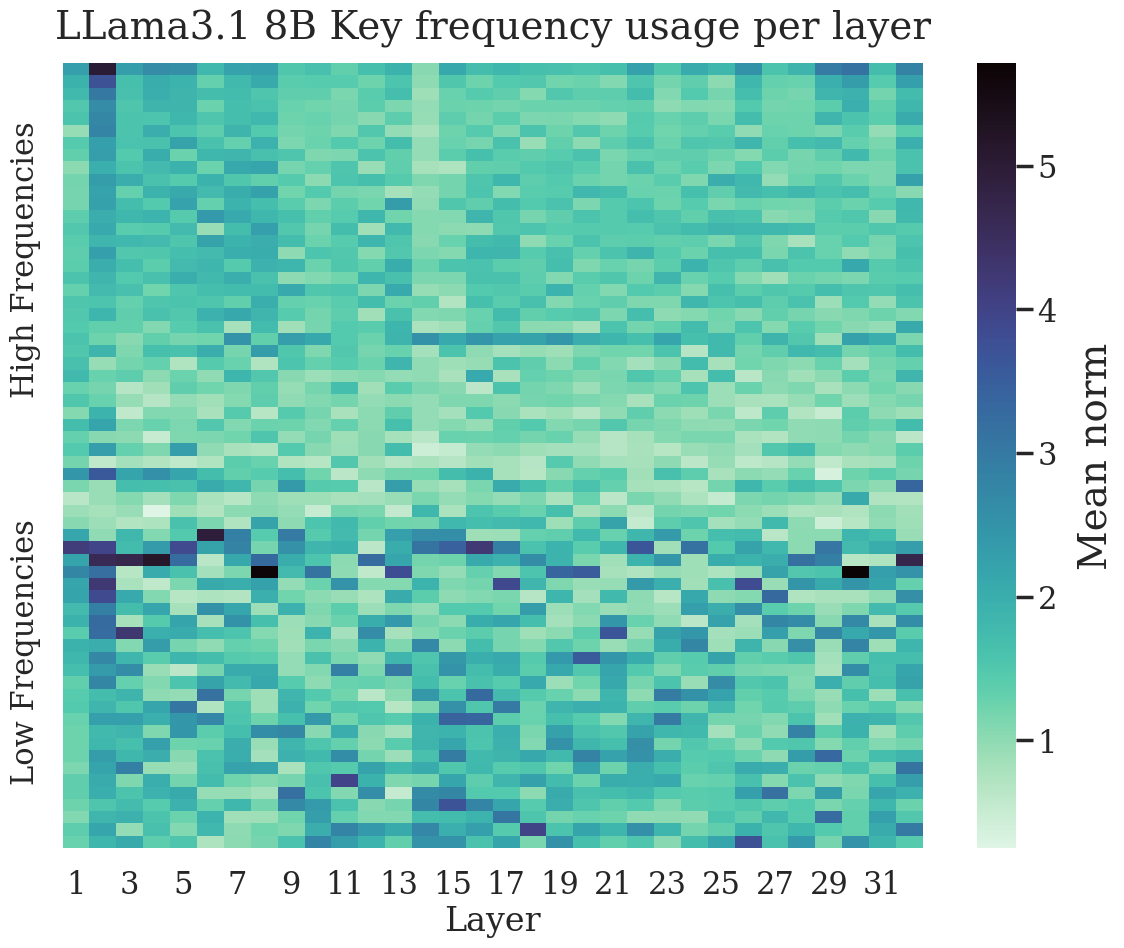} }}%
    \qquad
    \subfloat[\centering Value key norm distribution at each layer.]{{\includegraphics[width=0.45\textwidth]{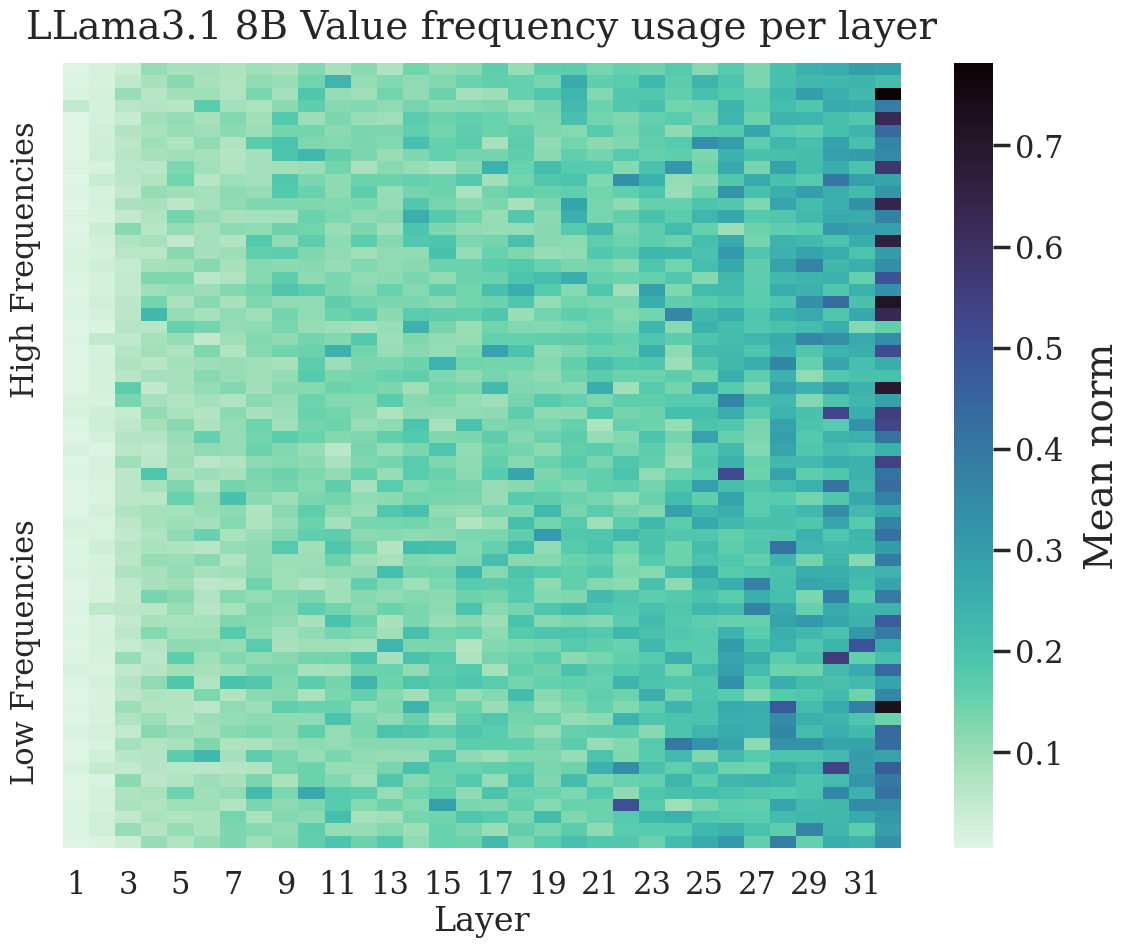} }}%
    \caption{Norm plotted over $2$-dimensional chunks of Queries (a), keys (b), and (c) Values for each layer in Llama 3.1 8B, corresponding to different RoPE frequencies. The same process of frequency display does not show any clear pattern at each layer as RoPE is not applied to the values. This supports the claim that the patterns shown in the queries and keys are due to RoPE. The queries and keys seem to prefer to use the lower frequencies. }%
    \label{fig:supplementary-mean-norm-query-keys-llama}%
    \vspace{-10pt}
\end{figure}

We further provide evidence that the diagonal head pattern also occurs in Llama3.1 8B. In Figure \ref{fig:llama-diag-head}, we show the frequency usage of $4$ attention heads located at the 32nd (last) layer in the 4th head group. We highlight the 3rd head of this group that seems to use the high frequencies much more prominently. This head is in fact a diagonal attention head.

\begin{figure}[h!]
    \centering
    \includegraphics[width=0.6\textwidth]{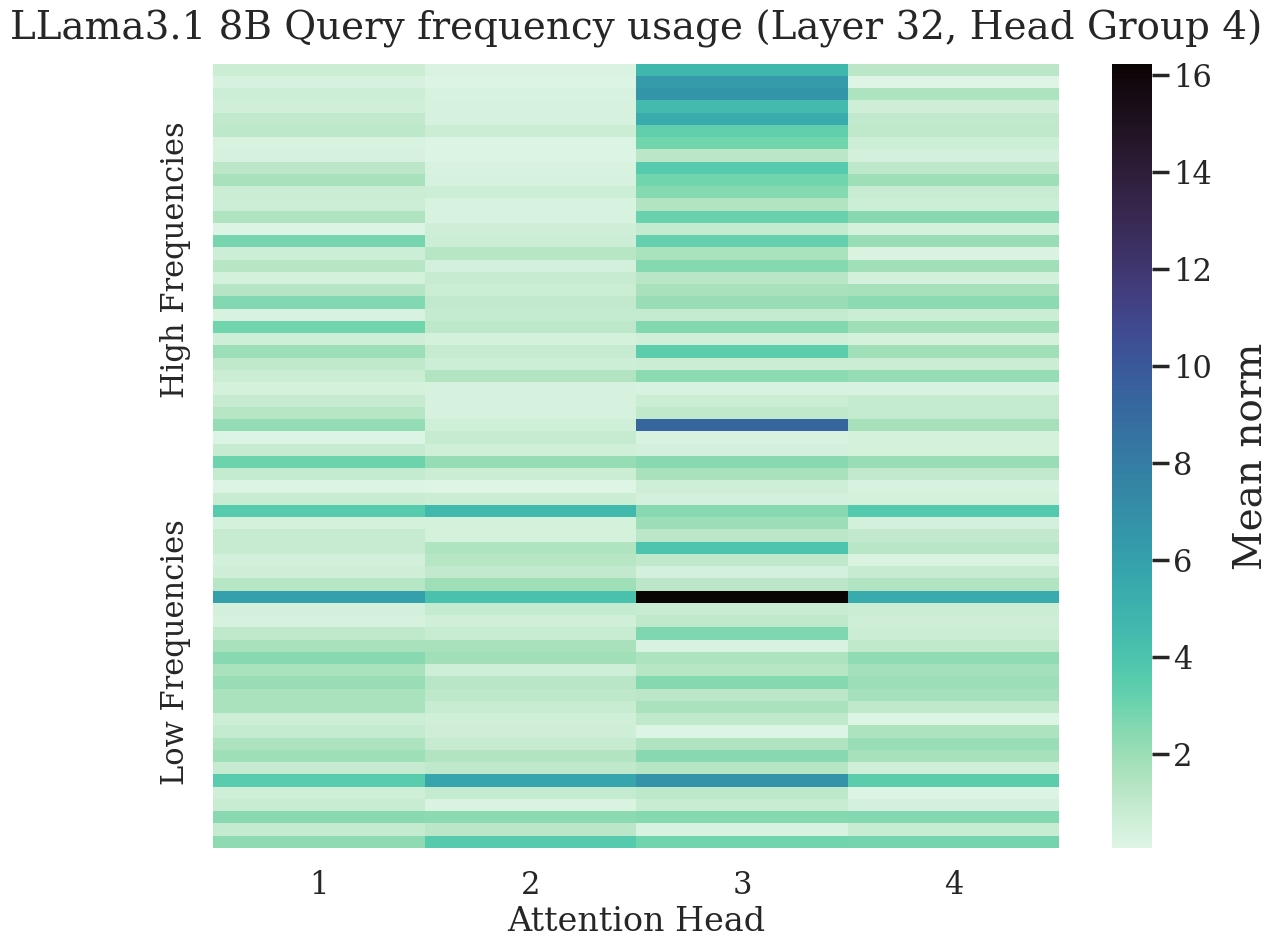}
    \caption{Frequency usage of the query vectors in LLama3.1 8B. We focus on the 32nd (last) layer and the 4th head group. Attention head 3, which has clearly more dominant high frequency usage, is a diagonal attention head.}
    \label{fig:llama-diag-head}
\end{figure}

\section{Supplementary $p$-RoPE Experimental Details and Results}
\label{app:supplementary-p-rope}

We start by highlighting that mechanisms similar to $p$-RoPE have been discovered -- although we were unaware of this at the time of writing this work. An example is this GitHub Issue that suggests to use \emph{partial rotary embeddings} (RoPE\textsubscript{partial}): \url{https://github.com/lucidrains/x-transformers/issues/40}. Other works have also found RoPE\textsubscript{partial} to be useful \citep{black2022gpt, liu2024deepseek}.

We believe RoPE\textsubscript{partial} and $p$-RoPE to be similar in nature as they provide parts of the queries and keys that are void of rotations. However, given our study, we believe $p$-RoPE to be better motivated. Importantly, our study also helps to provide some insights into why a method such as partial-RoPE might be useful, which might be of independent interest regardless. We found in our experiments $p$-RoPE to outperform RoPE\textsubscript{partial}.

In our additional experiments we train Gemma 2B from scratch separately on two datasets: English Wikipedia (\texttt{Wiki}) and \texttt{FlanV2}. 

\texttt{Wiki} is a dataset based on English Wikipedia articles, built from the Wikipedia dump. Each sample contains the contents of a full Wikipedia article, with processing done to strip markdown and unwanted sections. The dataset is available at: \url{https://www.tensorflow.org/datasets/catalog/wikipedia}. There are $6${\small,}$672${\small,}$479$ documents and the total size is of $19.88$ GiB. We held out $10\%$ of the data randomly for the validation split.

\texttt{FlanV2} was introduced by \cite{longpre2023flancollectiondesigningdata}. It is a dataset for instruction tuning which combines collections from FLAN, P3/T0, and Natural Instructions with dialog, program synthesis, and complex reasoning tasks. The dataset contains $15${\small,}$000${\small,}$000$ samples.

In all experiments we train for $10${\small,}$000$ steps using a batch size of $512$ and a sequence length of $8${\small,}$192$. We fix the wavelength to the standard value of $10${\small,}$000$. We use the standard Gemma 2B architecture \citep{team2024gemma}. For $p$-RoPE, we use the algorithm \texttt{apply\_p\_rope} described below.

\begin{Verbatim}[commandchars=\\\{\}]
\PYG{k}{def} \PYG{n+nf}{apply\PYGZus{}p\PYGZus{}rope}\PYG{p}{(}
    \PYG{n}{inputs}\PYG{p}{:} \PYG{n}{jax}\PYG{o}{.}\PYG{n}{Array}\PYG{p}{,}    \PYG{c+c1}{\PYGZsh{} [B, L]}
    \PYG{n}{positions}\PYG{p}{:} \PYG{n}{jax}\PYG{o}{.}\PYG{n}{Array}\PYG{p}{,} \PYG{c+c1}{\PYGZsh{} [B, L]}
    \PYG{n}{head\PYGZus{}dim}\PYG{p}{:} \PYG{n+nb}{int}\PYG{p}{,}
    \PYG{n}{max\PYGZus{}wavelength}\PYG{p}{:} \PYG{n+nb}{int} \PYG{o}{=} \PYG{n}{\PYGZus{}MAX\PYGZus{}WAVELENGTH}\PYG{p}{,}
    \PYG{n}{rope\PYGZus{}percentage}\PYG{p}{:} \PYG{n+nb}{float} \PYG{o}{=} \PYG{l+m+mf}{1.0}\PYG{p}{,}
\PYG{p}{)} \PYG{o}{\PYGZhy{}\PYGZgt{}} \PYG{n}{jax}\PYG{o}{.}\PYG{n}{Array}\PYG{p}{:}
  \PYG{l+s+sd}{\PYGZdq{}\PYGZdq{}\PYGZdq{}Applies p\PYGZhy{}RoPE.\PYGZdq{}\PYGZdq{}\PYGZdq{}}
  \PYG{n}{rope\PYGZus{}angles} \PYG{o}{=} \PYG{n+nb}{int}\PYG{p}{(}\PYG{n}{rope\PYGZus{}percentage} \PYG{o}{*} \PYG{n}{head\PYGZus{}dim} \PYG{o}{//} \PYG{l+m+mi}{2}\PYG{p}{)}
  \PYG{n}{nope\PYGZus{}angles} \PYG{o}{=} \PYG{n}{head\PYGZus{}dim} \PYG{o}{//} \PYG{l+m+mi}{2} \PYG{o}{\PYGZhy{}} \PYG{n}{rope\PYGZus{}angles}

  \PYG{n}{fraction} \PYG{o}{=} \PYG{l+m+mf}{2.} \PYG{o}{*} \PYG{n}{jnp}\PYG{o}{.}\PYG{n}{arange}\PYG{p}{(}\PYG{l+m+mi}{0}\PYG{p}{,} \PYG{n}{rope\PYGZus{}angles}\PYG{p}{)} \PYG{o}{/} \PYG{n}{head\PYGZus{}dim}
  \PYG{n}{timescale} \PYG{o}{=} \PYG{n}{max\PYGZus{}wavelength}\PYG{o}{**}\PYG{n}{fraction}
  \PYG{n}{timescale} \PYG{o}{=} \PYG{n}{jnp}\PYG{o}{.}\PYG{n}{pad}\PYG{p}{(}
      \PYG{n}{max\PYGZus{}wavelength}\PYG{o}{**}\PYG{n}{fraction}\PYG{p}{,}
      \PYG{p}{(}\PYG{l+m+mi}{0}\PYG{p}{,} \PYG{n}{nope\PYGZus{}angles}\PYG{p}{),}
      \PYG{n}{mode}\PYG{o}{=}\PYG{l+s+s1}{\PYGZsq{}constant\PYGZsq{}}\PYG{p}{,}
      \PYG{n}{constant\PYGZus{}values}\PYG{o}{=}\PYG{p}{(}\PYG{l+m+mi}{0}\PYG{p}{,} \PYG{n}{jnp}\PYG{o}{.}\PYG{n}{inf}\PYG{p}{)}
  \PYG{p}{)}

  \PYG{n}{sinusoid\PYGZus{}inp} \PYG{o}{=} \PYG{p}{(}
      \PYG{n}{positions}\PYG{p}{[}\PYG{o}{...}\PYG{p}{,} \PYG{n}{jnp}\PYG{o}{.}\PYG{n}{newaxis}\PYG{p}{]} \PYG{o}{/} \PYG{n}{timescale}\PYG{p}{[}\PYG{n}{jnp}\PYG{o}{.}\PYG{n}{newaxis}\PYG{p}{,} \PYG{n}{jnp}\PYG{o}{.}\PYG{n}{newaxis}\PYG{p}{,} \PYG{p}{:]}
  \PYG{p}{)}
  \PYG{n}{sinusoid\PYGZus{}inp} \PYG{o}{=} \PYG{n}{sinusoid\PYGZus{}inp}\PYG{p}{[}\PYG{o}{...}\PYG{p}{,} \PYG{n}{jnp}\PYG{o}{.}\PYG{n}{newaxis}\PYG{p}{,} \PYG{p}{:]}
  \PYG{n}{sin} \PYG{o}{=} \PYG{n}{jnp}\PYG{o}{.}\PYG{n}{sin}\PYG{p}{(}\PYG{n}{sinusoid\PYGZus{}inp}\PYG{p}{)}
  \PYG{n}{cos} \PYG{o}{=} \PYG{n}{jnp}\PYG{o}{.}\PYG{n}{cos}\PYG{p}{(}\PYG{n}{sinusoid\PYGZus{}inp}\PYG{p}{)}

  \PYG{n}{first\PYGZus{}half}\PYG{p}{,} \PYG{n}{second\PYGZus{}half} \PYG{o}{=} \PYG{n}{jnp}\PYG{o}{.}\PYG{n}{split}\PYG{p}{(}\PYG{n}{inputs}\PYG{p}{,} \PYG{l+m+mi}{2}\PYG{p}{,} \PYG{n}{axis}\PYG{o}{=\PYGZhy{}}\PYG{l+m+mi}{1}\PYG{p}{)}
  \PYG{n}{first\PYGZus{}part} \PYG{o}{=} \PYG{n}{first\PYGZus{}half} \PYG{o}{*} \PYG{n}{cos} \PYG{o}{\PYGZhy{}} \PYG{n}{second\PYGZus{}half} \PYG{o}{*} \PYG{n}{sin}
  \PYG{n}{second\PYGZus{}part} \PYG{o}{=} \PYG{n}{second\PYGZus{}half} \PYG{o}{*} \PYG{n}{cos} \PYG{o}{+} \PYG{n}{first\PYGZus{}half} \PYG{o}{*} \PYG{n}{sin}
  \PYG{n}{out} \PYG{o}{=} \PYG{n}{jnp}\PYG{o}{.}\PYG{n}{concatenate}\PYG{p}{([}\PYG{n}{first\PYGZus{}part}\PYG{p}{,} \PYG{n}{second\PYGZus{}part}\PYG{p}{],} \PYG{n}{axis}\PYG{o}{=\PYGZhy{}}\PYG{l+m+mi}{1}\PYG{p}{)}
  \PYG{k}{return} \PYG{n}{out}\PYG{o}{.}\PYG{n}{astype}\PYG{p}{(}\PYG{n}{inputs}\PYG{o}{.}\PYG{n}{dtype}\PYG{p}{)}
\end{Verbatim}

\section{Supplementary Gemma 7B Results}
\label{app:supplementary-gemma-results}

We provide supplementary results for our analysis of Gemma 7B. We start by providing, in Section \ref{app:supplementary-evidence-constructions}, evidence supporting our claims regarding the types of constructions being learnt by Gemma 7B. We end by providing in Section \ref{app:supplementary-frequency-plots}, the query, key, and attention pattern plots for all of the attention heads we study in the main part of the manuscript.

\subsection{Constructions learnt by Gemma 7B}
\label{app:supplementary-evidence-constructions}
We explore the various constructions discussed in the main part of the manuscript. As a small aside for some geometric intuition, we note that the queries and keys in Gemma 7B are $256$-dimensional. If they were randomly sampled from a Gaussian, their dot product should be very close to $0$. This follows from the fact that high-dimensional Gaussian random vectors are close to orthogonal. 

\paragraph{Diagonal heads.}
We start by focusing on diagonal attention heads. In particular, we focus on the diagonal attention heads from Figure \ref{fig:supplementary-diagonal-first-layer} and Figure \ref{fig:supplementary-diagonal-last-layer}. The hypothesis we want to verify is that the attention heads are learning to set queries and keys approximately equal to each-other, following the construction from Theorem \ref{thm:rope-pos-pattern}.

To measure how close the queries and keys are, we leverage the Cauchy-Schwarz inequality. In particular, we know that:

\begin{equation*}
    \frac{1}{\sqrt{d}}\qb_i^\top \Rb^{j - i}\kb_j = \frac{1}{\sqrt{d}} \left \lVert \qb_i \right \rVert \left \lVert \kb_j \right \rVert \cos\left(\theta_{i,j} + g_{i,j}\right)  \leq \frac{1}{\sqrt{d}}\left \lVert \qb_i \right \rVert \left \lVert \kb_j \right \rVert,
\end{equation*}

meaning that the quantity $\left \lVert \qb_i \right \rVert \left \lVert \kb_j \right \rVert$ serves as an upper-bound, achieved when the queries and keys are perfectly aligned. We highlight that it is important to introduce the normalisation factor $1/\sqrt{d}$ to get a meaningful bound. We then compare the activations (logits) to this upper bound. If the queries and keys are indeed aligned, the logits should be close to the upper bound. In Table \ref{table:layer1-head5-dotprod} and Table \ref{table:layer27-head8-dotprod}, we showcase that this seems to be the case for the diagonal attention heads occurring at Layer 1 (Head 5) and Layer 27 (Head 8). The previous-token activations instead in comparison decay significantly in these heads.

\begin{table}[]
\begin{center}
\begin{tabular}{lcccccccc}
\toprule
Token & \texttt{BOS} & 1 & 2 & 3 & 4 & 5 & 6 & 7 \\ \midrule
Upper Bound & 3.74 & 182.34 & 203.65 & 483.97 & 141.48 & 130.71 & 168.52 & 122.31 \\
Diagonal & 3.25 & 119.60 & 156.93 & 383.00 & 98.47 & 92.15 & 137.10 & 90.71 \\
Previous-token & N/A & -8.45 & 76.67 & 82.13 & 29.01 & 45.33 & 73.04 & 55.25 \\
\bottomrule
\end{tabular}
\end{center}
\caption{Activations for a diagonal head (Layer 1 Head 5). The upper bound is given by Cauchy-Schwarz. The diagonal activations are close to the upper-bound, meaning that the head is setting queries and keys to be very aligned to each other. The previous-token activations instead in comparison decay significantly.}
\label{table:layer1-head5-dotprod}
\end{table}

\begin{table}[]
\begin{center}
\begin{tabular}{lcccccccc}
\toprule
Token & \texttt{BOS} & 1 & 2 & 3 & 4 & 5 & 6 & 7 \\ \midrule
Upper Bound & 7.91 & 64.42 & 112.43 & 51.97 & 52.69 & 67.10 & 116.41 & 67.37 \\
Diagonal & 6.05 & 55.84 & 96.52 & 45.75 & 44.00 & 53.83 & 105.47 & 57.78  \\
Previous-token & N/A & -9.63 & 32.51 & 29.49 & 25.96 & 20.08 & 19.95 & 6.07 \\
\bottomrule
\end{tabular}
\end{center}
\caption{Activations for a diagonal head (Layer 27 Head 8). The upper bound is given by Cauchy-Schwarz. The diagonal activations are close to the upper-bound, meaning that the head is setting queries and keys to be very aligned to each other.}
\label{table:layer27-head8-dotprod}
\end{table}

\paragraph{Previous-token head.} We perform a similar measurement for the previous-token head (Figure \ref{fig:supplementary-previous-token-head}). An analogous situation occurs in which the previous-token activations seem to be much closer to the upper bound given by Cauchy-Schwarz, as seen in Table \ref{table:layer1-head8-dotprod}. We find that in comparison to the diagonal heads, the gap to the upper bound is larger. This is expected, as the head has to learn to model $\Rb$, which is a more complicated block-diagonal rotation matrix. Nevertheless, some tokens in particular are surprisingly aligned, supporting our claims. 

\begin{table}[]
\begin{center}
\begin{tabular}{lcccccccc}
\toprule
Token & \texttt{BOS} & 1 & 2 & 3 & 4 & 5 & 6 & 7 \\ \midrule
Upper Bound & 3.85 & 76.54 & 79.13 & 102.41 & 61.34 & 54.54 & 56.62 & 57.2 \\
Diagonal & -1.58 & 3.29 & 5.21 & -1.46 & 4.23 & 4.61 & 2.47 & 3.08 \\
Previous-token & N/A & 31.04 & 35.31 & 18.91 & 19.72 & 18.96 & 17.46 & 13.67 \\
\bottomrule
\end{tabular}
\end{center}
\caption{Activations for a previous-token head (Layer 1 Head 8). The upper bound is given by Cauchy-Schwarz. While the diagonal activations seem to be close to $0$, the previous-token activations are significantly larger and closer to the upper-bound.}
\label{table:layer1-head8-dotprod}
\end{table}

\paragraph{Apostrophe head.}
We now analyse the construction of the `apostrophe attention head' -- see Section \ref{sec:low-frequencies}. In particular, we identified this head as one that attends to a previous-token apostrophe if it exists and to the \texttt{BOS} token, otherwise. We are particularly interested in understanding what the low frequency band is being used for and how it acts as a semantic channel. We will show in this section that this semantic channel is being used to provide the \texttt{BOS} attention part of the pattern. In particular, this band is very positive when the token being attended to is the \texttt{BOS} token, and very negative otherwise. In Figure \ref{fig:activation-pattern-apostophe}, we show the raw activations of the attention head. The column corresponding to the \texttt{BOS} token has activations of value $\approx 2$, while the apostrophe tokens on the off-diagonal token have attention values of $\approx 7$. The other values instead range from $\approx 0$ to the lowest of $\approx -12$, with all values being negative. These activations show how the head operates. It will default to attending to the $\texttt{BOS}$ token, but will instead be dominated by an apostrophe token if it appears on the off-diagonal as it will have a more positive activation.

We now study the contribution from the low frequency, clearly visible in Figure \ref{fig:supplementary-apostrophe-head} in the queries and keys. The semantic channel occurs at $g_{119} = 10${\small,}$000^{-2(119 - 1)/256} \approx 0.0002$ -- one of the lowest frequencies out of the $128$ frequencies ($d=256$) in Gemma 7B. As the context length is of $8k$, we get that the maximal rotation available is of $8000g_{119} \approx 1.64 \approx \frac{\pi}{2}$ radians, meaning that this channel is relatively stable amounting to a quarter of a full rotation, even at the largest context lengths. By Cauchy-Schwarz, as the norms corresponding to these frequencies are very large on average compared to the other frequencies, we expect this to `dominate' the dot-product.

The queries corresponding to this frequency for the non \texttt{BOS} tokens are all set approximately to the same vector $\qb^{(119)}_{not \texttt{BOS}} \approx [-4.1, 11.3 ]^\top$ and similarly the keys $\kb^{(119)}_{not \texttt{BOS}} \approx [11.2, -3.5]^\top$. Instead, we have that $\qb^{(119)}_{\texttt{BOS}} \approx [0.7, -1.9]^\top$, and $\kb^{(119)}_{\texttt{BOS}} \approx [-2.5, 1.3]^\top$. We then note that $\left(\qb^{(119)}_{not \texttt{BOS}}\right)^\top \kb^{(119)}_{not \texttt{BOS}} \approx -85.5$, while $\left(\qb^{(119)}_{not \texttt{BOS}}\right)^\top \kb^{(119)}_{\texttt{BOS}} \approx +24.9$. In other words, this channel contributes a very negative activation for tokens attending to non-\texttt{BOS} tokens, and very positive activations to tokens attending to the \texttt{BOS} token. As the the underlying frequencies are very low, this mechanism will be stable given the limited context length of Gemma 7B. We however expect this mechanism to break over long enough sequences, in accordance with Theorem \ref{thm:rope-loses-focus}. This provides evidence supporting our claims that the lowest frequencies are used as semantic channels, as this kind of behaviour will be most robust exactly for those channels that provide the lowest frequencies.

\begin{figure}[h!]
    \centering
    \includegraphics[width=0.6\textwidth]{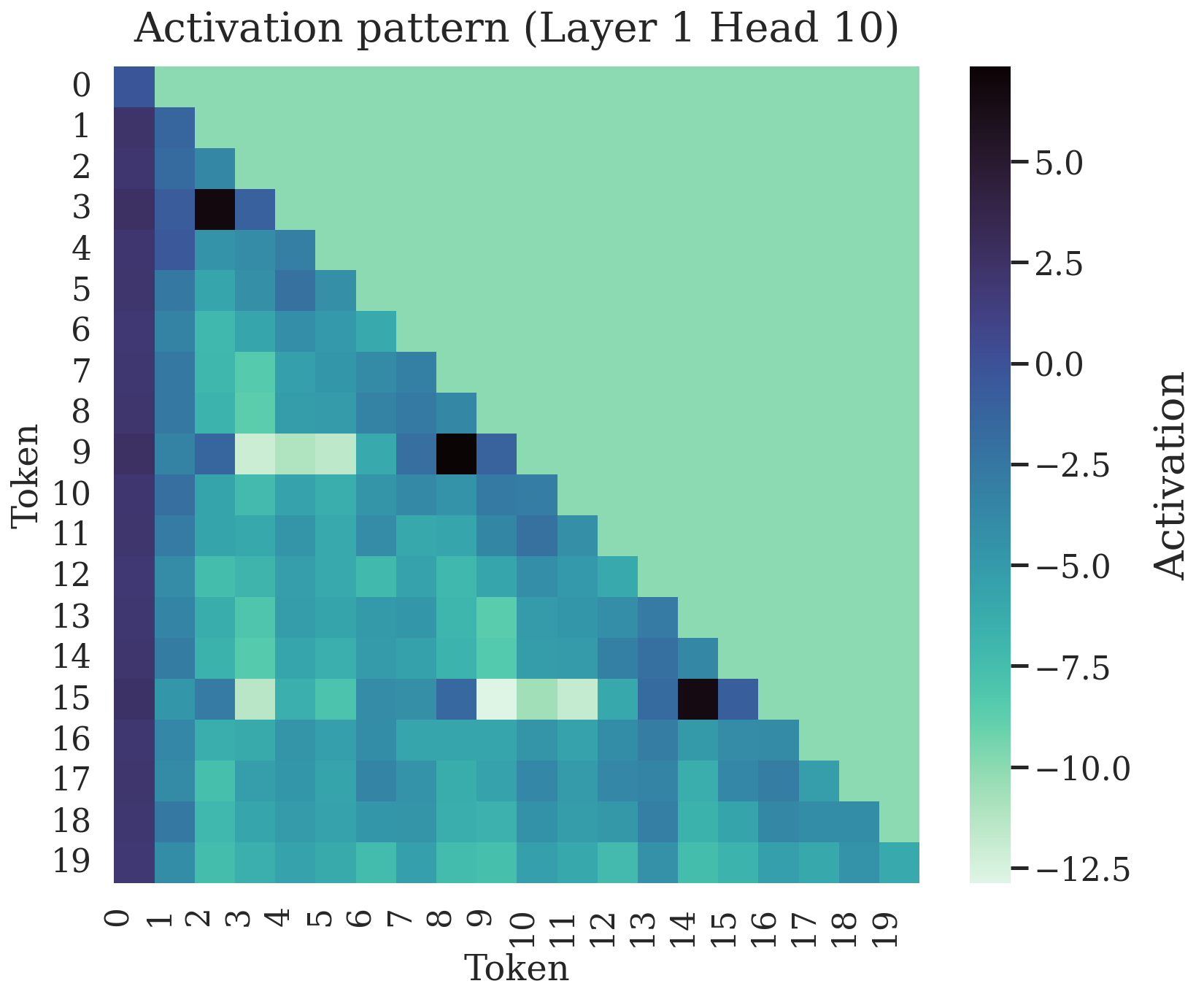}
    \caption{Activation pattern of the `apostrophe head' in Gemma 7B. The upper triangular mask is artificially set to the value of $-10$ for clarity of the visualisation, but in reality takes a value of $\approx -10^{30}$. Token $0$ is the \texttt{BOS} token. The $3$ darkest activations on the off-diagonal correspond to tokens after an apostrophe token attending to the apostrophe token.}
    \label{fig:activation-pattern-apostophe}
\end{figure}

\subsection{Analysis of patterns in different attention heads}
\label{app:supplementary-frequency-plots}

\newpage
\begin{figure}%
    \centering
    \subfloat[\centering Mean query norm distribution at each layer.]{{\includegraphics[width=0.45\textwidth]{figures/query-chunk-distribution.png} }}%
    \qquad
    \subfloat[\centering Mean key norm distribution at each layer.]{{\includegraphics[width=0.45\textwidth]{figures/key-chunk-distribution.png} }}%
    \qquad
    \subfloat[\centering Value key norm distribution at each layer.]{{\includegraphics[width=0.45\textwidth]{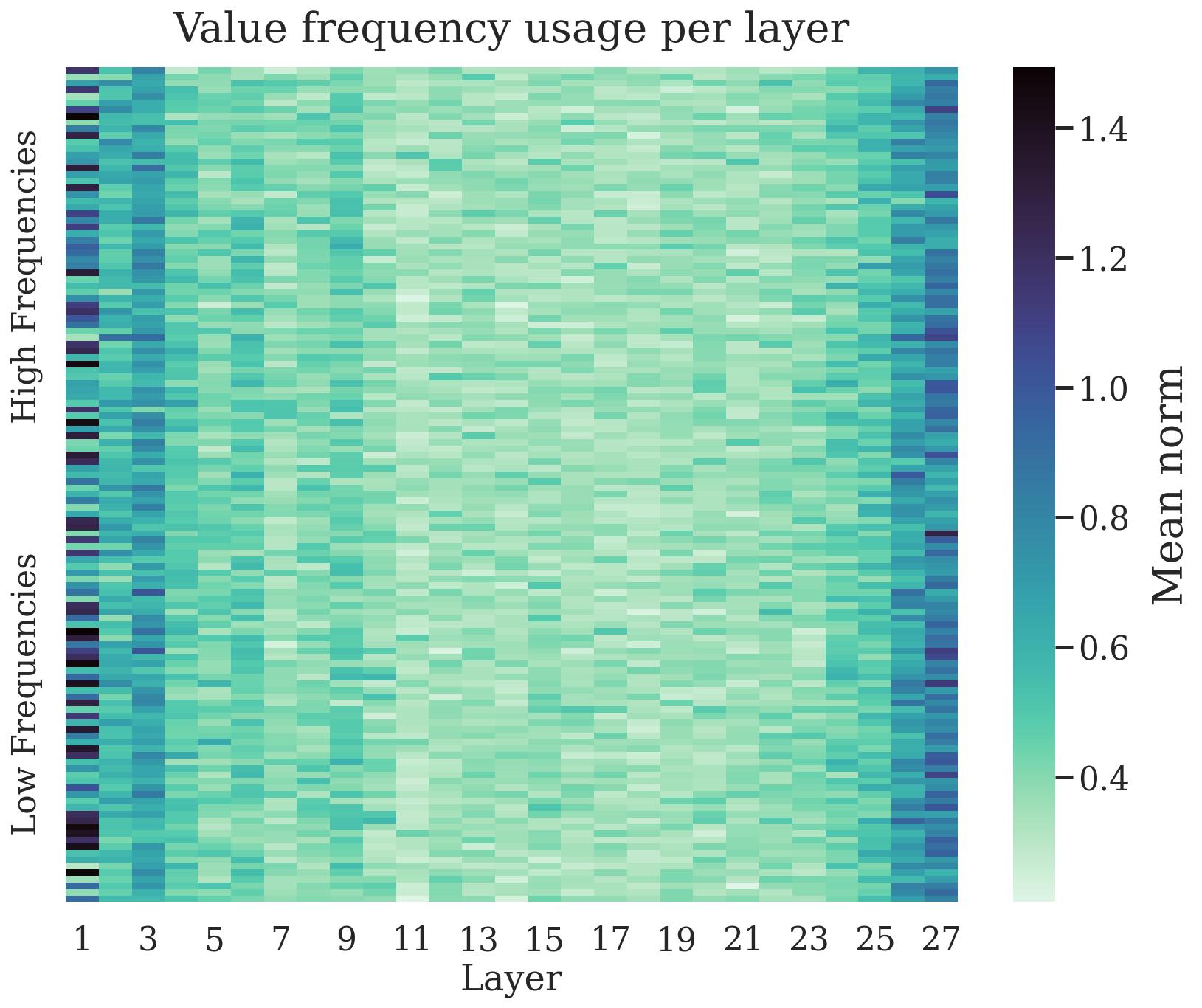} }}%
    \caption{Norm plotted over $2$-dimensional chunks of Queries (a), keys (b), and (c) Values for each layer in Gemma 7B, corresponding to different RoPE frequencies. The same process of frequency display does not show any clear pattern at each layer as RoPE is not applied to the values. This supports the claim that the patterns shown in the queries and keys are due to RoPE. }%
    \label{fig:supplementary-mean-norm-query-keys}%
    \vspace{-10pt}
\end{figure}

\begin{figure}%
    \centering
    \subfloat[\centering]{{\includegraphics[width=0.45\textwidth]{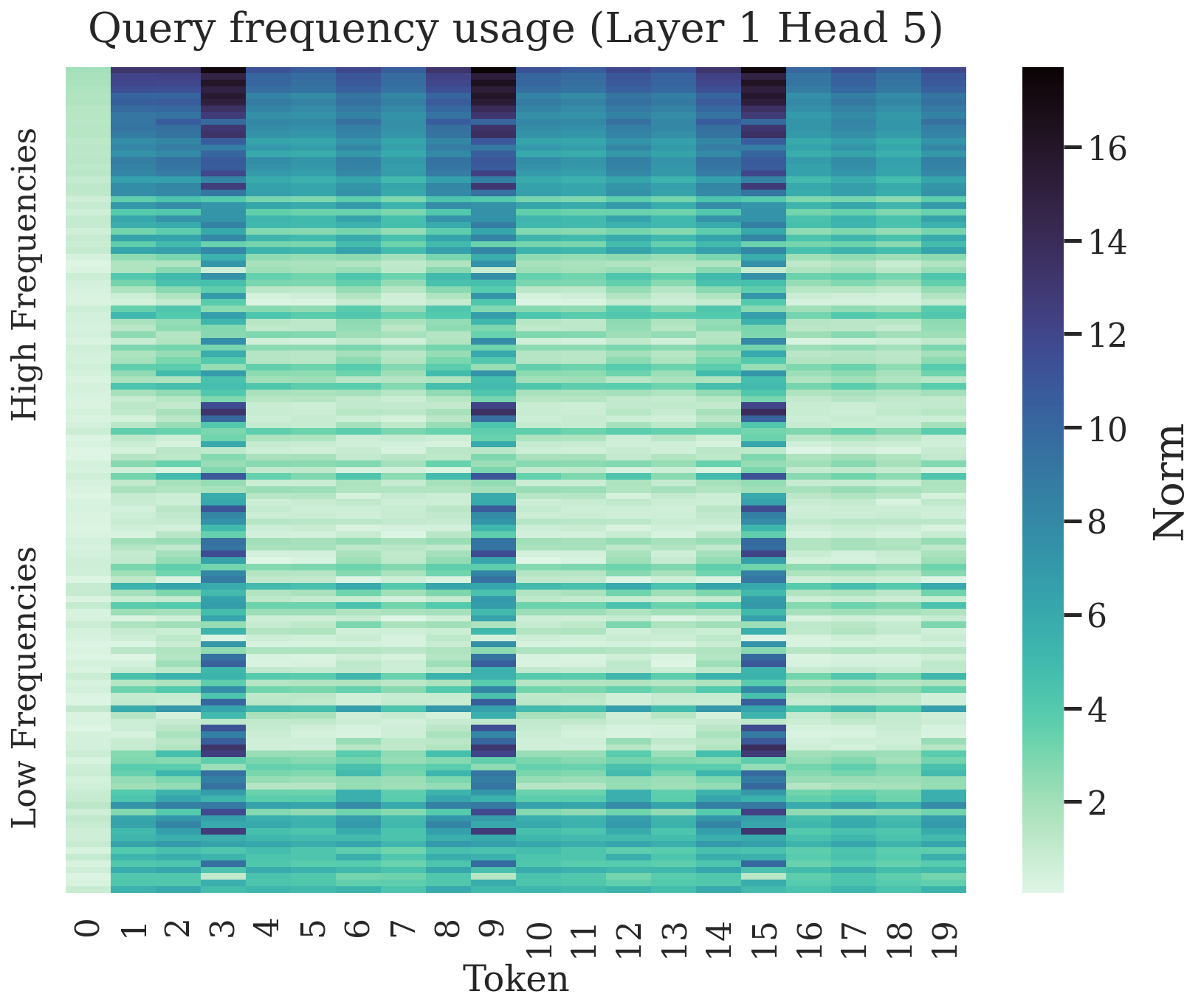} }}%
    \qquad
    \subfloat[\centering]{{\includegraphics[width=0.45\textwidth]{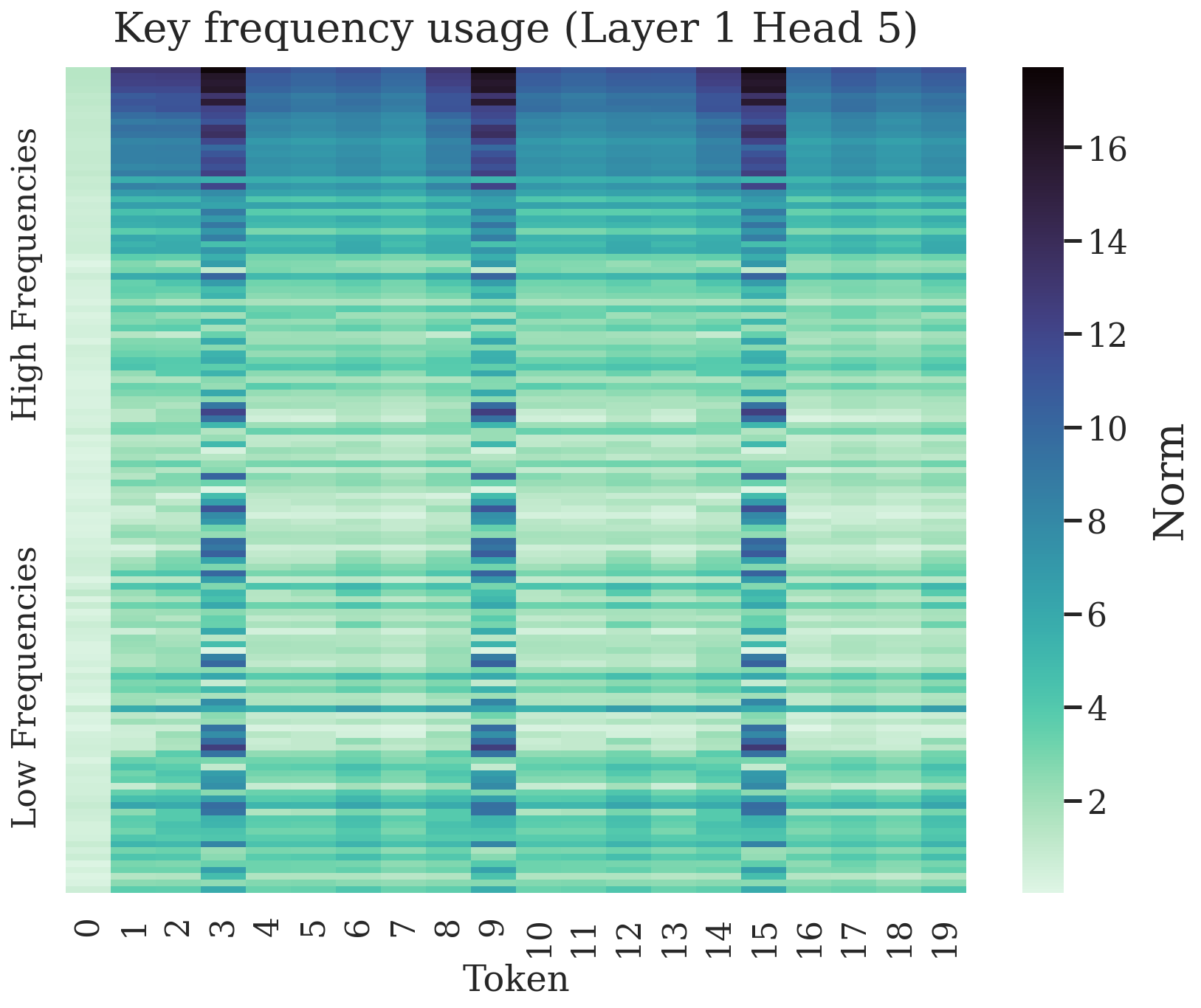} }}%
    \qquad
    \subfloat[\centering]{{\includegraphics[width=0.45\textwidth]{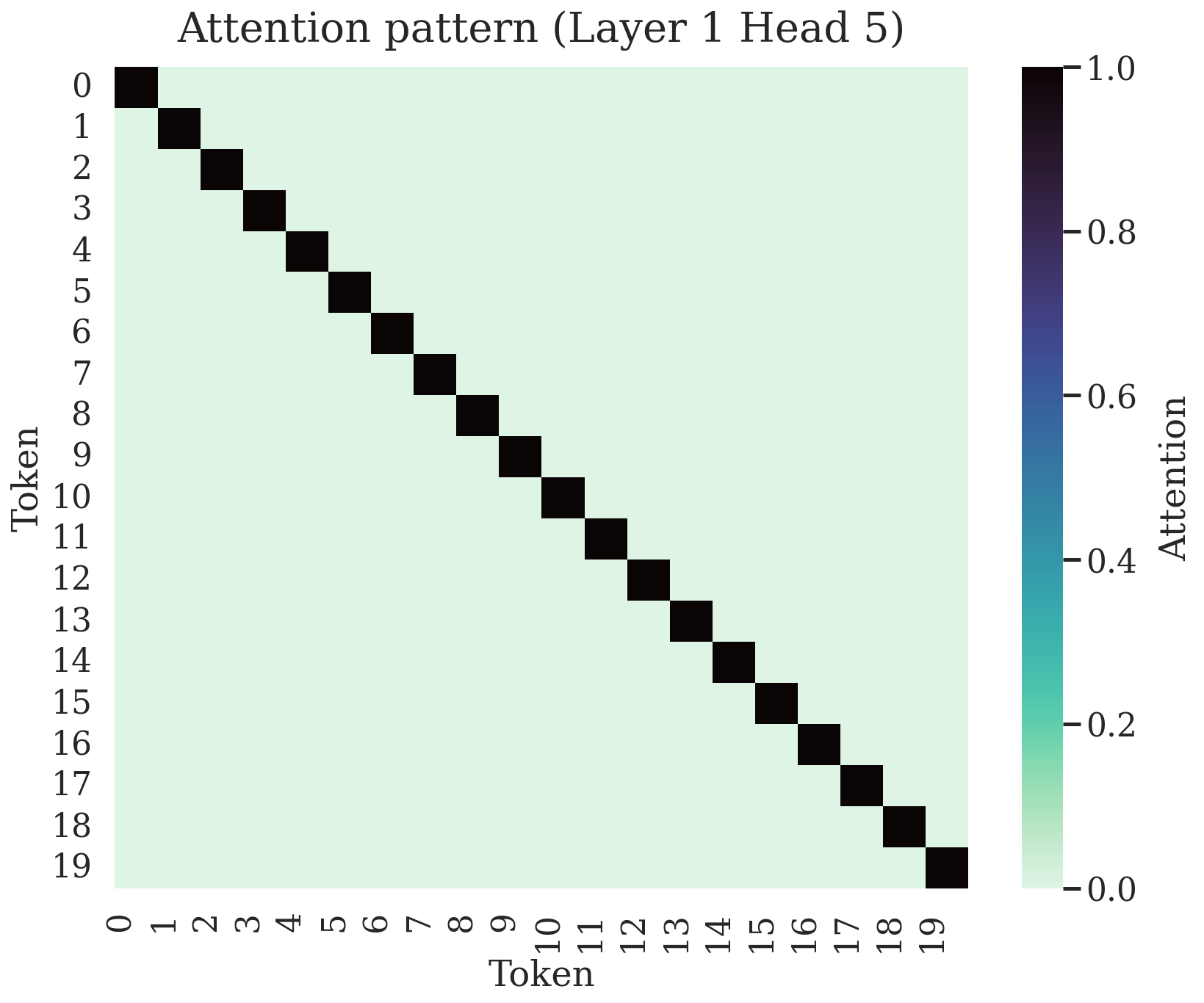} }}%
    \caption{Queries (a), keys (b), and attention pattern (c) for a diagonal head present at the first layer in Gemma 7B. The head implements a `residual connection'. }%
    \label{fig:supplementary-diagonal-first-layer}%
    \vspace{-10pt}
\end{figure}

\begin{figure}%
    \centering
    \subfloat[\centering]{{\includegraphics[width=0.45\textwidth]{figures/gemma-q-layer27-head8.png} }}%
    \qquad
    \subfloat[\centering]{{\includegraphics[width=0.45\textwidth]{figures/gemma-k-layer27-head8.png} }}%
    \qquad
    \subfloat[\centering]{{\includegraphics[width=0.45\textwidth]{figures/gemma-diagonal-lastlayer-attention.png} }}%
    \caption{Queries (a), keys (b), and attention pattern (c) for a diagonal head present at the last layer in Gemma 7B. The head implements a `residual connection'.}%
    \label{fig:supplementary-diagonal-last-layer}%
    \vspace{-10pt}
\end{figure}

\begin{figure}%
    \centering
    \subfloat[\centering]{{\includegraphics[width=0.45\textwidth]{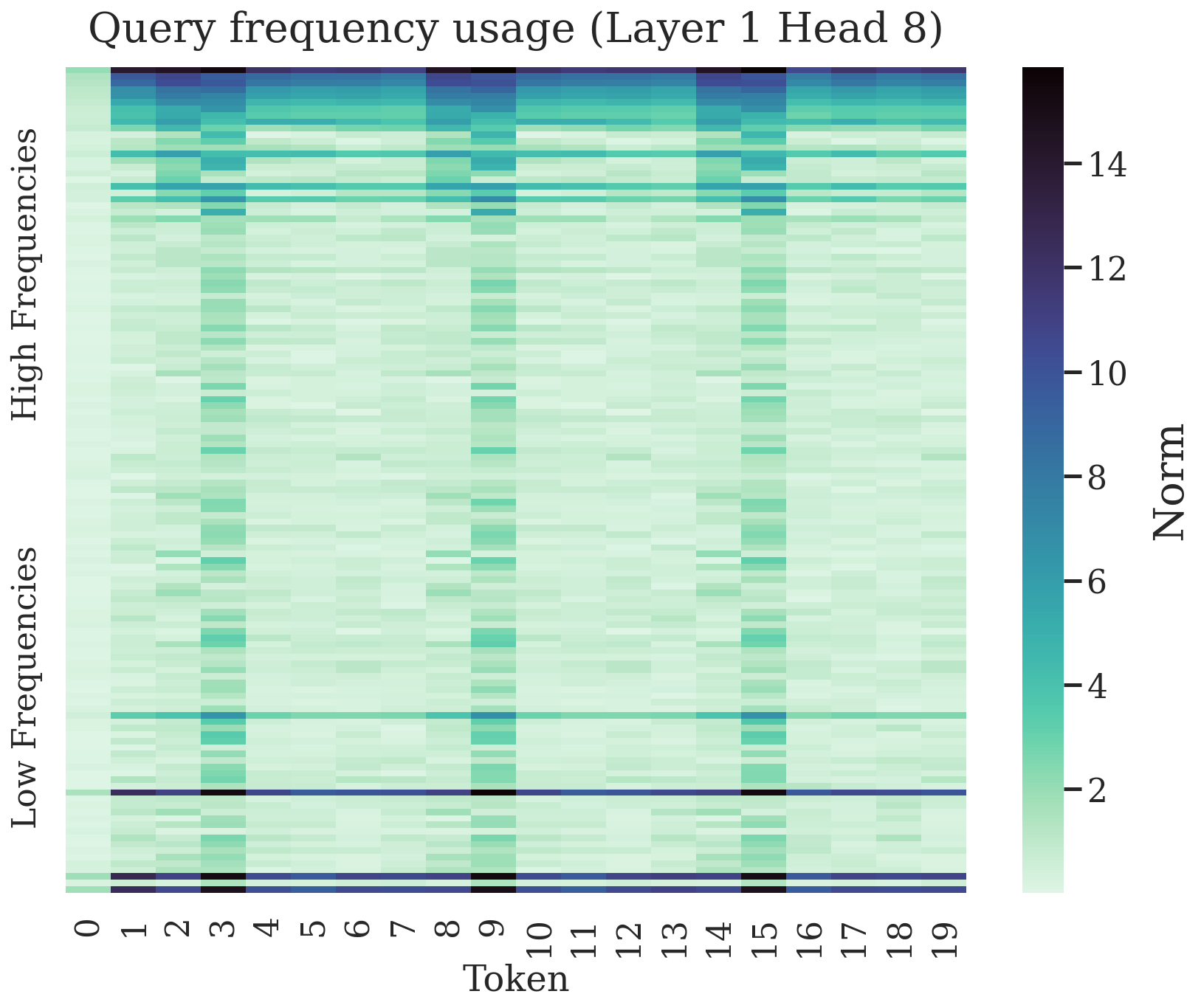} }}%
    \qquad
    \subfloat[\centering]{{\includegraphics[width=0.45\textwidth]{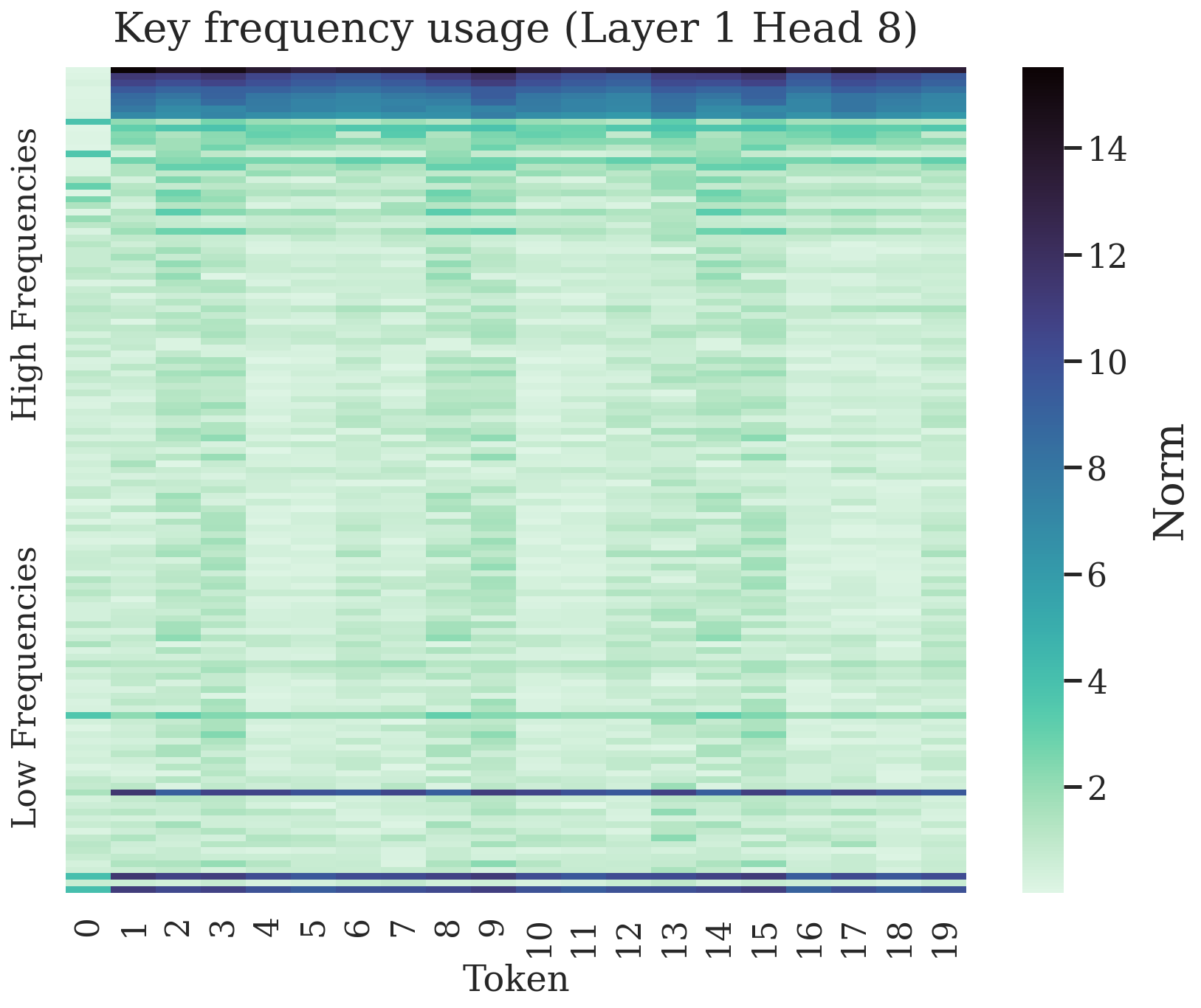} }}%
    \qquad
    \subfloat[\centering]{{\includegraphics[width=0.45\textwidth]{figures/gemma-previous-token-attention.png} }}%
    \caption{Queries (a), keys (b), and attention pattern (c) for the previous-token head in Gemma 7B. The head makes tokens attend to tokens appearing immediately before them.}%
    \label{fig:supplementary-previous-token-head}%
    \vspace{-10pt}
\end{figure}

\begin{figure}%
    \centering
    \subfloat[\centering]{{\includegraphics[width=0.45\textwidth]{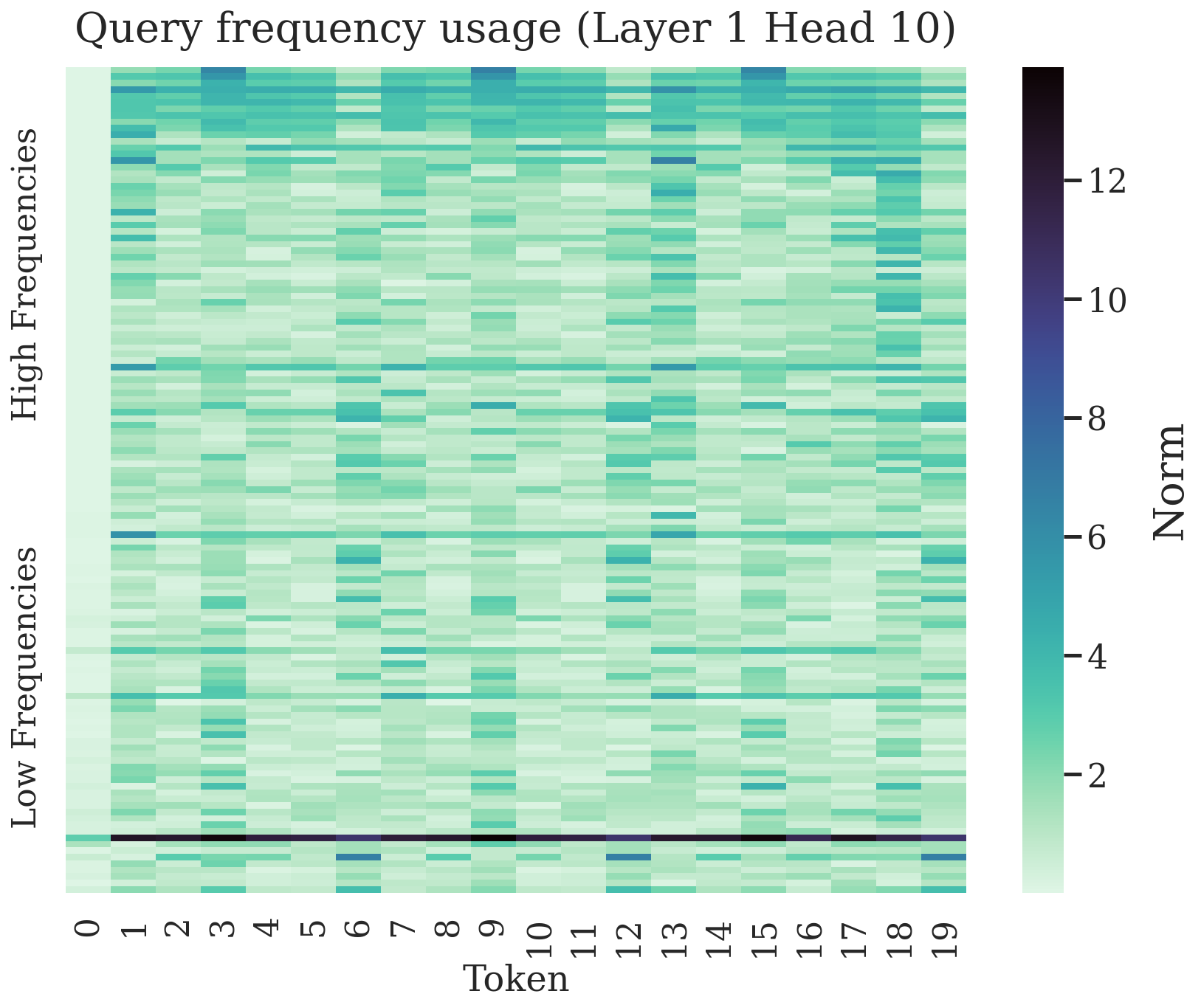} }}%
    \qquad
    \subfloat[\centering]{{\includegraphics[width=0.45\textwidth]{figures/gemma-k-layer1-head10.png} }}%
    \qquad
    \subfloat[\centering]{{\includegraphics[width=0.45\textwidth]{figures/gemma-apostrophe-attention.png} }}%
    \caption{Queries (a), keys (b), and attention pattern (c) for the apostrophe head in Gemma 7B. The attention head learns to make tokens appearing after an apostrophe token attend to the apostrophe.}%
    \label{fig:supplementary-apostrophe-head}%
    \vspace{-10pt}
\end{figure}

\begin{figure}%
    \centering
    \subfloat[\centering]{{\includegraphics[width=0.45\textwidth]{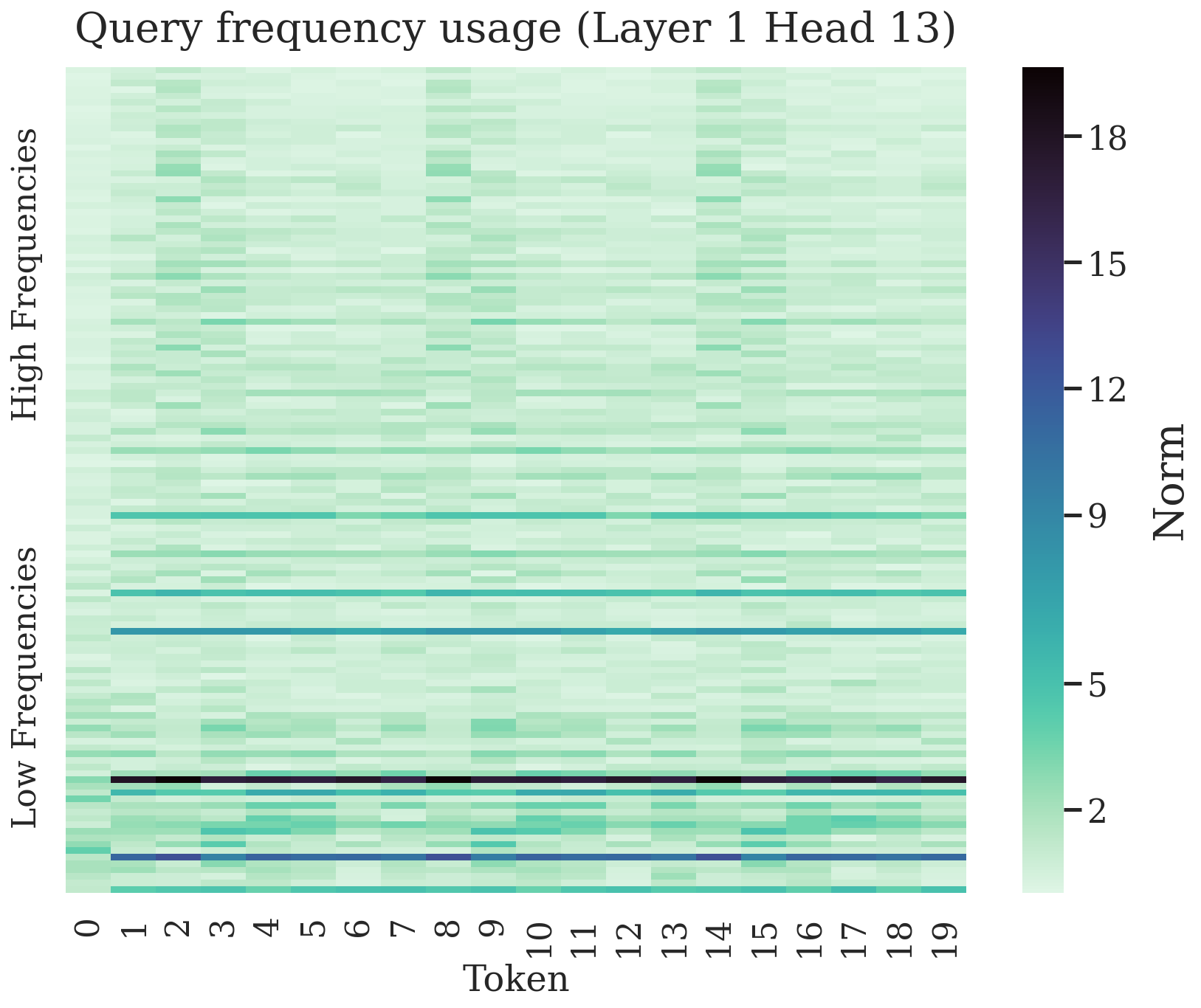} }}%
    \qquad
    \subfloat[\centering]{{\includegraphics[width=0.45\textwidth]{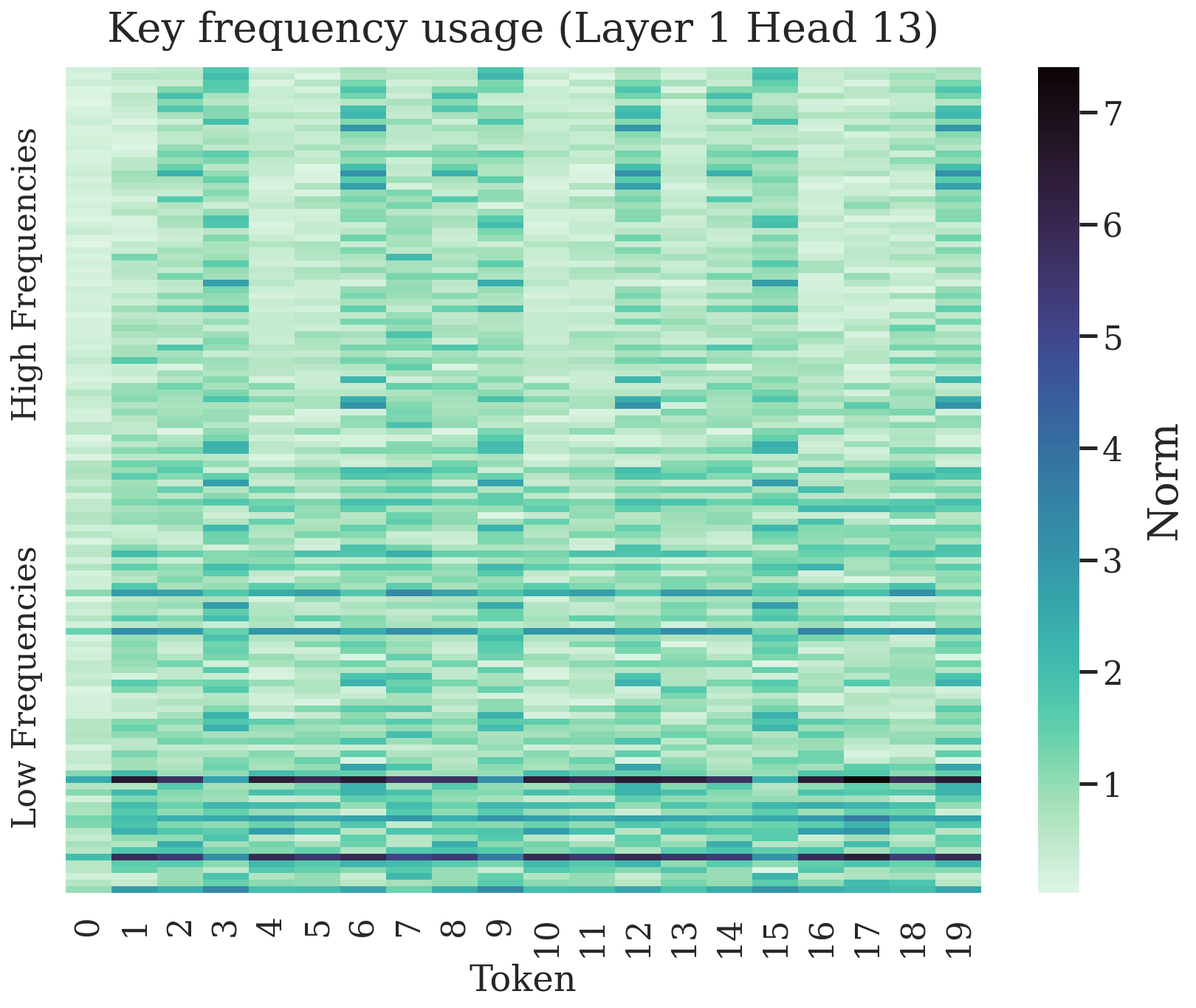} }}%
    \qquad
    \subfloat[\centering]{{\includegraphics[width=0.45\textwidth]{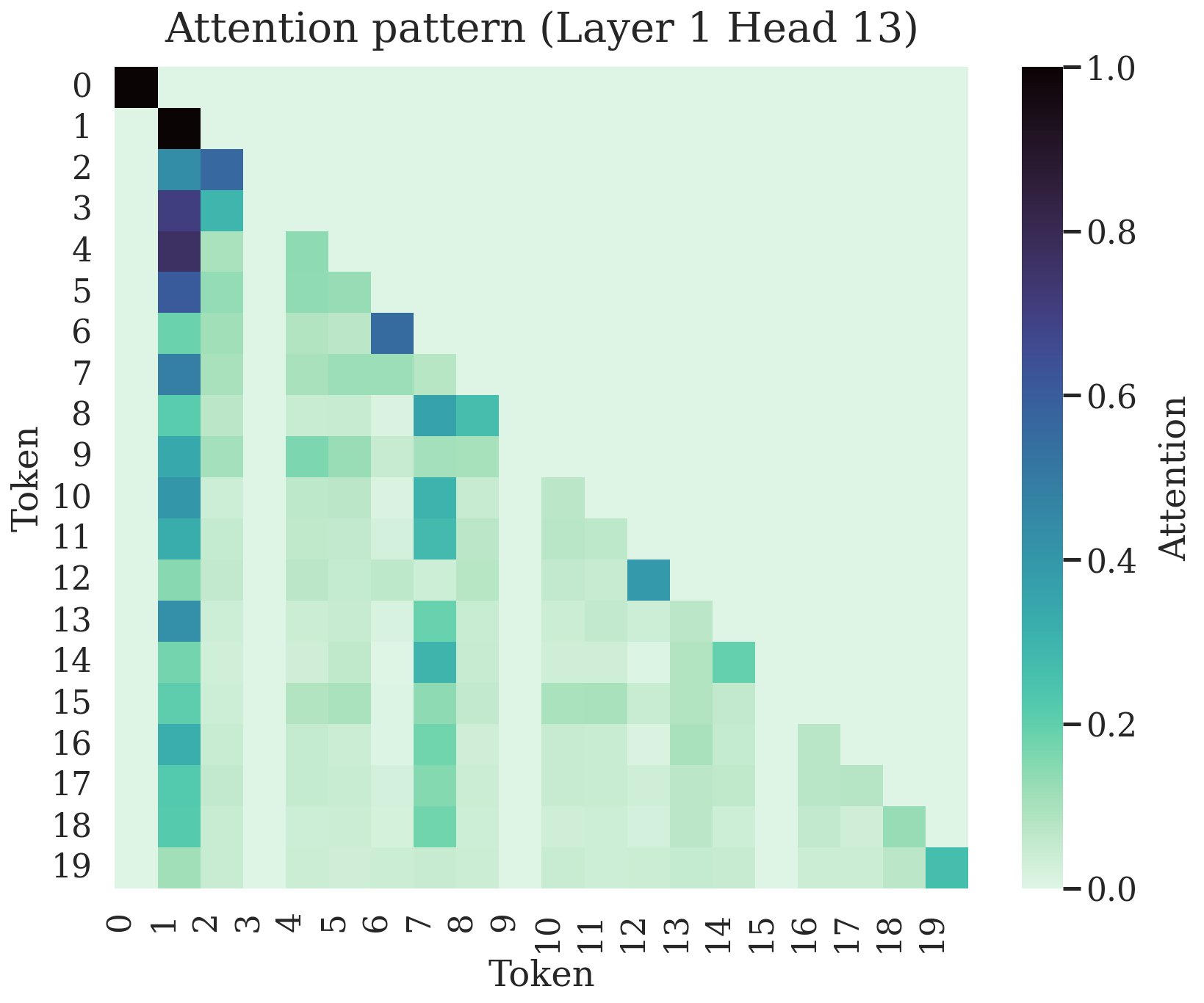} }}%
    \caption{Queries (a), keys (b), and attention pattern (c) for a general `semantic' attention head in Gemma 7B. The queries in particular display high norm bands on the low frequencies.}%
    \label{fig:supplementary-unclear-head-first-layer}%
    \vspace{-10pt}
\end{figure} 

\begin{figure}%
    \centering
    \subfloat[\centering]{{\includegraphics[width=0.45\textwidth]{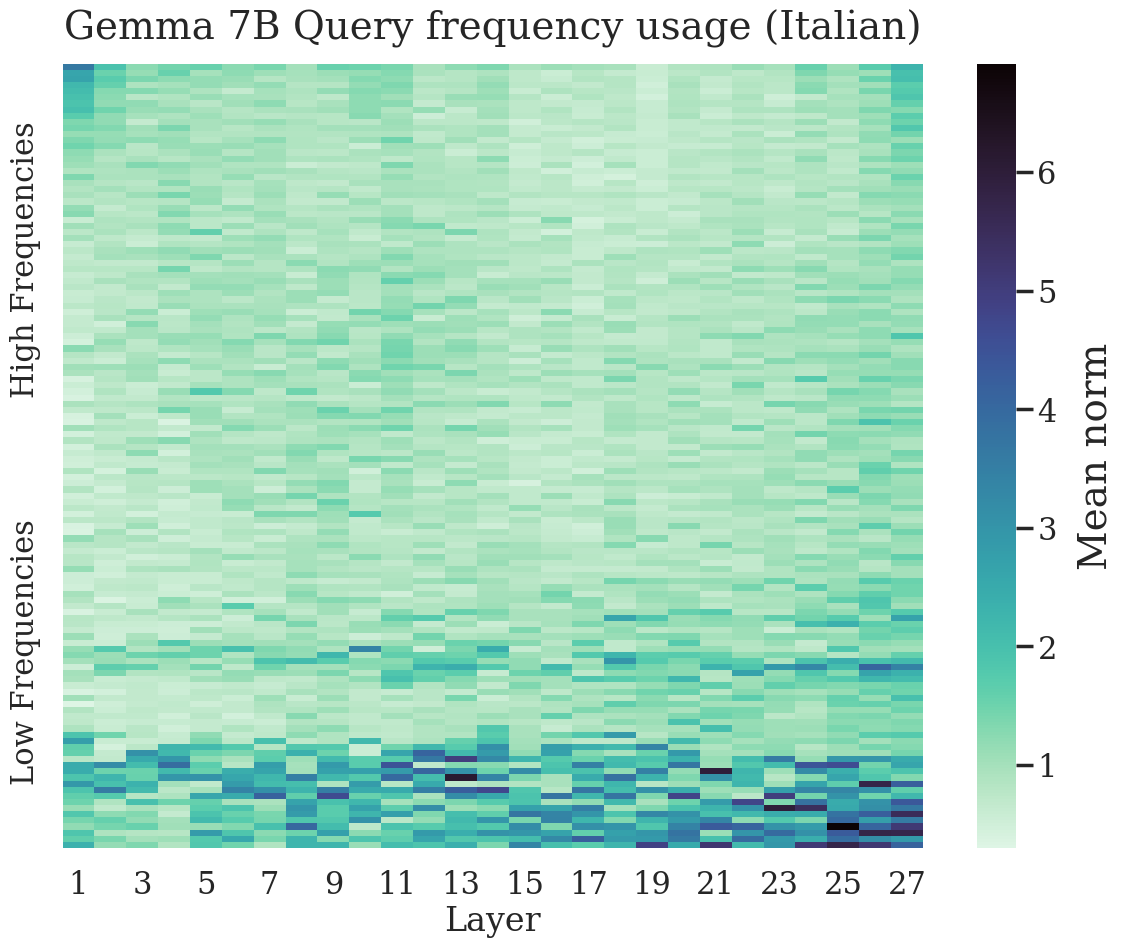} }}%
    \qquad
    \subfloat[\centering]{{\includegraphics[width=0.45\textwidth]{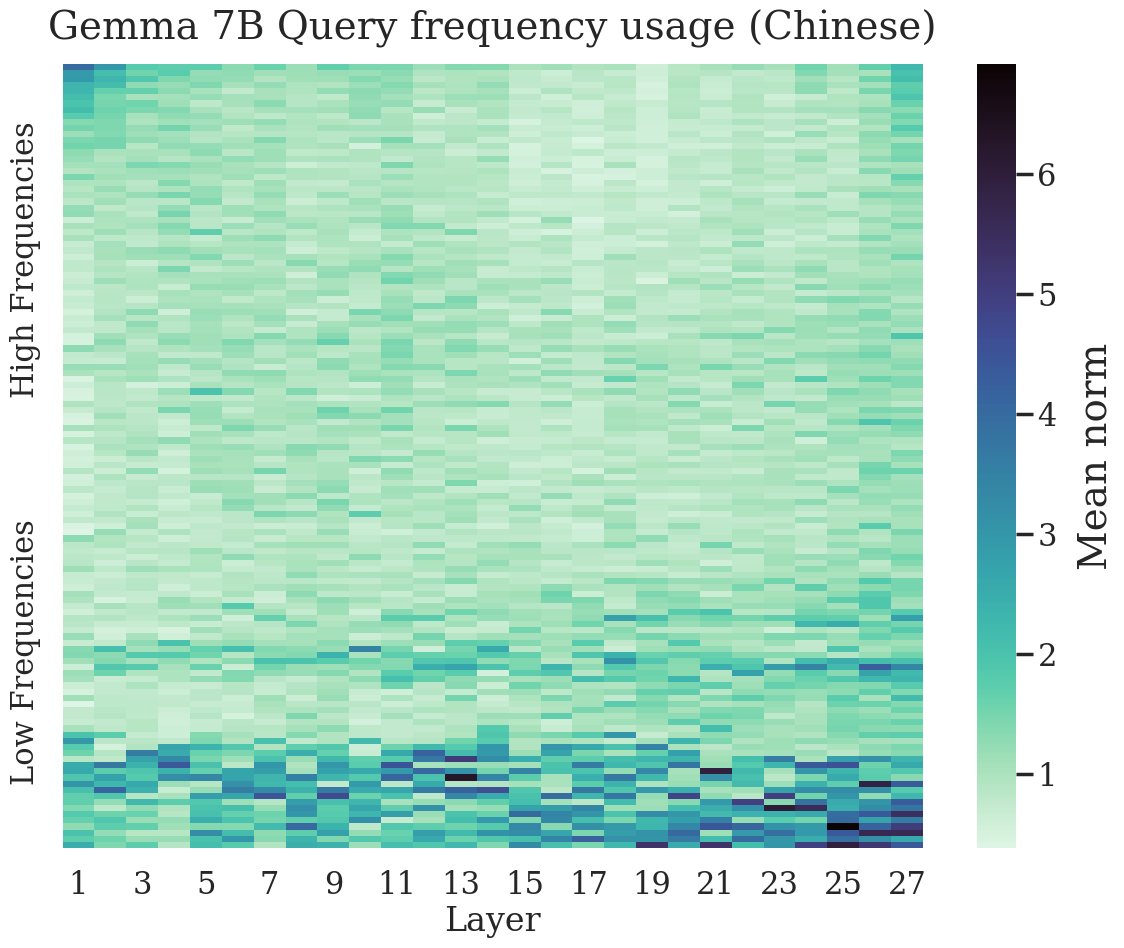} }}%
    \qquad
    \subfloat[\centering]{{\includegraphics[width=0.45\textwidth]{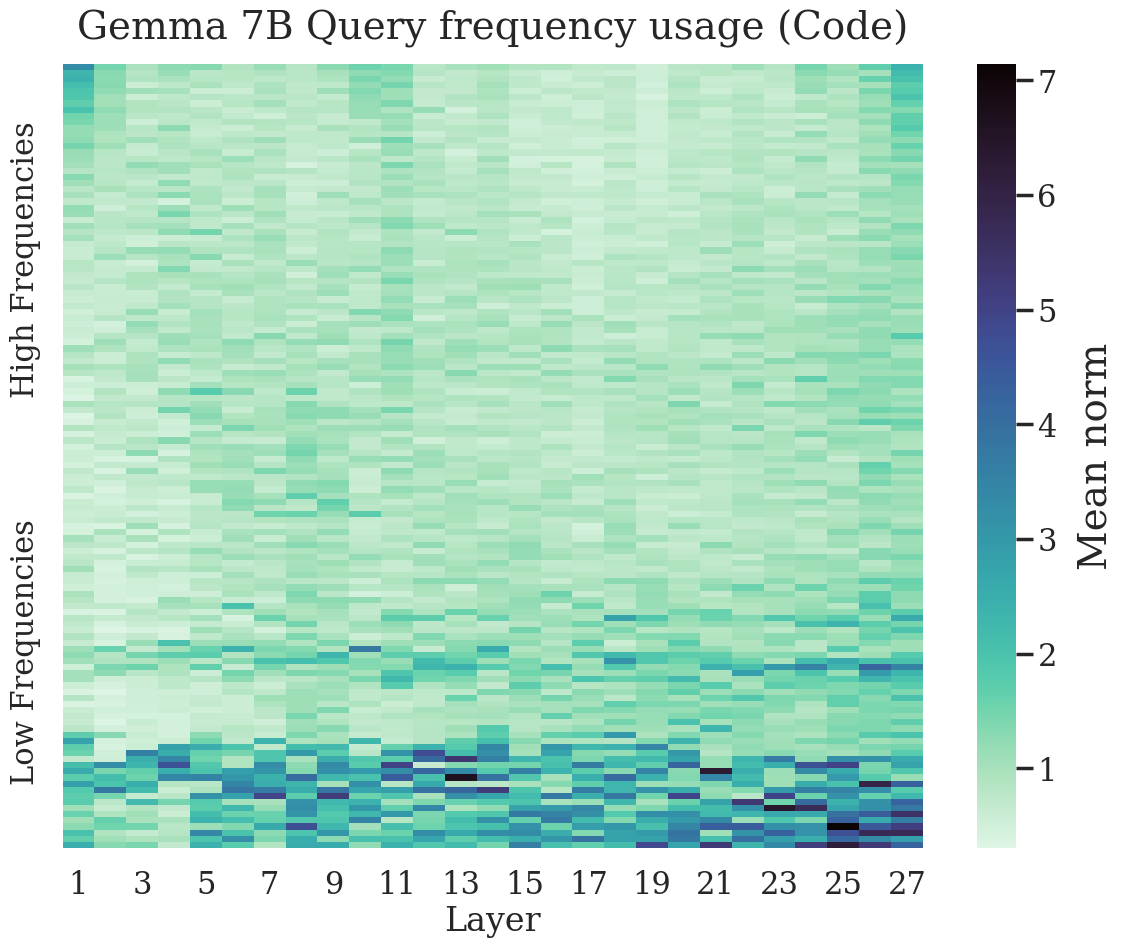} }}%
    \qquad
    \subfloat[\centering]{{\includegraphics[width=0.45\textwidth]{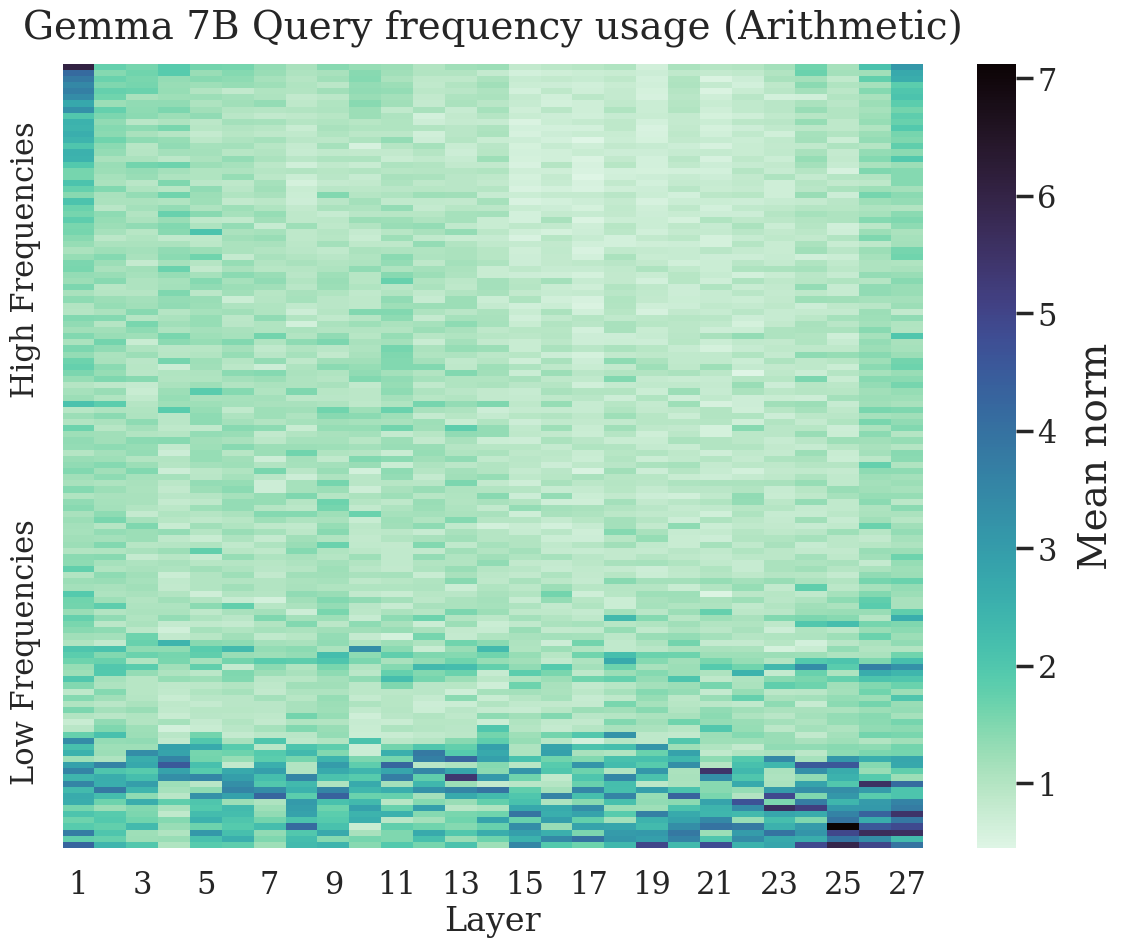} }}%
    \caption{Query frequency usage for Gemma 7B over different input domains. (a) Shows the usage for Italian prompt, (b) for a Chinese prompt, (c) for Python Code, and (c) for Arithmetic additions between two large integers. The frequency usage patterns are very similar across all domains, with very slight differences.}%
    \label{fig:gemma-domains-ablations}%
    \vspace{-10pt}
\end{figure} 

\end{document}